\newtheorem{definition}{Definition}[section]
\newtheorem{lemma}{Lemma}[section]
\newtheorem{corollary}{Corollary}[section]
\newtheorem{theorem}{Theorem}[section]
\newtheorem{proposition}{Proposition}[section]
\newtheorem{remark}{Remark}[section]
\newtheorem{example}{Example}[section]
\newcommand{\G}{{\Gamma}}
\newcommand{\GS}{{\mathbf S}}
\newcommand{\bbeta}{{\bar{\beta}}}
\newcommand{\hbeta}{{\hat{\beta}}}
\newcommand{\tbeta}{{\tilde{\beta}}}
\newcommand{\rE}{{\mathbf E}}
\newcommand{\rP}{{\mathbf P}}
\newcommand{\sgn}{{\mathrm{sgn}}}
\newcommand{\tr}{{\mathrm{tr}}}
\newcommand{\spec}{{\mathrm{sp}}}
\newcommand{\supp}{{\mathrm{supp}}}
\newcommand{\Real}{{\mathbb{R}}}
\newcommand{\cor}{\mathrm{cor}}
\newcommand{\gw}{{\mathrm{width}}}
\newcommand{\BlackBox}{\rule{1.5ex}{1.5ex}}  
\newenvironment{proof}{\par\noindent{\bf Proof\ }}{\hfill\BlackBox\\[2mm]}
\newcommand{\cT}{{\cal T}}
\newcommand{\tcT}{{\tilde{\cal T}}}
\newcommand{\cC}{{\cal C}}
\newcommand{\innerprod}[2]{{\langle {#1} , {#2} \rangle}}
\newcommand{\bel}{\begin{eqnarray}\label}
\newcommand{\eel}{\end{eqnarray}}
\newcommand{\bes}{\begin{eqnarray*}}
\newcommand{\ees}{\end{eqnarray*}}
\newcommand{\obeta}{\beta_*}
\def\lam{\lambda}
\def\scrA{{\mathscr A}}
\def\scrB{{\mathscr B}}
\def\scrAD{{{\mathscr A},D}}
\def\scrBD{{{\mathscr B},D}}
\def\scrM{{\mathscr M}}
\def\calT{{\cal T}}
\def\ker{\hbox{\rm ker}}
\def\argmin{\mathop{\rm arg\, min}}
\def\pa{\partial}
\def\Omegabar{{\bar\Omega}}
\begin{document}

\title{A General Framework of Dual Certificate Analysis for Structured Sparse Recovery Problems}
\date{}
\author{
Cun-Hui Zhang\thanks{Research partially supported by the
NSF Grants DMS 0804626, DMS 0906420, 
and NSA Grant H98230-11-1-0205} \\
Department of Statistics \\
Rutgers University, NJ \\
{\it czhang@stat.rutgers.edu} \\
\and
Tong Zhang\thanks{Research partially supported by the following grants: AFOSR-10097389, NSA
-AMS 081024, NSF DMS-1007527, and NSF IIS-1016061} \\
Department of Statistics\\ 
Rutgers University, NJ\\
{\it tzhang@stat.rutgers.edu}
}

\maketitle

\begin{abstract}
  This paper develops a general theoretical framework to analyze structured sparse recovery problems
  using the notation of {\em dual certificate}.
  Although certain aspects of the dual certificate idea have already been used in some 
  previous work,  due to the lack of a general and coherent theory,
  the analysis has so far only been carried out in limited scopes for specific problems. 
  In this context the current paper makes two contributions.
  First, we introduce a general definition of dual certificate,
  which we then use to develop a unified theory of sparse recovery analysis for convex programming.
  Second, we present a class of structured sparsity regularization called
  {\em structured Lasso} for which 
  calculations can be readily performed under our theoretical framework.
  This new theory includes many seemingly loosely 
  related previous work as special cases; it also implies new results 
  that improve existing ones even for standard formulations such as $\ell_1$ regularization.
\end{abstract}

\section{Introduction}

This paper studies a general form of the sparse recovery problem, where our goal is to estimate
a certain signal $\bbeta_*$ from observations. 
We are especially interested in solving this problem using convex programming; that is,
given a convex set $\Omega$,
our estimator $\hbeta$ is obtained from the following regularized minimization problem:
\begin{equation}
\hbeta= \arg\min_{\beta \in \Omega} \left[ L(\beta) + R(\beta) \right] . \label{eq:hbeta}
\end{equation}
Here $L(\beta)$ is a loss function, which measures how closely $\beta$ matches the observation;
and $R(\beta)$ is a regularizer, which captures the structure of $\bbeta_*$.
Note that the theory developed in this paper does not need to assume that 
$\bbeta_* \in \Omega$ although this is certainly a desirable property (especially if we would like to recover 
$\bbeta_*$ without error).
Our primary interest is in the case where $\Omega$ lives in an Euclidean space $\Omegabar$. 
However, our analysis holds automatically when $\Omega$ is contained in a separable Banach 
space $\Omegabar$, and both $L(\cdot)$ and $R(\cdot)$ are convex functions that are 
defined in the whole space $\Omegabar$, both inside and outside of $\Omega$. 

As an example, assume that $\bbeta_*$ is a 
$p$ dimensional vector: $\bbeta_* \in \Real^p$; we observe a vector 
$y \in \Real^n$ and an $n \times p$ matrix $X$ such that 
\[
y=X \bbeta_* + \text{noise}. 
\]
We are interested in estimating $\bbeta_*$ from the noisy
observation $y$. However, in modern applications we are mainly interested in
the high dimensional situation where $p \gg n$. 
Since there are more variables than the number of observations, traditional statistical methods such as 
least squares regression will suffer from the so-called curse-of-dimensionality problem.
To remedy the problem, it is necessary to impose structures on $\bbeta_*$; and a popular assumption is
sparsity. That is $\|\bbeta_*\|_0=|\supp(\bbeta_*)|$ is smaller than $n$, where
$\supp(\beta) = \{j: \beta_j \neq 0\}$. 
A direct formulation of sparsity constraint leads to the nonconvex $\ell_0$ regularization formulation,
which is difficult to solve. A frequent remedy is to employ the so-called {\em convex relaxation}
approach,
where the $\ell_0$ regularization is replaced by an $\ell_1$ regularizer $R(\beta)=\lambda \|\beta\|_1$
that is convex. If we further consider the least squares loss $L(\beta)=\|y - X\beta\|_2^2$, then
we obtain the following $\ell_1$ regularization method (Lasso)
\begin{equation}
  \hbeta=\arg\min_{\beta \in \Real^p} \left[ \|y-X\beta\|_2^2 + \lambda \|\beta\|_1 \right] , \label{eq:L1}
\end{equation}
where $\Omega$ is chosen to be the whole parameter space $\Omegabar=\Real^p$.

\section{Related Work}

In sparse recovery analysis, we want to know how good is our estimator $\hbeta$ in comparison
to the target $\bbeta_*$. Consider the standard $\ell_1$ regularization method (\ref{eq:L1}),
two types of theoretical questions are of interests. 
The first is support recovery; that is, whether $\supp(\hbeta) = \supp(\bbeta_*)$. 
The second is parameter estimation; that is, how small is $\|\hbeta-\bbeta_*\|_2^2$.
The support recovery problem is often studied under the so-called {\em irrepresentable condition}
(some types also referred more generally as coherence condition) 
\cite{MeinshausenB06,Tropp06,ZhaoYu06,Wainwright09},
while the parameter estimation problem is often studied under the so-called {\em restricted isometry property} (or RIP) as well as its generalizations \cite{CandesTao07,ZhangHuang08,BickelRT09,ZhangT09,vandeGeerB09,YeZ10}.
Related ideas have been extended to more complex structured sparse regularization problems
such as group sparsity \cite{HuangZhang09,LMTG09} and certain matrix problems \cite{KoTsLo10,NegaWain10,KoltchinskiiLT11}. 
Closely related to parameter estimation is the so-called {\em oracle inequality}, which is particularly suitable for the dual-certificate
analysis considered here.

This paper is interested in the second question of parameter estimation, and the related problem of sparse oracle inequality.
Our goal is to present a general theoretical framework
using the notation of dual certificate to analyze sparse regularization problems such as the standard
Lasso (\ref{eq:L1}) as well as its generalization to more complex structured sparsity problems in (\ref{eq:hbeta}).
We note that there were already some recent attempts in developing such a general theory 
such as \cite{NeRaWaYu10} and \cite{ChRePaWi10}, but both have limitations. 
In particular the technique of  \cite{ChRePaWi10} only applies to noise-less regression problems with Gaussian random design
(its main contribution is the nice observation that Gordon's minimum singular value result can be applied to structured sparse recovery problems; the consequences will be further investigated in our paper); results in
\cite{ChRePaWi10} are subsumed by our more general results given in Section~\ref{sec:gordon}.
The analysis in \cite{NeRaWaYu10} relied on a direct generalization of RIP for decomposable regularizers which has technical limitations in its applications to more complex structured problems such as matrix regularization:  
the technique of RIP-like analysis and its generalization such as \cite{KoTsLo10,NegaWain10} gives performance bounds that do not imply exact recovery even when the noise is zero, while the technique we investigate here (via the notation of dual certificate) can get exact recovery \cite{CandesT09,Recht09}. In addition, not all regularizers can be easily considered as decomposable (for example, the mixed norm example in Section~\ref{sec:mixed-norm} is not). Even for Gaussian random design, the complexity statement in 
Section~\ref{sec:gordon} replies only on Gaussian width calculation that is more general than decomposable.
Therefore our analysis in this paper extends those of \cite{NeRaWaYu10} in multiple ways.

While the notation of dual certificate has been successfully employed 
in some earlier work (especially for some matrix regularization problems) such as 
\cite{Recht09,CaLiMaWr11,HsKaTz11-robust}, these results focused on special problems without
a general theory. In fact, from earlier work it is not even clear what should be a 
general definition of dual certificate for structured sparsity formulation (\ref{eq:hbeta}).
This paper addresses this issue. Specifically we will provide a general definition of dual certificate
for the regularized estimation problem (\ref{eq:hbeta}) and demonstrate that this definition 
can be used to develop a theoretical framework to analyze the sparse recovery performance of $\hbeta$
with noise. Not only does it provide a direct generalization of earlier work such as 
\cite{Recht09,CaLiMaWr11,HsKaTz11-robust}, but also it unifies RIP type analysis (or its generalization to restricted strong convexity) such as \cite{CandesTao07,NeRaWaYu10} and irrepresentable (or incoherence) conditions such as \cite{ZhaoYu06,Wainwright09}.
In this regard the general theory also includes as special cases some recent work by Candes and Plan that tried to develop non-RIP analysis for $\ell_1$ regularization \cite{CandesPlan09,CandesPlan11}.  In fact, even for the simple case of $\ell_1$ regularization, we show that our theory can lead to new and sharper results than existing ones. 

Finally, we would like to point out that while this paper successfully unifies the irrepresentable (or incoherence) conditions and RIP conditions under the general method of dual certificate, our analysis does not subsume some of the more elaborated analysis such as \cite{ZhangT09} and \cite{YeZ10} as special case. Those studies employed a different generalization of RIP which we may refer to as the {\em invertibility factor} approach using 
the terminology of  \cite{YeZ10}. It thus remains open whether it is possible to develop an even more general theory that can include all previous sparse recovery analysis as special cases.

\section{Primal-Dual Certificate}

As mentioned before, while fragments of the dual certificate idea has appeared before, 
there are so far no general definition and theory. Therefore in this section we will introduce 
a formal definition that can be used to analyze (\ref{eq:hbeta}). 
Recall that the parameter space $\Omega$ lives in a separable Banach space $\Omegabar$. 
Let $\Omegabar^*$ be the dual Banach space of $\Omegabar$ containing all continuous 
linear functions $u(\beta)$ defined on $\Omegabar$. We use $\innerprod{u}{\beta} = u(\beta)$ 
to denote the bi-linear function defined on $\Omegabar^*\times \Omegabar$. 
If $\Omegabar$ is an Euclidean space, then $\innerprod{\cdot}{\cdot}$ is just an 
inner product. In this notation $\innerprod{\cdot}{\cdot}$, the first argument is always in 
the dual space $\Omegabar^*$ and the second in the primal space $\Omegabar$. 
This allows as to keep track of the geometrical interpretation of our analysis even when 
$\Omegabar$ is an Euclidean or Hilbert space with $\Omegabar^*=\Omegabar$. 
In what follows, we will endow $\Omegabar^*$ with the weak topology: $u_k\to u$ iff 
$\innerprod{u_k-u}{\beta}\to 0$ for all $\beta\in\Omegabar$. This is equivalent to 
$\|u_k-u\|_D\to 0$ for any norm $\|\cdot\|_D$ in $\Omegabar^*$ when $\Omegabar$ is 
an Euclidean space. 

In the following, given any convex function $\phi(\cdot)$, 
we use the notation $\nabla \phi(\beta)\in\Omega^*$ to
denote a subgradient of $\phi(\beta)$ with respect to the geometry of $\Omegabar$ 
in the following sense:
\[
\phi(\beta') \geq \phi(\beta) + \innerprod{\nabla \phi(\beta)}{\beta'-\beta},\ \forall\ \beta'. 
\]
By convention, we also use
$\partial \phi(\beta)$ to denote its sub-differential (or the set of subgradient at $\beta$). 
The sub-differential is always a closed convex set in $\Omegabar^*$. 
Moreover, we define the Bregman divergence with respect to $\phi$ as:
\[
D_\phi(\beta,\beta')=\phi(\beta)-\phi(\beta')- \innerprod{\nabla \phi(\beta')}{\beta-\beta'} .
\]
Clearly, by the definition of sub-gradient, Bregman divergence is non-negative. 
These quantities are standard in convex analysis; for example, 
additional details can be found in \cite{Roc70}.

Instead of working directly with the target $\bbeta_*$, we consider an approximation 
$\bbeta \in \Omega$ of $\bbeta_*$, which may have certain nice properties that will become clear later on. 
Nevertheless, for the purpose of understanding the main idea, it may be convenient to 
simply assume that $\bbeta=\bbeta_*$ (thus $\bbeta_* \in \Omega$) during the first reading.

Given any $\bbeta \in \Omega$ and subset $G \subset \partial R(\bbeta)$, we define a modified regularizer
\[
R_G(\beta) = R(\bbeta) + \sup_{v \in G} \innerprod{v}{\beta-\bbeta} .
\]
It is clear that $R_G(\beta) \leq R(\beta)$ for all $\beta$ and $R(\bbeta)=R_G(\bbeta)$.
The value of $R_G(\beta)$ is unchanged if $G$ is replaced by the closure of its convex hull. 
Moreover, if $G$ is convex and closed, then the sub-differential of $R_G(\beta)$ is 
identical to $G$ at $\bbeta$ and contained in $G$ elsewhere. In fact, by checking 
the condition $R_G(b)-R_G(\beta)\ge \innerprod{v}{b-\beta}$ for $b=t\beta$ and $b=\bbeta$, 
we see that for closed convex $G$ 
\bes
\pa R_G(\beta) = \big\{v\in G:  R_G(\beta)=\innerprod{v}{\beta} = 
R(\bbeta)+\innerprod{v}{\beta-\bbeta} \big\}. 
\ees 
In what follows, we pick a closed convex $G$ unless otherwise stated. 

In optimization, $\beta$ is generally referred to as primal variable and $\nabla L(\beta)$ 
as the corresponding dual variable, 
since they live in $\Omegabar$ and $\Omegabar^*$ respectively. 
An optimal solution $\hbeta$ of (\ref{eq:hbeta}) satisfies the KKT condition when its dual
satisfies the relationship $-\nabla L(\hbeta) \in \partial R(\hbeta)$. However, for the general formulation
(\ref{eq:hbeta}), this condition can be rather hard
to work with.  Therefore in order to analyze (\ref{eq:hbeta}), we introduce the notion of 
{\em primal-dual certificate},
which is a primal variable $Q_G$ satisfying a simplified
dual constraint $-\nabla L(Q_G) \in \partial R(\bbeta)$. To be consistent with some 
earlier literature, one may 
refer to the quantity $-\nabla L(Q_G)$ as the corresponding dual certificate. 
For notational simplicity, without causing confusion, in this paper we will also refer to 
$Q_G$ as a dual certificate.

\subsection{Primal Dual Certificate Sparse Recovery Bound}

The formal definition of dual certificate is given in Definition~\ref{def:primal-dual-certificate}.
In this definition, we also allow approximate dual certificate which may have a small violation of
the dual constraint; such an approximation can be convenient
for some applications.
\begin{definition}[Primal-Dual Certificate]
  \label{def:primal-dual-certificate}
  Given any $\bbeta \in \Omega$ and a closed convex subset $G \subset \partial R(\bbeta)$.
  A $\delta$-approximate primal-dual (or simply dual) certificate $Q_G$ (with respect to $G$) of (\ref{eq:hbeta}) is a primal variable that satisfies the following condition:
  \begin{equation}
    - \nabla L(Q_G) + \delta \in G . \label{eq:dual-certificate}
  \end{equation}
If $\delta=0$, we call $Q_G$ an exact primal-dual certificate or simply a dual certificate.
\end{definition}
We may choose a convex function $\bar{L}(\beta)$ that
is close to $L(\beta)$ 
and use it to construct an approximate dual certificate with
\begin{equation}
  Q_G = \argmin_\beta\big\{\bar{L}(\beta)+R_G(\beta)\big\}. 
  \label{eq:dual-certificate-simple}
\end{equation}
Since $- \nabla \bar{L}(Q_G) \in \pa R_G(Q_G)\subseteq G$, 
(\ref{eq:dual-certificate}) holds for $\delta = \nabla\bar{L}(Q_G)-\nabla L(Q_G)$. 
However, this choice may not always lead to the best result in the analysis of 
the estimator (\ref{eq:hbeta}), especially when 
$- \nabla L(Q_G) + \delta = - \nabla \bar{L}(Q_G)$ is an interior 
point of $G$. Possible choices of $\bar{L}(\beta)$ include $\gamma L(\beta)$ with 
a constant $\gamma$, its expectation, and their approximations. 
Note that we do not assume that $Q_G \in \Omega$. In order to approximately enforce such a constraint, we may
replace $L(\beta)$ by $L(\beta) + L_\Delta(\beta)$ for any convex function $L_\Delta(\beta) \geq 0$ such that  
$L_\Delta(\beta)=0$ when $\beta \in \Omega$. If $L_\Delta(\beta)$ is sufficiently large, then we can construct a
$Q_G$ that is approximately contained in $\Omega$. 
More detailed dual certificate construction techniques are discussed in Section~\ref{sec:construction}.

An essential result that relates a primal-dual certificate $Q_G$ to $\hbeta$ is stated in the following fundamental
theorem, which says that if $Q_G$ is close to $\bbeta$, then $\hbeta$ is close to $\bbeta$ (when $\delta=0$).
In order to apply this theorem, we shall choose $\bbeta \approx \bbeta_*$.
\begin{theorem}[Primal-Dual Certificate Sparse Recovery Bound]
  Given an approximate primal-dual certificate $Q_G$ in Definition~\ref{def:primal-dual-certificate}, 
  we have the following inequality:
  \[
  D_L(\bbeta,\hbeta) + D_L(\hbeta,Q_G) + [ R(\hbeta)- R_G(\hbeta)] \leq D_L(\bbeta,Q_G) - \innerprod{\delta}{\hbeta-\bbeta} .
  \]
  \label{thm:dual_certificate-recovery}
\end{theorem}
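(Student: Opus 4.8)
The plan is to reduce the claimed inequality to a statement purely about the regularizer, by first absorbing all the loss terms into an algebraic identity for Bregman divergences. Expanding the three divergences $D_L(\bbeta,\hbeta)$, $D_L(\hbeta,Q_G)$, and $D_L(\bbeta,Q_G)$ directly from the definition $D_\phi(\beta,\beta')=\phi(\beta)-\phi(\beta')-\innerprod{\nabla\phi(\beta')}{\beta-\beta'}$, the values of $L$ cancel and one is left with
\[
D_L(\bbeta,Q_G)-D_L(\bbeta,\hbeta)-D_L(\hbeta,Q_G)=\innerprod{\nabla L(Q_G)-\nabla L(\hbeta)}{\hbeta-\bbeta},
\]
where $\nabla L(Q_G)$ is the subgradient appearing in the certificate condition (\ref{eq:dual-certificate}) and $\nabla L(\hbeta)$ is the one furnished by the stationarity condition of (\ref{eq:hbeta}). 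Hence the theorem is equivalent to
\[
R(\hbeta)-R_G(\hbeta)\le \innerprod{\nabla L(Q_G)-\delta-\nabla L(\hbeta)}{\hbeta-\bbeta}=-\innerprod{u+\nabla L(\hbeta)}{\hbeta-\bbeta},
\]
writing $u:=-\nabla L(Q_G)+\delta\in G$.

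Second, I would bound this right-hand side from below in two independent steps. Since $u\in G$, the definition of $R_G$ gives $\innerprod{u}{\hbeta-\bbeta}\le\sup_{v\in G}\innerprod{v}{\hbeta-\bbeta}=R_G(\hbeta)-R(\bbeta)$, hence $-\innerprod{u}{\hbeta-\bbeta}\ge R(\bbeta)-R_G(\hbeta)$. For the term $-\innerprod{\nabla L(\hbeta)}{\hbeta-\bbeta}$ I would invoke first-order optimality of $\hbeta$ over the convex set $\Omega$: there exist subgradients $\nabla L(\hbeta)\in\pa L(\hbeta)$ and $h\in\pa R(\hbeta)$ with $\innerprod{\nabla L(\hbeta)+h}{\beta-\hbeta}\ge0$ for all $\beta\in\Omega$; taking $\beta=\bbeta\in\Omega$ and using $\innerprod{h}{\hbeta-\bbeta}\ge R(\hbeta)-R(\bbeta)$ (convexity of $R$) yields $-\innerprod{\nabla L(\hbeta)}{\hbeta-\bbeta}\ge R(\hbeta)-R(\bbeta)$. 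Adding the two bounds gives $-\innerprod{u+\nabla L(\hbeta)}{\hbeta-\bbeta}\ge\bigl(R(\bbeta)-R_G(\hbeta)\bigr)+\bigl(R(\hbeta)-R(\bbeta)\bigr)=R(\hbeta)-R_G(\hbeta)$, which is exactly what is needed; feeding this back through the Bregman identity recovers the stated inequality.

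I expect the only genuine subtlety to be the bookkeeping of subgradients and of the optimality condition in the (possibly infinite-dimensional, separable Banach) setting. Because $L$ and $R$ are convex and finite on all of $\Omegabar$, the subdifferential calculus and the normal-cone characterization of the minimizer over the convex set $\Omega$ apply without extra hypotheses, so the stationarity condition used above is legitimate; one must only be consistent in using throughout the Bregman divergences the same $\nabla L(\hbeta)$ it produces, and likewise the same $\nabla L(Q_G)$ fixed in Definition~\ref{def:primal-dual-certificate}. It is worth noting that the argument never uses optimality of $Q_G$ nor membership $Q_G\in\Omega$: the certificate enters only through the purely algebraic Bregman identity (which needs just convexity of $L$ to keep all divergences well defined and nonnegative) and through $u\in G$.
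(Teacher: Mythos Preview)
Your proof is correct and is essentially the same as the paper's. The paper packages the two steps as separate propositions (the three-point Bregman identity and an optimality inequality at $\hbeta$ giving $\innerprod{-v-\nabla L(\hbeta)}{\bbeta-\hbeta}\le R_G(\hbeta)-R(\hbeta)$ for any $v\in G$), then combines them exactly as you do; your splitting of the second inequality into the $u$-part and the $\nabla L(\hbeta)$-part is just a trivially different bookkeeping of the same two ingredients.
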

The proof is a simple application of the following two propositions.

\begin{proposition}
\label{prop:bregman}
For any convex function $L(\cdot)$, the following identity holds for Bregman divergence:
\[
D_L(a,b) + D_L(b,c) - D_L(a,c)= \innerprod{\nabla L(c) - \nabla L(b)}{a-b} .
\]
\end{proposition}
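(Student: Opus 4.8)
The plan is to prove the identity by pure algebra: substitute the definition of the Bregman divergence into each of the three terms and observe that the function values of $L$ cancel entirely, leaving only the bilinear pairing terms, which then combine via linearity of $\innerprod{\cdot}{\cdot}$. Concretely, I would first write out
\[
D_L(a,b) = L(a) - L(b) - \innerprod{\nabla L(b)}{a-b}, \qquad
D_L(b,c) = L(b) - L(c) - \innerprod{\nabla L(c)}{b-c},
\]
\[
D_L(a,c) = L(a) - L(c) - \innerprod{\nabla L(c)}{a-c},
\]
and form $D_L(a,b) + D_L(b,c) - D_L(a,c)$.

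The first step is to check the scalar bookkeeping: across the sum, each of $L(a)$, $L(b)$, $L(c)$ occurs once with a plus sign and once with a minus sign, so all three cancel. What remains is
\[
-\innerprod{\nabla L(b)}{a-b} - \innerprod{\nabla L(c)}{b-c} + \innerprod{\nabla L(c)}{a-c}.
\]
The second step is to combine the two terms carrying $\nabla L(c)$ using bilinearity of $\innerprod{\cdot}{\cdot}$: since $(a-c) - (b-c) = a-b$, those two terms collapse to $\innerprod{\nabla L(c)}{a-b}$, and combining with the remaining term yields $\innerprod{\nabla L(c) - \nabla L(b)}{a-b}$, which is exactly the claimed right-hand side.

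The only point that deserves a word of care is that $\nabla L(\cdot)$ denotes a possibly non-unique subgradient rather than a genuine gradient; the identity is valid provided the same representative is used consistently, i.e.\ the same $\nabla L(c)$ appears in $D_L(b,c)$ and $D_L(a,c)$ (and likewise a fixed $\nabla L(b)$), which is the standing convention in the paper. There is essentially no obstacle here: the argument uses neither convexity nor differentiability of $L$, only that $D_L(\cdot,\cdot)$ is defined through a fixed linear functional at each base point, so the ``hard part'' is merely tracking signs correctly in the cancellation.
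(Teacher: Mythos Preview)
Your proposal is correct and matches the paper's own proof essentially line for line: both expand the three Bregman divergences from the definition, observe that the $L(a),L(b),L(c)$ terms cancel pairwise, and then combine the remaining pairing terms via bilinearity to obtain $\innerprod{\nabla L(c)-\nabla L(b)}{a-b}$. Your extra remark about fixing a consistent subgradient representative is a reasonable clarification but not something the paper spells out.
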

\begin{proof} 
This can be easily verified using simple algebra. We can expand the left hand side as follows.
\begin{align*}
 & D_L(a,b) + D_L(b,c) - D_L(a,c) \\
=& \left[ L(a) - L(b) - \innerprod{\nabla L(b)}{a-b} \right] 
 + \left[ L(b) - L(c) - \innerprod{\nabla L(c)}{b-c} \right] 
- \left[ L(a) - L(c) - \innerprod{\nabla L(c)}{a-c} \right] \\
=& - \innerprod{\nabla L(b)}{a-b} 
- \innerprod{\nabla L(c)}{b-c} + \innerprod{\nabla L(c)}{a-c} . 
\end{align*}
This can be simplified to obtain the right hand side.
\end{proof}
\begin{proposition}
\label{prop:subgrad}
Let $\tbeta={t} \hbeta + (1-{t}) \bbeta$ for some ${t} \in [0,1]$.
Then, given any $v \in G$, we have
\[
\innerprod{- v - \nabla L(\tbeta)}{\bbeta - \tbeta} \leq R_G(\tbeta) - R(\tbeta) .
\]
\end{proposition}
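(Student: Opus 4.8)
The plan is to reduce the whole statement to the single scalar estimate
\[
R(\tbeta) - R(\bbeta) \;\le\; \innerprod{\nabla L(\tbeta)}{\bbeta - \tbeta},
\]
which I will call $(\star)$. Granting $(\star)$, the proposition is one line: since $v \in G$, the definition of $R_G$ gives $R_G(\tbeta) \ge R(\bbeta) + \innerprod{v}{\tbeta-\bbeta}$, so
\[
R_G(\tbeta) - R(\tbeta) \;\ge\; [R(\bbeta) + \innerprod{v}{\tbeta-\bbeta}] - [R(\bbeta) + \innerprod{\nabla L(\tbeta)}{\bbeta-\tbeta}] \;=\; \innerprod{-v-\nabla L(\tbeta)}{\bbeta-\tbeta},
\]
where the first inequality bounds $R(\tbeta)$ from above by $(\star)$ and the last equality is bilinearity. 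So all the content is in $(\star)$.

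To prove $(\star)$ I would first dispose of the endpoints: $t=0$ gives $\tbeta=\bbeta$, so $(\star)$ reads $0\le 0$; and $t=1$ gives $\tbeta=\hbeta$, so $(\star)$ is exactly the first-order optimality condition $-\nabla L(\hbeta)\in\partial R(\hbeta)$ applied to the increment $\bbeta-\hbeta$ (equivalently, if $\nabla L$ is continuous, the limit of the $t<1$ case). For $t\in(0,1)$: since $\Omega$ is convex and $\bbeta,\hbeta\in\Omega$, also $\tbeta\in\Omega$, so optimality of $\hbeta$ in (\ref{eq:hbeta}) yields the plain value comparison $L(\hbeta)+R(\hbeta)\le L(\tbeta)+R(\tbeta)$. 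Combining this with the subgradient inequality $L(\hbeta)\ge L(\tbeta)+\innerprod{\nabla L(\tbeta)}{\hbeta-\tbeta}$ and the collinearity identity $\hbeta-\tbeta=-(1-t)(\bbeta-\hbeta)$ gives
\[
R(\hbeta)-R(\tbeta)\;\le\;(1-t)\,\innerprod{\nabla L(\tbeta)}{\bbeta-\hbeta}.
\]
On the other hand, convexity of $R$ along the segment, $R(\tbeta)\le tR(\hbeta)+(1-t)R(\bbeta)$, rearranges to $R(\hbeta)-R(\tbeta)\ge(1-t)(R(\hbeta)-R(\bbeta))$; chaining with the display and cancelling the factor $1-t>0$ gives $R(\hbeta)-R(\bbeta)\le\innerprod{\nabla L(\tbeta)}{\bbeta-\hbeta}$. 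The same convexity also gives $R(\tbeta)-R(\bbeta)\le t(R(\hbeta)-R(\bbeta))$, and multiplying the previous inequality by $t$ and using $t(\bbeta-\hbeta)=\bbeta-\tbeta$ delivers $(\star)$.

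The step I expect to carry the real weight is noticing that $(\star)$ — although it involves the subgradient of $L$ at the \emph{intermediate} point $\tbeta$, not at $\hbeta$ — requires only the crude value comparison $L(\hbeta)+R(\hbeta)\le L(\tbeta)+R(\tbeta)$ rather than the full KKT relation at $\hbeta$: because $\bbeta,\tbeta,\hbeta$ are collinear, the single vector $\nabla L(\tbeta)$ simultaneously controls every directional quantity in play, and convexity of $R$ on the segment supplies the remaining slack. The one genuine exception is $t=1$, where the value comparison degenerates to a tautology and one must fall back on the actual first-order optimality of $\hbeta$; this is harmless since $\hbeta$ solves (\ref{eq:hbeta}) and hence does satisfy $-\nabla L(\hbeta)\in\partial R(\hbeta)$.
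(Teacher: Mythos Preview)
Your proof is correct; both you and the paper reduce to the same scalar estimate $(\star)$ and finish identically via the definition of $R_G$. The difference lies in how $(\star)$ is obtained. The paper argues in one stroke: since $L+R$ is convex and minimized over $\Omega$ at $\hbeta$, and the segment $[\bbeta,\hbeta]\subset\Omega$, the point $\tbeta$ (lying between $\hbeta$ and $\bbeta$) must minimize $L+R$ on the sub-segment $[\tbeta,\bbeta]$; the first-order condition at $\tbeta$ in the direction $\bbeta-\tbeta$ then reads $\innerprod{\nabla L(\tbeta)+\nabla R(\tbeta)}{\bbeta-\tbeta}\ge 0$, and the subgradient inequality for $R$ gives $(\star)$ immediately, uniformly in $t\in[0,1]$. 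Your route---value comparison $(L+R)(\hbeta)\le(L+R)(\tbeta)$, subgradient of $L$ at $\tbeta$, two uses of convexity of $R$ along the segment, and a rescaling from $\hbeta$ back to $\tbeta$---arrives at the same place through more steps and forces a separate treatment of the endpoint $t=1$. One small imprecision there: you invoke ``$-\nabla L(\hbeta)\in\partial R(\hbeta)$'', which is the \emph{unconstrained} KKT condition; since $\hbeta$ only minimizes over $\Omega$ this need not hold as stated, but the \emph{directional} version toward $\bbeta\in\Omega$ does, and that is all $(\star)$ requires---your parenthetical limit remark already patches this. The paper's observation that $\tbeta$ is itself a constrained minimizer on $[\tbeta,\bbeta]$ is what lets it avoid both the case split and the rescaling.
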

\begin{proof}
The definition of $\hbeta$ and the convexity of (\ref{eq:hbeta})
imply that $\tbeta$ achieves the minimum objective
value $L(\beta)+R(\beta)$ for $\beta$ that lies in the line segment between
$\tbeta$ and $\bbeta$. This is equivalent to
$\innerprod{\nabla L(\tbeta) + \nabla R(\tbeta)}{\bbeta-\tbeta}\geq 0$.
Since $R(\cdot)$ is convex, this implies
$\innerprod{\nabla L(\tbeta)}{\bbeta-\tbeta} + R(\bbeta) \ge R(\tbeta)$. 
Thus, 
\bes
\innerprod{-v-\nabla L(\tbeta)}{\bbeta-\tbeta}\le \innerprod{v}{\tbeta-\bbeta}+R(\bbeta)-R(\tbeta)
\le R_G(\tbeta)-R(\tbeta)
\ees
by the definition of $R_G(\beta)$. 
\end{proof}

\begin{proof}{\bf of Theorem~\ref{thm:dual_certificate-recovery}}.
We apply Proposition~\ref{prop:bregman} with $a=\bbeta$, $b=\hbeta$, and $c=Q_G$ to obtain:
\[
 D_L(\bbeta,\hbeta) + D_L(\hbeta,Q_G) - D_L(\bbeta,Q_G) = \innerprod{\nabla L(Q_G)  - \nabla L(\hbeta)}{\bbeta-\hbeta} 
=\innerprod{-v+\delta  - \nabla L(\hbeta)}{\bbeta-\hbeta} ,
\]
where $v \in G$. We can now apply Proposition~\ref{prop:subgrad} with $t=1$ to obtain the desired bound.
\end{proof}

The results shows that if we have a good bound on $D_L(\bbeta,Q_G)$, then it is possible to 
obtain a bound on $D_L(\bbeta,\hbeta)$. In general, we
also choose $G$ so that the difference $R(\hbeta)-R_G(\hbeta)$ can effectively
control the magnitude of $\hbeta$ outside of the support (or a tangent space) of $\bbeta$.

\subsection{Primal Dual Certificate Sparse Oracle Inequality}

It is also possible to derive a stronger form of oracle inequality for special $L$ with a more refined definition
of dual certificate.
\begin{definition}[Generalized Primal-Dual Certificate]
  \label{def:primal-dual-certificate-2}
  Given $\bbeta \in \Omega$,
a closed convex set $G \subset \partial R(\bbeta)$, a convex function $\bar{L}$ on $\Omegabar$, and an additional parameter $\obeta \in \Omegabar$.
  A generalized $\delta$-approximate primal-dual (or simply dual) certificate $Q_G$ with respect to 
  $(L,\bar{L},\bbeta,\obeta)$ is a primal variable that satisfies the following condition:
  \begin{equation}
    - \nabla \bar{L}_*(Q_G) +\delta \in G , \label{eq:dual-certificate-2}
  \end{equation}
  where $\bar{L}_*(\beta)= \bar{L}(\beta) - \innerprod{\nabla \bar{L}(\bbeta)-\nabla L(\obeta)}{\beta-\bbeta}$.
\end{definition}

Note that if $\innerprod{\cdot}{\cdot}$ is an inner product and 
$L$ is a quadratic function of the form
\begin{equation}
L(\beta) = \innerprod{H\beta - z}{\beta}
\label{eq:quadratic-loss}
\end{equation}
for some self-adjoint operator $H$ and vector $z$, then $D_L(\beta,\beta')=\innerprod{H (\beta - \beta')}{\beta-\beta'}$.
In this case, we may simply take $\bar{L}(\cdot)= L(\cdot)$. 
For other cost functions, it will be useful to take $\bar{L}(\cdot)=\gamma L(\cdot)$ with $\gamma <1$.
The reason will become clear later on.

Definition~\ref{def:primal-dual-certificate-2} is equivalent to 
Definition~\ref{def:primal-dual-certificate} with $L(\beta)$ replaced by
a redefined convex function $\bar{L}_*(\beta)= \bar{L}(\beta) - \innerprod{\nabla \bar{L}(\bbeta)-\nabla L(\obeta)}{\beta-\bbeta}$.
We may consider $\obeta$ to be the true target $\bbeta_*$ (or its approximation)
in that we can assume that $\nabla L(\obeta)$ is small although 
$\obeta$ may not be sparse.
The main advantage of Definition~\ref{def:primal-dual-certificate-2} is that it allows comparison to an
arbitrary sparse approximation $\bbeta$ to $\obeta$ even when $\nabla L(\bbeta)$ is 
not small --- the definition only requires  $\nabla \bar{L}_*(\bbeta)=\nabla L(\obeta)$ to be small. 
This implies that $\bbeta$ may have a dual certificate $Q_G$ with respect to $\bar{L}_*(\cdot)$
that is close to $\bbeta$  (see error bounds in Section~\ref{sec:construction}). 
The following result shows that one can obtain an oracle inequality that generalizes Theorem~\ref{thm:dual_certificate-recovery}. In order to apply this theorem, we should choose $\obeta \approx \bbeta_*$.
\begin{theorem}[Primal-Dual Certificate Sparse Oracle Inequality]
  Given a generalized $\delta$ approximate primal-dual certificate $Q_G$ in Definition~\ref{def:primal-dual-certificate-2}, 
  we have for all $\tbeta$ in the line segment between $\bbeta$ and $\hbeta$:
\bes
  && D_L(\bbeta,\tbeta)+  D_L(\tbeta,\obeta) + D_{\bar{L}}(\tbeta,Q_G) + [R(\tbeta)- R_G(\tbeta) ]
  \cr &\leq& D_{\bar{L}}(\tbeta,\bbeta)+ D_L(\bbeta,\obeta) + D_{\bar{L}}(\bbeta,Q_G)  
  - \innerprod{\delta}{\tbeta-\bbeta}.
\ees
\label{thm:dual_certificate-oracle}
\end{theorem}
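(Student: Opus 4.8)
\medskip
\noindent\emph{Proof strategy.}
The plan is to reproduce the argument behind Theorem~\ref{thm:dual_certificate-recovery}, but now also decomposing the loss Bregman divergence around the auxiliary point $\obeta$, while keeping $\hbeta$ — and hence the optimality information exploited through Proposition~\ref{prop:subgrad} — tied to the \emph{original} loss $L$, even though the certificate condition~(\ref{eq:dual-certificate-2}) involves $\bar{L}_*$. Two elementary facts about $\bar{L}_*(\beta)=\bar{L}(\beta)-\innerprod{\nabla\bar{L}(\bbeta)-\nabla L(\obeta)}{\beta-\bbeta}$ will be used: since $\bar{L}_*$ differs from $\bar{L}$ only by an affine functional, their Bregman divergences coincide, $D_{\bar{L}_*}(\cdot,\cdot)=D_{\bar{L}}(\cdot,\cdot)$; and by construction $\nabla\bar{L}_*(\bbeta)=\nabla L(\obeta)$. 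Writing $v:=-\nabla\bar{L}_*(Q_G)+\delta$, condition~(\ref{eq:dual-certificate-2}) says precisely that $v\in G$.

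First I would apply Proposition~\ref{prop:bregman} to the convex function $\bar{L}_*$ with $(a,b,c)=(\tbeta,\bbeta,Q_G)$; using $D_{\bar{L}_*}=D_{\bar{L}}$, $\nabla\bar{L}_*(Q_G)=\delta-v$, and $\nabla\bar{L}_*(\bbeta)=\nabla L(\obeta)$, this gives
\[
D_{\bar{L}}(\tbeta,\bbeta)+D_{\bar{L}}(\bbeta,Q_G)-D_{\bar{L}}(\tbeta,Q_G)=\innerprod{(\delta-v)-\nabla L(\obeta)}{\tbeta-\bbeta}.
\]
Second, I would apply Proposition~\ref{prop:bregman} to $L$ with $(a,b,c)=(\bbeta,\tbeta,\obeta)$, which yields
\[
D_L(\bbeta,\tbeta)+D_L(\tbeta,\obeta)-D_L(\bbeta,\obeta)=\innerprod{\nabla L(\tbeta)-\nabla L(\obeta)}{\tbeta-\bbeta}.
\]
Third, since $\tbeta$ lies on the line segment between $\bbeta$ and $\hbeta$, Proposition~\ref{prop:subgrad} applied at $\tbeta$ with this $v\in G$ gives $\innerprod{-v-\nabla L(\tbeta)}{\tbeta-\bbeta}\ge R(\tbeta)-R_G(\tbeta)$.

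To finish, I would solve the first identity for $\innerprod{-v}{\tbeta-\bbeta}$, insert it into the third inequality, and then rewrite the emerging term $\innerprod{\nabla L(\tbeta)-\nabla L(\obeta)}{\tbeta-\bbeta}$ via the second identity; a rearrangement then produces exactly the claimed bound, with the two occurrences of $\innerprod{\nabla L(\obeta)}{\tbeta-\bbeta}$ cancelling and the $\delta$-terms collecting into $-\innerprod{\delta}{\tbeta-\bbeta}$. The computation itself is routine; the only points needing care are the sign bookkeeping across the three combinations and the fact, noted above, that the optimality hypothesis (hence the restriction that $\tbeta$ be on $[\bbeta,\hbeta]$) must be invoked with the genuine loss $L$, so one cannot simply cite Theorem~\ref{thm:dual_certificate-recovery} verbatim with $L$ replaced by $\bar{L}_*$. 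As a consistency check, specializing $\bar{L}=L$, $\obeta=\bbeta$, $\tbeta=\hbeta$ collapses the inequality back to Theorem~\ref{thm:dual_certificate-recovery}.
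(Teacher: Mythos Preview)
Your proposal is correct and follows essentially the same approach as the paper: two applications of Proposition~\ref{prop:bregman} (one for $L$ around $\obeta$, one for $\bar{L}$/$\bar{L}_*$ around $Q_G$) combined with Proposition~\ref{prop:subgrad}. The only cosmetic difference is that you apply Proposition~\ref{prop:bregman} to $\bar{L}_*$ and then use $D_{\bar{L}_*}=D_{\bar{L}}$ and $\nabla\bar{L}_*(\bbeta)=\nabla L(\obeta)$, whereas the paper applies it directly to $\bar{L}$ and then uses the relation $\nabla\bar{L}(Q_G)+\nabla L(\obeta)-\nabla\bar{L}(\bbeta)=-v+\delta$; these are the same identity.
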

\begin{proof}
We apply Proposition~\ref{prop:bregman} with $a=\bbeta$, $b=\tbeta$, and $c=\obeta$ to obtain:
\[
D_L(\bbeta,\tbeta)+ D_L(\tbeta,\obeta) - D_L(\bbeta,\obeta) =\innerprod{\nabla L(\obeta) - \nabla L(\tbeta)}{\bbeta-\tbeta}.
\]
Similarly, we can apply Proposition~\ref{prop:bregman} with $a=\tbeta$, $b=\bbeta$, and $c=Q_G$ 
to $\bar{L}$ to obtain:
\[
D_{\bar{L}}(\tbeta,\bbeta)+ D_{\bar{L}}(\bbeta,Q_G) - D_{\bar{L}}(\tbeta,Q_G)
=\innerprod{\nabla \bar{L}(Q_G) - \nabla \bar{L}(\bbeta)}{\tbeta-\bbeta}.
\]
By subtracting the above two displayed equations, we obtain
\begin{align*}
&D_L(\bbeta,\tbeta)+ D_L(\tbeta,\obeta) - D_L(\bbeta,\obeta) 
-D_{\bar{L}}(\tbeta,\bbeta)- D_{\bar{L}}(\bbeta,Q_G) + D_{\bar{L}}(\tbeta,Q_G)  \\
=&
\innerprod{\nabla L(\obeta) - \nabla L(\tbeta)
+\nabla \bar{L}(Q_G) - \nabla \bar{L}(\bbeta)}{\bbeta-\tbeta}. 
\end{align*}
Since $\nabla \bar{L}(Q_G) + \nabla L(\obeta) - \nabla \bar{L}(\bbeta)=-v +\delta$ 
for some $v\in G$, the right hand side can be written as
$\innerprod{-v+\delta-\nabla L(\tbeta)}{\bbeta-\tbeta}$. 
The conclusion then follows from Proposition~\ref{prop:subgrad}. \end{proof}

Note that if we choose $L=\bar{L}$ and $\obeta=\bbeta$ in Theorem~\ref{thm:dual_certificate-oracle},
then Definition~\ref{def:primal-dual-certificate-2} is consistent with
Definition~\ref{def:primal-dual-certificate}, and Theorem~\ref{thm:dual_certificate-oracle} becomes
Theorem~\ref{thm:dual_certificate-recovery}. 
Since $\bar{L}_*(\beta)-\bar{L}(\beta)$ is linear in $\beta$, $D_{\bar{L}}(\bbeta,Q_G)
= D_{\bar{L}_*}(\bbeta,Q_G)$. Moreover, when $\nabla L(\beta_*)$ is small, 
$\nabla\bar{L}_*(\bbeta)$ is small by the choice of $\bar{L}_*(\cdot)$ in 
Definition~\ref{def:primal-dual-certificate-2}, so that $D_{\bar{L}_*}(\bbeta,Q_G)$ is 
small when $\bar{L}_*$ has sufficient convexity near $\bbeta$. 
This motivates a choice $\bar{L}(\cdot)$ satisfying 
$D_{\bar{L}}(\beta,\bbeta)\le D_L(\bbeta,\beta)$ for all $\beta\in\Omega$ whenever 
such a choice is available and reasonably convex near $\bbeta$. 
This lead to the following corollary. 

\begin{corollary} \label{cor:dual_certificate-oracle}
Given a generalized exact primal-dual certificate $Q_G$ in Definition~\ref{def:primal-dual-certificate-2}
with $\bar{L}(\cdot)$ 
  satisfying $D_L(\bbeta,\beta)\ge D_{\bar{L}}(\beta,\bbeta)$ for all $\beta\in\Omega$. Then, 
  \[
  D_L(\hbeta,\obeta) + [R(\hbeta)- R_G(\hbeta) ]
  \leq D_L(\bbeta,\obeta) + D_{\bar{L}_*}(\bbeta,Q_G). 
 \]
\end{corollary}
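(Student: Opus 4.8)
The plan is to derive Corollary~\ref{cor:dual_certificate-oracle} as a direct specialization of Theorem~\ref{thm:dual_certificate-oracle}. First I would invoke the theorem with $\delta=0$ (exact certificate) and with the particular choice $\tbeta=\hbeta$ (taking ${t}=1$ in the line segment between $\bbeta$ and $\hbeta$). This yields
\[
D_L(\bbeta,\hbeta)+D_L(\hbeta,\obeta)+D_{\bar L}(\hbeta,Q_G)+[R(\hbeta)-R_G(\hbeta)]\le D_{\bar L}(\hbeta,\bbeta)+D_L(\bbeta,\obeta)+D_{\bar L}(\bbeta,Q_G).
\]

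Next I would use the hypothesis $D_L(\bbeta,\beta)\ge D_{\bar L}(\beta,\bbeta)$ for all $\beta\in\Omega$, applied at $\beta=\hbeta$ (note $\hbeta\in\Omega$ since it is the minimizer in (\ref{eq:hbeta})), to get $D_{\bar L}(\hbeta,\bbeta)\le D_L(\bbeta,\hbeta)$. Substituting this into the right-hand side and cancelling the common term $D_L(\bbeta,\hbeta)$ from both sides leaves
\[
D_L(\hbeta,\obeta)+D_{\bar L}(\hbeta,Q_G)+[R(\hbeta)-R_G(\hbeta)]\le D_L(\bbeta,\obeta)+D_{\bar L}(\bbeta,Q_G).
\]
Then I would drop the nonnegative Bregman term $D_{\bar L}(\hbeta,Q_G)\ge 0$ on the left, and finally rewrite $D_{\bar L}(\bbeta,Q_G)=D_{\bar L_*}(\bbeta,Q_G)$ using the remark in the text that $\bar L_*-\bar L$ is linear in $\beta$ (so the two Bregman divergences with base point $\bbeta$ coincide). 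This produces exactly the claimed inequality.

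The only genuinely delicate point is the validity of applying the comparison hypothesis at $\beta=\hbeta$: this requires $\hbeta\in\Omega$, which holds by definition of the estimator, so in fact there is no real obstacle — the proof is essentially bookkeeping. If one wanted a slightly stronger statement one could retain $D_{\bar L}(\hbeta,Q_G)$ on the left, but the stated corollary discards it for cleanliness. I would present the argument in two or three lines, citing Theorem~\ref{thm:dual_certificate-oracle}, the hypothesis, nonnegativity of Bregman divergence, and the linearity identity $D_{\bar L}(\bbeta,Q_G)=D_{\bar L_*}(\bbeta,Q_G)$.
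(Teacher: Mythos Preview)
Your proposal is correct and matches the paper's intended derivation: the paper does not write out a separate proof for this corollary, but the surrounding text (immediately before the corollary) spells out exactly the ingredients you use, namely the identity $D_{\bar{L}}(\bbeta,Q_G)=D_{\bar{L}_*}(\bbeta,Q_G)$ from linearity of $\bar{L}_*-\bar{L}$ and the motivation for requiring $D_{\bar{L}}(\beta,\bbeta)\le D_L(\bbeta,\beta)$, so specializing Theorem~\ref{thm:dual_certificate-oracle} at $\tbeta=\hbeta$, $\delta=0$ and cancelling as you describe is precisely what is intended.
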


In some problems, Corollary \ref{cor:dual_certificate-oracle} is applicable with 
$\bar{L}(\cdot)=\gamma L(\cdot)$ for some $\gamma \in (0,1]$. 
In the special case that $L(\cdot)$ is a quadratic function as in (\ref{eq:quadratic-loss}), 
we have $D_L(\beta,\bbeta)=D_L(\bbeta,\beta)$. Therefore we may take $\gamma=1$,
and the bound in Corollary~\ref{cor:dual_certificate-oracle} can be further simplified to
\[
D_L(\hbeta,\obeta)+ [R(\hbeta)- R_G(\hbeta) ] \leq 
D_L(\bbeta,\obeta)+  D_{L}(\bbeta,Q_G)  .
\]

If $L(\cdot)$ comes from a generalized linear model of the form 
$L(\beta)=\sum_{i=1}^n \ell_i(\innerprod{x_i}{\beta})$, with $x_i\in\Omegabar^*$ and second order differentiable convex scalar functions $\ell_i$, then the condition $D_L(\bbeta,\beta) \geq \gamma D_L(\beta,\bbeta)$ is satisfied as long as:
\[
\inf_{\beta \in \Omega} \frac{D_L(\bbeta,\beta)}{D_L(\beta,\bbeta)}
\geq \inf_{\{\beta',\beta''\} \in \Omega} 
\frac{\sum_{i=1}^n \ell_i''(\innerprod{x_i}{\beta'}) \innerprod{x_i}{\beta-\bbeta}^2}
{\sum_{i=1}^n \ell_i''(\innerprod{x_i}{\beta''}) \innerprod{x_i}{\beta-\bbeta}^2}
\geq \inf_{i \in \{1,\ldots,n\}} \inf_{\{\beta',\beta''\} \in \Omega}\frac{\ell_i''(\innerprod{x_i}{\beta'})}{\ell_i''(\innerprod{x_i}{\beta''})}\ge\gamma. 
\]
This means that the condition of Corollary~\ref{cor:dual_certificate-oracle} holds as long as for all $i,\beta,\beta' \in \Omega$:
$\ell_i''(\innerprod{x_i}{\beta}) \geq \gamma \ell_i''(\innerprod{x_i}{\beta'})$.
For example, for logistic regression $\ell_i(t)= \ln(1+\exp(-t))$ 
with $\sup_i\sup_{\beta\in\Omega}|\innerprod{x_i}{\beta}|\leq A$, we can pick 
$\gamma=4/(2+\exp(-A)+\exp(A))$. This choice of $\gamma$ can be improved if we have additional constraints
on $\hat{\beta}$; an example is given in Corollary~\ref{cor:recovery-global-dc-oracle}.
In~\ref{sec:genlin-example}, we will present a more concrete and elaborated analysis for generalized linear models.

Note that the result of Corollary~\ref{cor:dual_certificate-oracle} gives an oracle inequality that compares
$D_L(\hbeta,\obeta)$ to $D_L(\bbeta,\obeta)$ with leading coefficient one. 
The bound is meaningful as long as $\bbeta$ has a good dual certificate $Q_G$ 
under $\bar{L}_*(\beta)$ that is close to $\bbeta$.
The possibility to obtain oracle inequalities of this kind with leading coefficient one
was first noticed in \cite{KoTsLo10} under restricted strong convexity. 
The advantage of such an oracle inequality is that we do not require $\obeta$ to be sparse,
but rather the competitor $\bbeta$ to be sparse --- which implies the dual certificate $Q_G$ 
is close to $\bbeta$ when $\bar{L}_*(\beta)$ is sufficiently convex.
Here we generalize the result of \cite{KoTsLo10} in two ways. 
First it is possible to deal with non-quadratic loss.
Second we only require the existence of a good dual certificate $Q_G$,
which is a weaker requirement than restricted strong convexity in \cite{KoTsLo10}.

Generally speaking, the dual certificate technique allows us to obtain oracle inequality
$D_L(\hbeta,\obeta)+ [R(\hbeta)- R_G(\hbeta) ]$ directly.
If we are interested in other results such as parameter estimation bound 
$\|\hbeta-\obeta\|$, then additional estimates will be needed on top of the dual certificate theory of this paper.
Instead of working out general results, we will study this problem for structured $\ell_1$ regularizer
in Section~\ref{sec:struct-L1}.

\section{Constructing Primal-Dual Certificate}
\label{sec:construction}
We will present some general results for estimating $D_L(\bbeta,Q_G)$ under various assumptions.
For notational
simplicity, the main technical derivation considers Definition~\ref{def:primal-dual-certificate}, with dual certificate
$Q_G$ with respect to $L(\beta)$. One can then apply these results to the dual certificate $Q_G$
in Definition~\ref{def:primal-dual-certificate-2}. 

\subsection{Global Restricted Strong Convexity} 
\label{sec:RSC}
We first consider the following construction of primal-dual certificate.
\begin{proposition} \label{prop:global-dc}
Let
\begin{equation}
Q_G = \arg\min_{\beta} \left[ L(\beta) + R_G(\beta) \right] , \label{eq:dc-opt}
\end{equation}
then $Q_G$ is an exact primal-dual certificate of (\ref{eq:hbeta}).
\end{proposition}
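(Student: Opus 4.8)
The plan is to verify that the minimizer $Q_G$ of \eqref{eq:dc-opt} satisfies the exact dual certificate condition \eqref{eq:dual-certificate} with $\delta = 0$, namely $-\nabla L(Q_G) \in G$. This is a one-line consequence of the first-order optimality condition for the unconstrained convex problem \eqref{eq:dc-opt}, combined with the structural fact about $\partial R_G$ recorded just after the definition of $R_G$.

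First I would note that since $L$ and $R_G$ are both convex functions on $\Omegabar$, and $Q_G$ minimizes their sum over all of $\Omegabar$, the optimality condition reads $0 \in \partial\big(L + R_G\big)(Q_G)$. By the sum rule for subdifferentials of convex functions (valid here since $L$ is, for our purposes, differentiable so that $\partial L(Q_G) = \{\nabla L(Q_G)\}$), this is equivalent to $-\nabla L(Q_G) \in \partial R_G(Q_G)$. Second, I would invoke the earlier computation in the excerpt, which shows that for closed convex $G$ one has $\partial R_G(\beta) \subseteq G$ for every $\beta$; in particular $\partial R_G(Q_G) \subseteq G$. Chaining these gives $-\nabla L(Q_G) \in G$, which is exactly \eqref{eq:dual-certificate} with $\delta = 0$, so $Q_G$ is an exact primal-dual certificate in the sense of Definition~\ref{def:primal-dual-certificate}.

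I do not expect any serious obstacle here; this is essentially definitional once the subdifferential calculus is in place. The only point requiring a word of care is the existence of the minimizer in \eqref{eq:dc-opt} and the applicability of the sum rule — both are implicitly handled by the standing assumptions that $L$ and $R$ are convex on the separable Banach space $\Omegabar$ and by the observation made right after \eqref{eq:dual-certificate-simple} that $-\nabla \bar L(Q_G) \in \partial R_G(Q_G) \subseteq G$, specialized here to $\bar L = L$. If one wants to be fully explicit, one can simply say: by definition of $Q_G$ as the minimizer and convexity, $\tbeta = Q_G$ gives the stationarity relation $\langle \nabla L(Q_G) + \nabla R_G(Q_G), \beta - Q_G\rangle \ge 0$ for all $\beta$, hence $-\nabla L(Q_G) \in \partial R_G(Q_G)$, and then $\partial R_G(Q_G) \subseteq G$ finishes the argument. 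Thus the proof is a two-sentence verification, and I would present it at that level of detail rather than belaboring the convex-analytic technicalities.
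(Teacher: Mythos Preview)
Your proposal is correct and matches the paper's own proof, which is the one-liner: by the optimality condition for \eqref{eq:dc-opt}, $\nabla L(Q_G) + v = 0$ for some $v \in G$. Your version simply spells out the intermediate step $-\nabla L(Q_G) \in \partial R_G(Q_G) \subseteq G$, which is the same argument with slightly more detail.
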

\begin{proof}
It is clear from the optimality condition of (\ref{eq:dc-opt}) that $\nabla L(Q_G) + v=0$ 
for some $v \in G$. 
\end{proof}

The symmetrized Bregman divergence is  defined as 
\bes
D_L^s(\beta,\bbeta)=D_L(\beta,\bbeta) + D_L(\bbeta,\beta) 
= \innerprod{\nabla L(\beta) - \nabla L(\bbeta)}{\beta-\bbeta}. 
\ees
We introduce the concept of restricted strong convexity to bound 
$D_L^s(\bbeta,Q_G)$.
\begin{definition}[Restricted Strong Convexity]\label{def:RSC}
We define the following quantity which we refer to as global restricted strong convexity (RSC) constant:
\[
 \gamma_L(\bbeta;r,G,\|\cdot\|)= \inf \left\{ \frac{D_L^s(\beta,\bbeta)}{\|\beta-\bbeta\|^2} : 
0<\|\beta-\bbeta\|\leq r; \;
D_L^s(\beta,\bbeta)+\sup_{u\in G} \innerprod{u+\nabla L(\bbeta)}{\beta-\bbeta} \leq 0 \right\} ,
\]
where $\|\cdot\|$ is a norm in $\Omegabar$, $r>0$ and $G \subset \partial R(\bbeta)$.
\end{definition}
The parameter $r$ is introduced for localized analysis, where the Hessian may be small when 
$\|\beta-\bbeta\| >r$. For least squares loss that has a constant Hessian, one can just pick $r=\infty$.

We recall the concept of dual norm in $\Omegabar$: 
$\|\cdot\|_D$ is the dual norm of $\|\cdot\|$ if
\[
\|u\|_D = \sup_{\|\beta\|=1} \innerprod{u}{\beta} . 
\]
It implies the inequality that $\innerprod{u}{\beta} \leq \|u\|_D \|\beta\|$.

\begin{theorem}[Dual Certificate Error Bound under RSC]
Let $\|\cdot\|$ be a norm in $\Omegabar$ and $\|\cdot\|_D$ its dual norm in $\Omegabar^*$. 
Consider $\bbeta \in \Omega$ and a closed convex $G \subset \partial R(\bbeta)$.
Let $\Delta_r=\gamma_L(\bbeta;r,G,\|\cdot\|)^{-1}\inf_{u  \in G} \|u+\nabla L(\bbeta)\|_D$.
If $\Delta_r < r$ for some $r>0$, then for any $Q_G$ given by  (\ref{eq:dc-opt}), 
\[
D_L^s(\bbeta,Q_G) \leq \gamma_L(\bbeta;r,G,\|\cdot\|) \Delta_r^2, \quad \|\bbeta-Q_G\| \leq \Delta_r . 
\]
\label{thm:dual_certificate-error}
\end{theorem}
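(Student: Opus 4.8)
The plan is to follow the objective $L+R_G$ along the line segment joining $\bbeta$ and $Q_G$, show that every point of this segment lies in the feasible set appearing in the definition of $\gamma_L(\bbeta;r,G,\|\cdot\|)$, and then combine the restricted-strong-convexity lower bound with an elementary dual-norm upper bound; a short continuity (``peeling'') argument finally removes the localization radius $r$.

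Write $\beta_t=\bbeta+t(Q_G-\bbeta)$ for $t\in[0,1]$ and $c=\sup_{u\in G}\innerprod{u}{Q_G-\bbeta}$. The first step is to observe that $R_G$ is \emph{affine} along this segment: since $\beta_t-\bbeta=t(Q_G-\bbeta)$ with $t\ge0$, we have $R_G(\beta_t)=R(\bbeta)+tc$. Combining the global optimality of $Q_G$ for $L+R_G$ (which gives $L(Q_G)+R_G(Q_G)\le L(\beta_t)+R_G(\beta_t)$) with the subgradient inequality $L(Q_G)\ge L(\beta_t)+\innerprod{\nabla L(\beta_t)}{Q_G-\beta_t}$, one obtains $\innerprod{\nabla L(\beta_t)}{Q_G-\bbeta}+c\le0$ for $t\in[0,1)$; the same inequality holds at $t=1$ directly from the stationarity $\nabla L(Q_G)=-v$ with $v\in\partial R_G(Q_G)\subseteq G$, since, by the sub-differential formula for $R_G$, any such $v$ attains the supremum defining $c$. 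Multiplying by $t\ge0$ and using $\beta_t-\bbeta=t(Q_G-\bbeta)$, this rewrites as
\[
D_L^s(\beta_t,\bbeta)+\sup_{u\in G}\innerprod{u+\nabla L(\bbeta)}{\beta_t-\bbeta}=\innerprod{\nabla L(\beta_t)}{\beta_t-\bbeta}+\sup_{u\in G}\innerprod{u}{\beta_t-\bbeta}\le0,
\]
which is exactly the constraint in Definition~\ref{def:RSC}. Hence, whenever $0<\|\beta_t-\bbeta\|\le r$, the definition of $\gamma_L(\bbeta;r,G,\|\cdot\|)$ gives $D_L^s(\beta_t,\bbeta)\ge\gamma_L(\bbeta;r,G,\|\cdot\|)\,\|\beta_t-\bbeta\|^2$.

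For the matching upper bound, rearrange the displayed inequality to $D_L^s(\beta_t,\bbeta)\le\inf_{u\in G}\innerprod{u+\nabla L(\bbeta)}{\bbeta-\beta_t}$ and apply $\innerprod{u+\nabla L(\bbeta)}{\bbeta-\beta_t}\le\|u+\nabla L(\bbeta)\|_D\,\|\bbeta-\beta_t\|$ to obtain $D_L^s(\beta_t,\bbeta)\le\big(\inf_{u\in G}\|u+\nabla L(\bbeta)\|_D\big)\,\|\beta_t-\bbeta\|$. Dividing the two bounds yields $\|\beta_t-\bbeta\|\le\Delta_r$ whenever $0<\|\beta_t-\bbeta\|\le r$. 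To turn this into a bound for $Q_G=\beta_1$, suppose $\|Q_G-\bbeta\|>\Delta_r$. Since $t\mapsto\|\beta_t-\bbeta\|=t\|Q_G-\bbeta\|$ is continuous, vanishes at $t=0$, and exceeds $\Delta_r$ at $t=1$, and since $\Delta_r<r$, there is $t^*\in(0,1]$ with $\Delta_r<\|\beta_{t^*}-\bbeta\|\le r$ (take $t^*=1$ if $\|Q_G-\bbeta\|\le r$, otherwise pick $t^*$ with $\|\beta_{t^*}-\bbeta\|=r$ by the intermediate value theorem). The bound just proved forces $\|\beta_{t^*}-\bbeta\|\le\Delta_r$, a contradiction. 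Therefore $\|Q_G-\bbeta\|\le\Delta_r<r$; if $Q_G=\bbeta$ both assertions are trivial, and otherwise the upper bound at $t=1$ gives $D_L^s(\bbeta,Q_G)=D_L^s(Q_G,\bbeta)\le\big(\inf_{u\in G}\|u+\nabla L(\bbeta)\|_D\big)\|Q_G-\bbeta\|\le\gamma_L(\bbeta;r,G,\|\cdot\|)\,\Delta_r^2$.

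The step I expect to be the crux is the verification that the segment points $\beta_t$ satisfy the constraint in Definition~\ref{def:RSC}: this is where the affineness of $R_G$ along the segment and the first-order optimality of $Q_G$ must be combined correctly, and it is essentially the only place convexity is used. Everything afterward---the dual-norm estimate and the peeling argument removing the restriction $\|\beta_t-\bbeta\|\le r$---is routine. A minor technical point is the possible non-smoothness of $L$: one fixes a single subgradient selection $\nabla L(\cdot)$ (with $\nabla L(Q_G)=-v$) and uses it consistently in $D_L^s$ and in the definition of $\gamma_L$, so that all the inequalities above refer to the same objects.
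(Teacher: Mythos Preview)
Your proof is correct and follows essentially the same route as the paper's: verify that every point on the segment from $\bbeta$ to $Q_G$ satisfies the constraint in the definition of $\gamma_L(\bbeta;r,G,\|\cdot\|)$, combine the resulting RSC lower bound with the dual-norm upper bound to get $\|\beta_t-\bbeta\|\le\Delta_r$, and use a localization step to conclude $t=1$ is admissible. The only cosmetic difference is in how the constraint is checked at the intermediate point: the paper first establishes it at $Q_G$ via the KKT condition and then propagates along the segment using $f'(t)\le f'(1)$ for $f(t)=D_L(\beta_t,\bbeta)$, whereas you compare the objective value at $Q_G$ directly against $\beta_t$ together with a subgradient inequality at $\beta_t$; these are equivalent convexity arguments.
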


\begin{proof}
By the optimality condition (\ref{eq:dc-opt}) of $Q_G$, there exists $v\in\pa R_G(Q_G)$ such that 
$\nabla L(Q_G) + v = 0$. 
For $v\in\pa R_G(Q_G)$, $R_G(Q_G)-R(\bbeta)=\innerprod{v}{Q_G-\bbeta} 
\geq \sup_{u\in G}\innerprod{u}{Q_G-\bbeta}$. Therefore,
\bel{eq:Q-RSC}
D_L^s(Q_G,\bbeta) 
= \innerprod{\nabla L(Q_G)-\nabla L(\bbeta)}{Q_G-\bbeta} \leq 
-\innerprod{u+\nabla L(\bbeta)}{Q_G-\bbeta}, \forall \ u \in G. 
\eel
Let ${\tilde Q}_G=\bbeta+t(Q_G-\bbeta)$ where we pick $t=1$ if $\|Q_G-\bbeta\| \leq r$ and $t \in (0,1)$
with $\|{\tilde Q}_G-\bbeta\|=r$ otherwise. 
Let $f(t) = D_L({\tilde Q}_G,\bbeta)$ so that $D_L^s({\tilde Q}_G,\bbeta) = tf'(t)$. 
The convexity of $L(\beta)$ implies $f'(t) \le f'(1)=D_L^s(Q_G,\bbeta)$. 
It follows that  
\bes
D_L^s({\tilde Q}_G,\bbeta) +\innerprod{u+\nabla L(\bbeta)}{{\tilde Q}_G-\bbeta}
\le t\{D_L^s(Q_G,\bbeta)+\innerprod{u+\nabla L(\bbeta)}{Q_G-\bbeta}\}\le 0, 
\ees
which implies the restricted cone condition for ${\tilde Q}_G$ in the definition of RSC. Thus, 
\[
\gamma_L(\bbeta;r,G,\|\cdot\|) \|{\tilde Q}_G-\bbeta\|^2 -\|u+\nabla L(\bbeta)\|_D \|{\tilde Q}_G-\bbeta\|
\leq 0 .
\]
Now by moving the term $\|u+\nabla L(\bbeta)\|_D \|{\tilde Q}_G-\bbeta\|$ to the right hand side and
taking $\inf$ over $u$, we obtain
$\gamma_L(\bbeta;r,G,\|\cdot\|)\|{\tilde Q}_G-\bbeta\| \leq \inf_{u\in G} \|u+\nabla L(\bbeta)\|_D
=\gamma_L(\bbeta;r,G,\|\cdot\|) \Delta_r$. 
Since $\Delta_r < r$, we have $t=1$ and ${\tilde Q}_G=Q_G$.
It means that we always have $\|Q_G-\bbeta\| \leq \Delta_r <r$. Consequently, 
(\ref{eq:Q-RSC}) gives 
$D_L^s(Q_G,\bbeta) \leq \inf_{u\in G}\|u+\nabla L(\bbeta)\|_D\Delta_r$. 
This completes the proof.
\end{proof}

\begin{remark}
  Although for simplicity, the proof of Theorem~\ref{thm:dual_certificate-error} implicitly assumes that the solution of (\ref{eq:dc-opt}) is finite,
  this extra assumption is not necessary with a slightly more complex argument (which we excludes in the proof in order not to obscure the main idea). An easy way to see this is by
  adding a small (unrestricted) strongly convex term $L_\Delta(\beta)$ to $L$ and consider dual certificate for the modified function
  $\tilde{L}(\beta)=L(\beta)+L_\Delta(\beta)$.
  Since the solution of (\ref{eq:dc-opt}) with $\tilde{L}(\beta)$ is finite, we can apply the proof to $\tilde{L}(\beta)$ and then simply let $L_\Delta(\beta) \to 0$. 
\end{remark}

Note that if $\nabla L(Q_G)$ is not unique, then the same value can be used both in Theorem~\ref{thm:dual_certificate-recovery} and in
Theorem~\ref{thm:dual_certificate-error}. 
Since $D_L(\bbeta,Q_G)\le D_L^s(\bbeta,Q_G)$, this implies the following bound:

\begin{corollary}
  Under the conditions of Theorem~\ref{thm:dual_certificate-error}, we have
  \[
  D_L(\bbeta,\hbeta) + [ R(\hbeta)- R_G(\hbeta)] \leq \gamma_L(\bbeta;r,G,\|\cdot\|)^{-1}\inf_{u  \in G} \|u+\nabla L(\bbeta)\|_D^2 .
  \]
 \label{cor:recovery-global-dc-error}
\end{corollary}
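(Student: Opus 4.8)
The plan is to combine the two main results that precede this corollary, namely the Primal-Dual Certificate Sparse Recovery Bound (Theorem~\ref{thm:dual_certificate-recovery}) and the Dual Certificate Error Bound under RSC (Theorem~\ref{thm:dual_certificate-error}), applied to the exact primal-dual certificate $Q_G$ furnished by Proposition~\ref{prop:global-dc}. Since $Q_G$ in~(\ref{eq:dc-opt}) is an \emph{exact} certificate (i.e.\ $\delta=0$), Theorem~\ref{thm:dual_certificate-recovery} specializes to
\[
D_L(\bbeta,\hbeta) + D_L(\hbeta,Q_G) + [R(\hbeta)-R_G(\hbeta)] \leq D_L(\bbeta,Q_G),
\]
so in particular $D_L(\bbeta,\hbeta) + [R(\hbeta)-R_G(\hbeta)] \leq D_L(\bbeta,Q_G)$, using nonnegativity of the Bregman divergence $D_L(\hbeta,Q_G)$.

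Next I would control the right-hand side $D_L(\bbeta,Q_G)$ by the symmetrized divergence: since Bregman divergence is nonnegative, $D_L(\bbeta,Q_G) \le D_L(\bbeta,Q_G) + D_L(Q_G,\bbeta) = D_L^s(\bbeta,Q_G)$. Now the hypotheses of the corollary are exactly those of Theorem~\ref{thm:dual_certificate-error} (a closed convex $G\subset\partial R(\bbeta)$, a norm $\|\cdot\|$ with dual norm $\|\cdot\|_D$, and $\Delta_r<r$ with $\Delta_r=\gamma_L(\bbeta;r,G,\|\cdot\|)^{-1}\inf_{u\in G}\|u+\nabla L(\bbeta)\|_D$), so that theorem gives
\[
D_L^s(\bbeta,Q_G) \le \gamma_L(\bbeta;r,G,\|\cdot\|)\,\Delta_r^2 = \gamma_L(\bbeta;r,G,\|\cdot\|)^{-1}\Big(\inf_{u\in G}\|u+\nabla L(\bbeta)\|_D\Big)^2,
\]
where the last equality just substitutes the definition of $\Delta_r$ and simplifies one factor of $\gamma_L$. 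Chaining these three inequalities yields exactly the claimed bound.

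The one point that deserves a remark — and is flagged in the text immediately before the corollary — is the consistency of the choice of subgradient $\nabla L(Q_G)$: when $\nabla L$ is not single-valued at $Q_G$, one must verify that the \emph{same} element of $\partial L(Q_G)$ can serve simultaneously in the application of Theorem~\ref{thm:dual_certificate-recovery} (where it must lie in $-G$, equivalently satisfy the dual-certificate condition~(\ref{eq:dual-certificate}) with $\delta=0$) and in Theorem~\ref{thm:dual_certificate-error} (whose proof fixes a $v\in\partial R_G(Q_G)$ with $\nabla L(Q_G)+v=0$). But Proposition~\ref{prop:global-dc} and the proof of Theorem~\ref{thm:dual_certificate-error} both produce $\nabla L(Q_G)$ from the \emph{same} optimality condition of the \emph{same} optimization problem~(\ref{eq:dc-opt}), so this element is common to both; this is precisely the observation made in the sentence preceding the corollary statement. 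Hence there is no real obstacle here, only a bookkeeping check. I would therefore present the proof in three short lines: invoke Proposition~\ref{prop:global-dc} to get an exact certificate, apply Theorem~\ref{thm:dual_certificate-recovery} and drop the nonnegative term $D_L(\hbeta,Q_G)$, then bound $D_L(\bbeta,Q_G)\le D_L^s(\bbeta,Q_G)$ and apply Theorem~\ref{thm:dual_certificate-error}.
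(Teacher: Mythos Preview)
Your proof is correct and follows essentially the same approach as the paper: apply Theorem~\ref{thm:dual_certificate-recovery} with the exact certificate from~(\ref{eq:dc-opt}), drop the nonnegative $D_L(\hbeta,Q_G)$, then use $D_L(\bbeta,Q_G)\le D_L^s(\bbeta,Q_G)$ together with Theorem~\ref{thm:dual_certificate-error}. Your remark on the shared choice of $\nabla L(Q_G)$ is exactly the point the paper flags in the sentence preceding the corollary.
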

Similarly, we may apply Theorem~\ref{thm:dual_certificate-oracle} and Theorem~\ref{thm:dual_certificate-error}
with $L(\beta)$ replaced by $\bar{L}_*(\beta)$ as in Definition~\ref{def:primal-dual-certificate-2}.
This implies the following general recovery bound.

\begin{corollary}
  Let $\|\cdot\|$ be a norm in $\Omegabar$ and $\|\cdot\|_D$ its dual norm in $\Omegabar^*$. 
  Consider $\bbeta \in \Omega$ and a closed convex $G \subset \partial R(\bbeta)$.
  Consider $\bar{L}(\beta)$ as in Definition~\ref{def:primal-dual-certificate-2}, and define 
  \[
  \gamma_{\bar{L}_*}(\bbeta;r,G,\|\cdot\|)= \inf \left\{ \frac{D_{\bar{L}}^s(\beta,\bbeta)}{\|\bbeta-\beta\|^2} : 
    \|\beta-\bbeta\|\leq r; \;
    D_{\bar{L}}^s(\beta,\bbeta)+\sup_{u \in G} \innerprod{u+\nabla L(\obeta)}{\beta-\bbeta} \leq 0 \right\}
  \]
  and 
$\Delta_r =   (\gamma_{\bar{L}_*}(\bbeta;r,G,\|\cdot\|))^{-1}\inf_{u  \in G} \|u+\nabla L(\obeta)\|_D$. 
  Assume for some $r>0$, we have   $\Delta_r < r$; and assume
  there exists $\tilde{r}>  D_L(\bbeta,\obeta)+  \gamma_{\bar{L}_*}(\bbeta;r,G,\|\cdot\|) \Delta_r^2$ such that for all $\beta \in \Omega$:
  $D_L(\beta,\obeta) + [R(\beta)- R_G(\beta) ] \leq \tilde{r}$ implies
  $D_L(\bbeta,\beta) \geq D_{\bar{L}}(\beta,\bbeta)$.
  Then, 
  \[
  D_L(\hbeta,\obeta) + [R(\hbeta)- R_G(\hbeta) ] \leq 
  D_L(\bbeta,\obeta)+  \gamma_{\bar{L}_*}(\bbeta;r,G,\|\cdot\|) \Delta_r^2.
\]
  \label{cor:recovery-global-dc-oracle}
\end{corollary}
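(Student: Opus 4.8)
The plan is to combine the oracle inequality of Theorem~\ref{thm:dual_certificate-oracle} with the dual-certificate error bound of Theorem~\ref{thm:dual_certificate-error}, both applied to the redefined loss $\bar L_*(\beta)$ from Definition~\ref{def:primal-dual-certificate-2}, and then to use the continuity/localization hypothesis involving $\tilde r$ to upgrade the pointwise-in-$\tbeta$ inequality to a statement about $\hbeta$ itself. First I would invoke Theorem~\ref{thm:dual_certificate-error} with $L$ replaced by $\bar L_*$: since $D^s_{\bar L_*}=D^s_{\bar L}$ (the correction term in $\bar L_*$ is linear, so it drops out of the symmetrized Bregman divergence) and $\nabla\bar L_*(\bbeta)=\nabla L(\obeta)$, the RSC constant appearing there is exactly $\gamma_{\bar L_*}(\bbeta;r,G,\|\cdot\|)$ as defined in the statement, and the corresponding quantity $\inf_{u\in G}\|u+\nabla\bar L_*(\bbeta)\|_D=\inf_{u\in G}\|u+\nabla L(\obeta)\|_D$. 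The hypothesis $\Delta_r<r$ then yields $D^s_{\bar L_*}(\bbeta,Q_G)\le \gamma_{\bar L_*}(\bbeta;r,G,\|\cdot\|)\Delta_r^2$, and since $D_{\bar L}(\bbeta,Q_G)=D_{\bar L_*}(\bbeta,Q_G)\le D^s_{\bar L_*}(\bbeta,Q_G)$, this controls the term $D_{\bar L}(\bbeta,Q_G)$ that appears on the right-hand side of Theorem~\ref{thm:dual_certificate-oracle}.

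Next I would apply Theorem~\ref{thm:dual_certificate-oracle}. For any $\tbeta$ on the segment between $\bbeta$ and $\hbeta$, that theorem (with exact certificate, $\delta=0$) gives
\[
D_L(\bbeta,\tbeta)+D_L(\tbeta,\obeta)+D_{\bar L}(\tbeta,Q_G)+[R(\tbeta)-R_G(\tbeta)]
\le D_{\bar L}(\tbeta,\bbeta)+D_L(\bbeta,\obeta)+D_{\bar L}(\bbeta,Q_G).
\]
Dropping the nonnegative terms $D_L(\bbeta,\tbeta)$ and $D_{\bar L}(\tbeta,Q_G)$ on the left and substituting the bound on $D_{\bar L}(\bbeta,Q_G)$ from the previous paragraph, I get
\[
D_L(\tbeta,\obeta)+[R(\tbeta)-R_G(\tbeta)]\le D_{\bar L}(\tbeta,\bbeta)+D_L(\bbeta,\obeta)+\gamma_{\bar L_*}(\bbeta;r,G,\|\cdot\|)\Delta_r^2.
\]
If the convexity hypothesis $D_L(\bbeta,\tbeta)\ge D_{\bar L}(\tbeta,\bbeta)$ held unconditionally, I could cancel $D_{\bar L}(\tbeta,\bbeta)$ against the retained $D_L(\bbeta,\tbeta)$ and be done with $\tbeta=\hbeta$. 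The point of the $\tilde r$ hypothesis is precisely that this convexity is only assumed on the sublevel set $\{\beta:D_L(\beta,\obeta)+[R(\beta)-R_G(\beta)]\le\tilde r\}$.

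The main obstacle — and the step requiring care — is therefore a continuity/bootstrap argument along the segment $\tbeta(t)=t\hbeta+(1-t)\bbeta$. I would argue as follows: at $t=0$, $\tbeta=\bbeta$, and $D_L(\bbeta,\obeta)+[R(\bbeta)-R_G(\bbeta)]=D_L(\bbeta,\obeta)<\tilde r$ by the assumed strict inequality $\tilde r> D_L(\bbeta,\obeta)+\gamma_{\bar L_*}\Delta_r^2$; also $R(\bbeta)=R_G(\bbeta)$. Let $t^*$ be the supremum of $t$ for which $D_L(\tbeta(s),\obeta)+[R(\tbeta(s))-R_G(\tbeta(s))]\le\tilde r$ for all $s\le t$; the left side is a continuous (in fact convex, since $R-R_G$ and $D_L(\cdot,\obeta)$ are convex along the segment) function of $t$, so the set is a closed interval $[0,t^*]$. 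On $[0,t^*]$ the convexity hypothesis applies, so the displayed inequality above combined with $D_L(\bbeta,\tbeta)\ge D_{\bar L}(\tbeta,\bbeta)$ gives, after using $D_L(\bbeta,\tbeta)\ge 0$ once more on the left in the original Theorem~\ref{thm:dual_certificate-oracle} bound, that $D_L(\tbeta,\obeta)+[R(\tbeta)-R_G(\tbeta)]\le D_L(\bbeta,\obeta)+\gamma_{\bar L_*}\Delta_r^2<\tilde r$ — a \emph{strict} improvement over the defining threshold $\tilde r$. If $t^*<1$ this strict inequality at $t=t^*$ together with continuity would let us push past $t^*$, contradicting maximality; hence $t^*=1$, i.e.\ $\hbeta$ itself lies in the sublevel set, the convexity hypothesis applies at $\tbeta=\hbeta$, and the desired bound $D_L(\hbeta,\obeta)+[R(\hbeta)-R_G(\hbeta)]\le D_L(\bbeta,\obeta)+\gamma_{\bar L_*}(\bbeta;r,G,\|\cdot\|)\Delta_r^2$ follows. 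The one technical subtlety I would check is that $\hbeta$ is finite / the segment is well-defined (handled, as in the Remark after Theorem~\ref{thm:dual_certificate-error}, by adding a vanishing strongly convex perturbation), and that $R-R_G\ge 0$ so that dropping it where needed is legitimate.
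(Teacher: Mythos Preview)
Your proposal is correct and follows essentially the same route as the paper: construct $Q_G$ as the minimizer of $\bar L_*+R_G$, apply Theorem~\ref{thm:dual_certificate-error} with $L$ replaced by $\bar L_*$ (using $\nabla\bar L_*(\bbeta)=\nabla L(\obeta)$ and $D_{\bar L_*}=D_{\bar L}$) to bound $D_{\bar L}(\bbeta,Q_G)$, feed this into Theorem~\ref{thm:dual_certificate-oracle}, and then run a continuity argument along the segment $\tbeta(t)$ to show the sublevel condition persists up to $t=1$. The paper phrases the last step as an intermediate-value contradiction (if the inequality fails at $t=1$, some $t$ achieves equality $=\tilde r$, impossible since the bound gives $<\tilde r$), while you phrase it as a supremum/open-set argument; these are equivalent. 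One presentational point: your first displayed inequality, obtained by \emph{dropping} $D_L(\bbeta,\tbeta)$, is not what you actually use---you correctly return to the original Theorem~\ref{thm:dual_certificate-oracle} bound and cancel $D_L(\bbeta,\tbeta)$ against $D_{\bar L}(\tbeta,\bbeta)$ rather than discarding it, so that intermediate display can be omitted to streamline the write-up.
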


\begin{proof} 
Let $\bar{L}_*(\beta)= \bar{L}(\beta) - \innerprod{\nabla \bar{L}(\bbeta)-\nabla L(\obeta)}{\beta-\bbeta}$ 
and define
\[
Q_G = \arg\min_{\beta} \left[ \bar{L}_*(\beta) + R_G(\beta) \right]  .
\]
Then $Q_G$ is a generalized dual certificate in Definition~\ref{def:primal-dual-certificate-2}.
Note that $D_{\bar{L}_*}(\beta,\beta')=D_{\bar{L}}(\beta,\beta')$ and $\nabla \bar{L}_*(\bbeta)=\nabla L(\obeta)$. 
The conditions of the corollary and Theorem~\ref{thm:dual_certificate-error}, 
applied with $L$ replaced by $\bar{L}_*$, imply that
$\|Q_G-\bbeta\| \leq r$ and $D_{\bar{L}}(\bbeta,Q_G) \leq \gamma_{\bar{L}_*}(\bbeta;r,G,\|\cdot\|) \Delta_r^2$. Now we simply apply Theorem~\ref{thm:dual_certificate-oracle} to obtain 
that for all $t \in [0,1]$ and 
$\tbeta= \bbeta + t (\hbeta-\bbeta)$:
\[
  D_L(\tbeta,\obeta) + [R(\tbeta)- R_G(\tbeta) ] \leq 
  D_L(\bbeta,\obeta)+  \gamma_{\bar{L}_*}(\bbeta;r,G,\|\cdot\|) \Delta_r^2  +  [D_{\bar{L}}(\tbeta,\bbeta)-D_L(\bbeta,\tbeta)] .
\]
It is clear that when $t=0$, we have $D_L(\tbeta,\obeta) + [R(\tbeta)- R_G(\tbeta) ] < \tilde{r}$.
If the condition $D_L(\tbeta,\obeta) + [R(\tbeta)- R_G(\tbeta) ] \leq \tilde{r}$ holds for $t=1$, 
then the desired bound is already proved due to the condition 
$D_L(\bbeta,\tbeta) \geq D_{\bar{L}}(\tbeta,\bbeta)$. 
Otherwise, there exists $t \in [0,1]$ such that 
$D_L(\tbeta,\obeta) + [R(\tbeta)- R_G(\tbeta) ] = \tilde{r}$. However, this is impossible because 
the same argument gives 
\[
  D_L(\tbeta,\obeta) + [R(\tbeta)- R_G(\tbeta) ] \leq 
  D_L(\bbeta,\obeta)+  \gamma_{\bar{L}_*}(\bbeta;r,G,\|\cdot\|) \Delta_r^2 < \tilde{r} .
\]
This proves the desired bound.
\end{proof}

Corollary~\ref{cor:recovery-global-dc-oracle} gives an oracle inequality with leading coefficient one
for general loss functions, but the statement is rather complex.
The situation for quadratic loss is much simpler, where we can take $\bar{L}(\beta)=L(\beta)$. 
This is because the condition $D_L(\bbeta,\tbeta) \geq D_{\bar{L}}(\tbeta,\bbeta)$ always holds.
We also have a better constant because $D_L^s(\beta,\beta')= 2 D_L(\beta,\beta')=2D_L(\beta',\beta)$.
\begin{corollary}
  Assume that $L(\beta)$ is a quadratic loss in (\ref{eq:quadratic-loss}).
  Let $\|\cdot\|_D$ and $\|\cdot\|$ be dual norms, and consider $\bbeta \in \Omega$ and a closed 
  convex $G \subset \partial R(\bbeta)$. We have
  \[
  D_L(\hbeta,\obeta)+ [R(\hbeta)- R_G(\hbeta) ] \leq 
  D_L(\bbeta,\obeta)+  
  (2\gamma_{\bar{L}_*}(\bbeta;\infty,G,\|\cdot\|))^{-1}\inf_{u  \in G} \|u+\nabla L(\obeta)\|_D^2 ,
  \]
 where
 \[
  \gamma_{\bar{L}_*}(\bbeta;\infty,G,\|\cdot\|)= \inf \left\{ \frac{2D_L(\beta,\bbeta)}{\|\bbeta-\beta\|^2} : 
    2 D_L(\beta,\bbeta)+\sup_{u \in G} \innerprod{u+\nabla L(\obeta)}{\beta-\bbeta} \leq 0 \right\} .
 \]
\label{cor:recovery-global-dc-quadratic}
\end{corollary}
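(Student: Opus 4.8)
The plan is to specialize the machinery already developed, taking $\bar L(\cdot)=L(\cdot)$ throughout (which is legitimate precisely because $L$ is quadratic) and exploiting the symmetry $D_L(\beta,\beta')=D_L(\beta',\beta)=\tfrac12 D_L^s(\beta,\beta')$ of the Bregman divergence of a quadratic function. This symmetry is what both makes the hypotheses of the general oracle corollaries automatic and sharpens the generic bound $D_{\bar L_*}(\bbeta,Q_G)\le D_{\bar L_*}^s(\bbeta,Q_G)$ of Corollary~\ref{cor:recovery-global-dc-oracle} into an equality, yielding the extra factor of two.

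First I would set $\bar L_*(\beta)=L(\beta)-\innerprod{\nabla L(\bbeta)-\nabla L(\obeta)}{\beta-\bbeta}$ and take $Q_G=\argmin_\beta\{\bar L_*(\beta)+R_G(\beta)\}$; by the optimality condition (Proposition~\ref{prop:global-dc} applied to $\bar L_*$ in place of $L$) this $Q_G$ is a generalized exact primal--dual certificate in the sense of Definition~\ref{def:primal-dual-certificate-2}. Since $L$ is quadratic, $D_L(\bbeta,\beta)=D_L(\beta,\bbeta)=D_{\bar L}(\beta,\bbeta)$ for every $\beta\in\Omega$, so the hypothesis of Corollary~\ref{cor:dual_certificate-oracle} holds trivially, and I would invoke the simplified form of its conclusion recorded immediately below it, namely
\[
D_L(\hbeta,\obeta)+[R(\hbeta)-R_G(\hbeta)]\le D_L(\bbeta,\obeta)+D_L(\bbeta,Q_G).
\]
It then remains only to bound $D_L(\bbeta,Q_G)$.

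For that I would apply Theorem~\ref{thm:dual_certificate-error} with $L$ replaced by $\bar L_*$. The key observation is that $\bar L_*$ differs from $L$ by a linear term, hence has the same constant Hessian $H$; consequently $D_{\bar L_*}=D_L$, $D_{\bar L_*}^s=2D_L$, and $\nabla\bar L_*(\bbeta)=\nabla L(\obeta)$. Feeding these into Definition~\ref{def:RSC} shows that $\gamma_{\bar L_*}(\bbeta;r,G,\|\cdot\|)$ is exactly the constant $\gamma_{\bar L_*}(\bbeta;\infty,G,\|\cdot\|)$ displayed in the statement, and --- because for a quadratic loss the ratio $D_{\bar L_*}^s(\beta,\bbeta)/\|\beta-\bbeta\|^2$ depends only on the direction of $\beta-\bbeta$, while every direction meeting the cone constraint meets it for all sufficiently small scalar multiples --- this constant is independent of $r$. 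Writing $\gamma$ for this common value: if $\gamma=0$ the right-hand side of the asserted inequality is $+\infty$ and there is nothing to prove, so assume $\gamma>0$; then $\Delta_r=\gamma^{-1}\inf_{u\in G}\|u+\nabla L(\obeta)\|_D$ is finite and independent of $r$, and choosing any $r>\Delta_r$ makes the hypothesis $\Delta_r<r$ of Theorem~\ref{thm:dual_certificate-error} valid. That theorem gives $D_{\bar L_*}^s(\bbeta,Q_G)\le\gamma\Delta_r^2$, whence, using $D_L(\bbeta,Q_G)=D_{\bar L_*}(\bbeta,Q_G)=\tfrac12 D_{\bar L_*}^s(\bbeta,Q_G)$,
\[
D_L(\bbeta,Q_G)\le\tfrac12\gamma\Delta_r^2=\frac{1}{2\gamma}\Big(\inf_{u\in G}\|u+\nabla L(\obeta)\|_D\Big)^2,
\]
and substituting into the displayed oracle inequality finishes the proof.

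The only genuinely delicate point is the factor of two: the general Corollary~\ref{cor:recovery-global-dc-oracle} loses it by bounding $D_{\bar L_*}(\bbeta,Q_G)$ crudely by $D_{\bar L_*}^s(\bbeta,Q_G)$, so I must route through Theorem~\ref{thm:dual_certificate-error} directly rather than quote that corollary verbatim, and keep carefully aligned the two incarnations of ``$\gamma_{\bar L_*}$'' (one written with $D_{\bar L_*}^s$ in the numerator, one with $2D_L$), which coincide here. A secondary, routine item is the finiteness of the minimizer $Q_G$, handled exactly as in the Remark following Theorem~\ref{thm:dual_certificate-error} --- perturb $L$ by a small strongly convex term and pass to the limit.
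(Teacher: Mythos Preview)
Your proposal is correct and follows essentially the same route the paper implicitly takes: specialize Corollary~\ref{cor:recovery-global-dc-oracle} (equivalently, combine Corollary~\ref{cor:dual_certificate-oracle} with Theorem~\ref{thm:dual_certificate-error} applied to $\bar L_*$) with $\bar L=L$, use the quadratic symmetry $D_L^s=2D_L$ to recover the factor of two, and note that the constant-Hessian property makes the RSC constant independent of $r$. You have in fact been more careful than the paper in spelling out why $r=\infty$ is legitimate and in handling the degenerate case $\gamma=0$.
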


\subsection{Quadratic Loss with Gaussian Random Design Matrix}
\label{sec:gordon}
While in the general case, the estimation of $\gamma_{\bar{L}_*}(\bbeta;r,G,\|\cdot\|)$ may be technically involved, 
for the special application of compressed sensing with Gaussian random design matrix and quadratic loss, 
we can obtain a relatively general and simple bound using 
Gordon's minimum restricted singular value estimation in \cite{Gordon88}. This section describes the underlying idea.

In this section, we consider the quadratic loss function
\begin{equation}
L(\beta) = \|X \beta - Y\|_2^2 ,
\label{eq:gaussian-design}
\end{equation}
where $\beta \in \Real^p, Y \in \Real^n$, and $X$ is an $n \times p$ matrix with iid Gaussian entries $N(0,1)$. 
Here $\innerprod{\cdot}{\cdot}$ is the Euclidean dot product in $\Real^p$:
$\innerprod{u}{v}= u^\top v$ for $u,v \in \Real^p$.
\begin{definition}[Gaussian Width]
  Given any set $\cC \subset \Real^p$, we define its Gaussian width as
  \[
  \gw(\cC) = \rE_{\epsilon} \sup_{z \in \cC; \|z\|_2=1} \epsilon^\top z ,
  \]
  where $\epsilon \sim N(0, I_{p \times p})$ and $\rE_{\epsilon}$ is the expectation with respect to $\epsilon$.
\end{definition}

The following estimation of Gaussian width is based on a similar computational technique used in \cite{ChRePaWi10}.
\begin{proposition}\label{prop:gw}
  Let 
 $\cC=\left\{\beta \in \Real^p : \sup_{u \in G} \innerprod{u+\nabla L(\obeta)}{\beta} \leq 0 \right\}$ 
 and $\epsilon \sim N(0,I_{p \times p})$. Then,  
 \[
 \gw(\cC) \leq \rE_{\epsilon} \inf_{u \in G; \gamma >0} \|\gamma(u+\nabla L(\obeta)) - \epsilon\|_2. 
 \]
\end{proposition}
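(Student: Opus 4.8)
The plan is to bound the Gaussian width by exploiting the duality between the cone $\cC$ and the set $G + \nabla L(\obeta)$. The starting point is the standard polarity fact: for a convex cone $\cC$, a vector $z$ lies in $\cC$ if and only if $\innerprod{w}{z} \le 0$ for all $w$ in the polar cone $\cC^\circ$; here, because $\cC$ is defined by the family of linear inequalities $\innerprod{u + \nabla L(\obeta)}{\beta} \le 0$ indexed by $u \in G$, the polar cone is the closed conic hull of $G + \nabla L(\obeta)$, i.e. $\cC^\circ = \overline{\{\gamma(u + \nabla L(\obeta)) : u \in G, \gamma \ge 0\}}$ (using that $G$ is closed and convex). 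The first step, then, is to record this description of $\cC^\circ$ and observe that for any fixed $w \in \cC^\circ$,
\[
\sup_{z \in \cC;\, \|z\|_2 = 1} \epsilon^\top z \;=\; \sup_{z \in \cC;\, \|z\|_2 = 1} (\epsilon - w)^\top z \;\le\; \|\epsilon - w\|_2,
\]
where the first equality holds only after we are careful: $w^\top z \le 0$ on $\cC$ but we actually want equality in a supremum sense. The cleaner route is simply the inequality $\sup_{z \in \cC, \|z\|_2=1} \epsilon^\top z \le \sup_{z \in \cC, \|z\|_2 = 1}\big[(\epsilon - w)^\top z + w^\top z\big] \le \sup_{\|z\|_2=1}(\epsilon-w)^\top z + 0 = \|\epsilon - w\|_2$, valid because $w^\top z \le 0$ for every $z \in \cC$.

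Next I would take the infimum over $w$ in the generating set of $\cC^\circ$, namely over $w = \gamma(u + \nabla L(\obeta))$ with $u \in G$ and $\gamma > 0$ (the case $\gamma = 0$ only enlarges the infimand, so it may be dropped, and taking the infimum over the generators rather than all of $\cC^\circ$ is legitimate since it only makes the bound larger — we need an upper bound on $\gw(\cC)$, so we want the infimum over as large a set as convenient, and the generating rays suffice). This gives, pointwise in $\epsilon$,
\[
\sup_{z \in \cC;\, \|z\|_2 = 1} \epsilon^\top z \;\le\; \inf_{u \in G;\, \gamma > 0} \|\gamma(u + \nabla L(\obeta)) - \epsilon\|_2.
\]
Finally, taking $\rE_\epsilon$ of both sides and using the definition of $\gw(\cC)$ yields the claimed inequality. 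One should note that the expectation of the infimum on the right is what appears in the statement (not the infimum of the expectation), so no exchange of $\rE$ and $\inf$ is needed — the pointwise bound integrates directly.

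The main technical point to get right — the only place where something could go wrong — is the identification of the polar cone of $\cC$ with the conic hull of $G + \nabla L(\obeta)$, and in particular whether one needs the closure. In fact the argument above never uses the polar cone description in full: it only uses the trivial direction that every $w = \gamma(u + \nabla L(\obeta))$ satisfies $w^\top z \le 0$ on $\cC$, which is immediate from the defining inequalities of $\cC$ and $\gamma > 0$. So the proof is genuinely elementary and closure issues are irrelevant for the direction we need; the polarity language is just motivation. The one caveat worth a sentence in the write-up is that $\cC$ is nonempty (it contains $0$) so the suprema are over nonempty sets, and if $\{z \in \cC : \|z\|_2 = 1\}$ happens to be empty the left side is $-\infty$ or the convention makes it $0$, and the inequality holds trivially either way.
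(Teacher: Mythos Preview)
Your proposal is correct and follows essentially the same approach as the paper: both argue that for any $z\in\cC$ with $\|z\|_2=1$ and any $w=\gamma(u+\nabla L(\obeta))$ one has $\epsilon^\top z = (\epsilon-w)^\top z + w^\top z \le \|\epsilon-w\|_2$, then take the infimum over $u,\gamma$ and the expectation over $\epsilon$. Your additional remarks on polar cones and edge cases are sound but, as you note yourself, not needed for the inequality.
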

\begin{proof}
  For all $\beta \in \cC$ and $\|\beta\|_2=1$, $\gamma \geq 0$, and $u \in G$,
  let $g= (u+\nabla L(\obeta))$. We have
  $\innerprod{g}{\beta}=\innerprod{u + \nabla L(\obeta)}{\beta} \leq 0$.
  Therefore, 
$\epsilon^\top \beta = 
(\epsilon- \gamma g)^\top \beta
+\gamma g^\top \beta
\leq (\epsilon- \gamma g)^\top \beta \leq \|\epsilon- \gamma g\|_2$. 
Since $u$ is arbitrary, we have
\[
\epsilon^\top \beta \leq \inf_{u \in G; \gamma >0} \|\gamma(u+\nabla L(\obeta)) - \epsilon\|_2 .
\]
Taking expectation with respect to $\epsilon$, we obtain the desired result.
\end{proof}

Gaussian width is useful when we apply Gordon's restricted singular value estimates, 
which give the following result.
\begin{theorem}
\label{thm:gordon}
Let $f_{\min}(X)=\min_{z \in \cC; \|z\|_2=1} \|X z\|_2$ and $f_{\max}(X)=\max_{z \in \cC; \|z\|_2=1} \|X z\|_2$. Let $\lambda_n=\sqrt{2} \Gamma((n+1)/2)/\Gamma(n/2)$ where $\Gamma(\cdot)$ is the $\Gamma$-function. We have for any $\delta >0$:
\[
\rP \left[ f_{\min}(X) \leq \lambda_n - \gw(\cC) - \delta \right]
\leq \rP[ N(0,1)>\delta ] \le 0.5 \exp \left(-\delta^2/2 \right) ,
\]
\[
\rP \left[ f_{\max}(X) \geq \lambda_n + \gw(\cC) + \delta \right]
\leq \rP[ N(0,1)>\delta ] \le 0.5 \exp \left(-\delta^2/2 \right). 
\]
\end{theorem}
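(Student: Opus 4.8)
\textbf{Proof proposal for Theorem~\ref{thm:gordon}.}

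The plan is to deduce both inequalities directly from Gordon's comparison inequality for Gaussian processes (Gordon's min-max theorem, \cite{Gordon88}), which is the standard route for obtaining sharp restricted-singular-value bounds. First I would observe that $f_{\min}(X)$ and $f_{\max}(X)$ are the min and max over the compact set $\mathcal S = \{z\in\cC : \|z\|_2=1\}$ of the Gaussian process $Z_z = \|Xz\|_2 = \max_{\|w\|_2=1} w^\top X z$. Writing $X$ as a matrix of iid $N(0,1)$ entries, the array $\{w^\top X z\}$ indexed by $(z,w)$ on the product of spheres is a centered Gaussian process whose covariance structure depends only on the inner products $\innerprod{z}{z'}$ and $\innerprod{w}{w'}$; this is exactly the setting in which Gordon's inequality compares it with the simpler process $g^\top z + h^\top w$ (with $g\in\Real^p$, $h\in\Real^n$ independent standard Gaussians). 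The key move is therefore to instantiate Gordon's bound so that the comparison process splits, producing the term $\rE\min_{\|w\|_2=1} h^\top w = \rE\|h\|_2 = \lambda_n$ from the $w$-sphere and the Gaussian-width term $\gw(\cC) = \rE\sup_{z\in\mathcal S}\epsilon^\top z$ from the restricted set $\mathcal S$.

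Concretely, I would argue in two steps. Step one: apply Gordon's comparison inequality in expectation to get
\[
\rE f_{\min}(X) = \rE \min_{z\in\mathcal S}\|Xz\|_2 \;\ge\; \rE\|h\|_2 - \rE\sup_{z\in\mathcal S}\epsilon^\top z \;=\; \lambda_n - \gw(\cC),
\]
and symmetrically $\rE f_{\max}(X) \le \lambda_n + \gw(\cC)$; here $\rE\|h\|_2 = \sqrt 2\,\Gamma((n+1)/2)/\Gamma(n/2) = \lambda_n$ is the exact mean of a $\chi_n$ variate. Step two: upgrade from expectation to a tail bound via Gaussian concentration of Lipschitz functions. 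The maps $X\mapsto f_{\min}(X)$ and $X\mapsto f_{\max}(X)$ are both $1$-Lipschitz in the Frobenius norm on $X$ (since $\big|\,\|Xz\|_2 - \|X'z\|_2\,\big|\le \|(X-X')z\|_2 \le \|X-X'\|_F$ uniformly over $\|z\|_2=1$, and $\min$/$\max$ preserve the Lipschitz constant), so the Borell--TIS / Gaussian concentration inequality gives $\rP[f_{\min}(X) \le \rE f_{\min}(X) - \delta] \le \rP[N(0,1) > \delta]$ and likewise for $f_{\max}$. Combining the two steps and using $\rP[N(0,1)>\delta] \le \tfrac12 e^{-\delta^2/2}$ yields the stated bounds.

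I expect the main obstacle to be the precise form in which Gordon's inequality is invoked: one must be careful that the comparison yields the clean separated constant $\lambda_n = \rE\|h\|_2$ rather than a looser bound, and that the direction of the inequality (a lower bound for the min, an upper bound for the max) matches Gordon's min-max statement. This is where the specific structure of the Gaussian matrix process — covariance factorizing through the two spheres — is essential, and it is the only nontrivial input; the concentration step is routine. A minor technical point worth noting is compactness of $\mathcal S$: since $\cC$ is a closed cone (intersection of halfspaces through the origin), $\mathcal S$ is closed and bounded, so the min and max are attained and Gordon's inequality applies without a limiting argument. One could alternatively cite the version of Gordon's theorem already packaged for restricted singular values as in \cite{ChRePaWi10}, in which case the proof reduces to checking that our $\gw(\cC)$ and $\lambda_n$ coincide with their normalization.
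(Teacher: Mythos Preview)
Your proposal is correct and matches the paper's proof essentially line for line: the paper also invokes Gaussian concentration for the $1$-Lipschitz functions $f_{\min}$ and $f_{\max}$ (citing \cite{Borell75,Pisier85}) and then Corollary~1.2 of \cite{Gordon88} to obtain the expectation bounds $\rE f_{\min}(X)\ge\lambda_n-\gw(\cC)$ and $\rE f_{\max}(X)\le\lambda_n+\gw(\cC)$. The only cosmetic difference is that the paper states the concentration step first and the Gordon step second.
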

\begin{proof}
  Since both $f_{\min}(X)$ and $f_{\max}(X)$ are Lipschitz-1 functions with respect to the Frobenius norm of $X$. 
  We may apply the Gaussian concentration bound \cite{Borell75,Pisier85} to obtain:
\[
\rP \left[ f_{\min}(X) \leq \rE [f_{\min}(X)] - \delta \right]
\leq \rP[ N(0,1)>\delta ],
\]
\[
\rP \left[ f_{\max}(X) \geq \rE [f_{\max}(X)] + \delta \right]
\leq \rP[ N(0,1)>\delta ].
\]
Now we may apply Corollary 1.2 of \cite{Gordon88} to obtain the estimates
\[
\rE [f_{\min}(X)] \geq \lambda_n - \gw(\cC) , \qquad
\rE [f_{\max}(X)] \leq  \lambda_n + \gw(\cC) ,
\]
which proves the theorem.
\end{proof}

Note that we have $n/\sqrt{n+1} \leq \lambda_n \leq \sqrt{n}$.
Therefore we may replace $\lambda_n-\gw(\cC)$ by $n/\sqrt{n+1}-\gw(\cC)$ and
$\lambda_n + \gw(\cC)$ by $\sqrt{n}+\gw(\cC)$.
By combining Theorem~\ref{thm:gordon} and Proposition~\ref{prop:gw} to estimate $\gamma_{\bar{L}_*}(\cdot)$ in 
Corollary~\ref{cor:recovery-global-dc-quadratic}, we obtain the following result for Gaussian random projection in compressed sensing. The result improves the main ideas of \cite{ChRePaWi10}.
\begin{theorem}\label{thm:recovery-gaussian}
  Let $L(\beta)$ be given by (\ref{eq:gaussian-design}) and $\epsilon\sim N(0,I_{p \times p})$. 
  Suppose the conditions of Theorem~\ref{thm:dual_certificate-error} hold. 
  Then, given any $g,\delta \geq 0$ such that $g+\delta \leq n/\sqrt{n+1}$, with probability at least
  \[
  1 - \frac{1}{2}\exp \left(-\frac{1}{2} (n/\sqrt{n+1}-g-\delta)^2\right) ,
  \]
  we have either 
  \[
  \|X(\hbeta-\obeta)\|_2^2 + [R(\hbeta)- R_G(\hbeta) ] \leq 
  \|X(\bbeta-\obeta)\|_2^2 + 
  (4\delta)^{-1}\inf_{u  \in G} \|u+\nabla L(\obeta)\|_2^2 ,
  \]
  or 
  \[
  g < \rE_{\epsilon} \inf_{u \in G; \gamma >0} \|\gamma(u+\nabla L(\obeta)) - \epsilon\|_2 .
  \]
\end{theorem}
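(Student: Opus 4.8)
The plan is to obtain the statement by feeding Gordon's restricted minimum singular value bound (Theorem~\ref{thm:gordon}) and the Gaussian-width estimate (Proposition~\ref{prop:gw}) into the quadratic-loss oracle inequality of Corollary~\ref{cor:recovery-global-dc-quadratic}. First I would note that $L(\beta)=\|X\beta-Y\|_2^2$ is a quadratic loss of the form (\ref{eq:quadratic-loss}) with $H=X^\top X$, so that $D_L(\beta,\beta')=\|X(\beta-\beta')\|_2^2$; taking $\|\cdot\|$ in $\Omegabar=\Real^p$ to be the (self-dual) Euclidean norm, Corollary~\ref{cor:recovery-global-dc-quadratic} already gives
\[
\|X(\hbeta-\obeta)\|_2^2 + [R(\hbeta)-R_G(\hbeta)] \le \|X(\bbeta-\obeta)\|_2^2 + \bigl(2\gamma_{\bar{L}_*}(\bbeta;\infty,G,\|\cdot\|_2)\bigr)^{-1}\inf_{u\in G}\|u+\nabla L(\obeta)\|_2^2 .
\]
Hence everything reduces to a high-probability lower bound on $\gamma_{\bar{L}_*}(\bbeta;\infty,G,\|\cdot\|_2)$, unless we are in the second alternative.

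Second, I would lower-bound $\gamma_{\bar{L}_*}$ by a restricted minimum singular value of $X$. Writing $z=\beta-\bbeta$, the side constraint $2\|Xz\|_2^2+\sup_{u\in G}\innerprod{u+\nabla L(\obeta)}{z}\le 0$ appearing in the definition of $\gamma_{\bar{L}_*}$ forces $\sup_{u\in G}\innerprod{u+\nabla L(\obeta)}{z}\le 0$, i.e.\ $z\in\cC$ with $\cC$ the cone of Proposition~\ref{prop:gw}. Therefore the infimum defining $\gamma_{\bar{L}_*}$ is taken over a subset of $\cC$, and
\[
\gamma_{\bar{L}_*}(\bbeta;\infty,G,\|\cdot\|_2) \ge 2\inf_{z\in\cC,\ \|z\|_2=1}\|Xz\|_2^2 = 2\,f_{\min}(X)^2 .
\]
It then only remains to bound $f_{\min}(X)$ from below.

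Third, I would dispose of the dichotomy and invoke Gordon. If $g<\rE_{\epsilon}\inf_{u\in G;\gamma>0}\|\gamma(u+\nabla L(\obeta))-\epsilon\|_2$, the second alternative holds deterministically and there is nothing more to prove. Otherwise Proposition~\ref{prop:gw} gives $\gw(\cC)\le\rE_{\epsilon}\inf_{u\in G;\gamma>0}\|\gamma(u+\nabla L(\obeta))-\epsilon\|_2\le g$. Now apply Theorem~\ref{thm:gordon} with slack $\delta'=n/\sqrt{n+1}-g-\delta\ge 0$ and use $\lambda_n\ge n/\sqrt{n+1}$ together with $\gw(\cC)\le g$: with probability at least $1-\frac12\exp(-\frac12(n/\sqrt{n+1}-g-\delta)^2)$ we get $f_{\min}(X)\ge\lambda_n-\gw(\cC)-\delta'\ge n/\sqrt{n+1}-g-\delta'=\delta$. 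On this event $\gamma_{\bar{L}_*}(\bbeta;\infty,G,\|\cdot\|_2)\ge 2\delta^2$, so $(2\gamma_{\bar{L}_*})^{-1}\le(4\delta^2)^{-1}$, and plugging into the displayed consequence of Corollary~\ref{cor:recovery-global-dc-quadratic} yields the first alternative. (One should also check that the hypotheses of Theorem~\ref{thm:dual_certificate-error}, invoked through Corollary~\ref{cor:recovery-global-dc-quadratic}, hold on this event; with $r=\infty$ this is automatic once $f_{\min}(X)>0$.)

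The proof involves no genuinely hard step; it is mainly a matter of matching quantities across the three results. The one point that needs care is the order of operations in the dichotomy: Gordon's lower bound $\lambda_n-\gw(\cC)-\delta'$ is useful only once $\gw(\cC)$ has been replaced by something we can control, which is exactly why the case $g<\rE_{\epsilon}\inf_{u\in G;\gamma>0}\|\gamma(u+\nabla L(\obeta))-\epsilon\|_2$ — where $g$ is too small to dominate the Gaussian width of $\cC$ — must be separated out before Theorem~\ref{thm:gordon} is applied.
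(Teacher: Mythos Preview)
Your approach is essentially identical to the paper's: apply Corollary~\ref{cor:recovery-global-dc-quadratic} with the Euclidean norm, lower-bound $\gamma_{\bar{L}_*}$ by the restricted minimum singular value over the cone $\cC$, separate out the case where $g$ fails to dominate the Gaussian-width bound of Proposition~\ref{prop:gw}, and in the remaining case invoke Theorem~\ref{thm:gordon}. One minor point: your computation gives the constant $(4\delta^2)^{-1}$ rather than the $(4\delta)^{-1}$ in the stated theorem, and your version is in fact what the argument yields (the paper's own applications in Section~6.1 also use $(4\delta^2)^{-1}$), so the discrepancy reflects a typo in the statement rather than a gap in your reasoning.
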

\begin{proof}
  Let $\|\cdot\|=\|\cdot\|_D=\|\cdot\|_2$ in Corollary~\ref{cor:recovery-global-dc-quadratic}. 
  We simply note that $\gamma_{\bar{L}_*}(\bbeta;\infty,G,\|\cdot\|_2)$ is no smaller than 
  $\inf \{2\|X \beta\|_2 : \|\beta\|_2=1, \beta \in \cC\}$, where
  $\cC=\left\{\beta \in \Real^p : \sup_{u \in G} \innerprod{u+\nabla L(\obeta)}{\beta} \leq 0 \right\}$.
  Let $E_1$ be the event $g \geq \rE_{\epsilon} \inf_{u \in G; \gamma >0} \|\gamma(u+\nabla L(\obeta)) - \epsilon\|_2$. In the event $E_1$, Proposition~\ref{prop:gw} implies $g \geq \gw(\cC)$, 
  so that by Theorem~\ref{thm:gordon}
  \bes
  \rP\Big[\gamma_{\bar{L}_*}(\bbeta;\infty,G,\|\cdot\|_2)\le 2\delta \hbox{ and } E_1 \Big]
  \le \rP\Big[\inf \{\|X \beta\|_2 : \|\beta\|_2=1, \beta \in \cC\}\le \delta \Big| E_1 \Big]
  \le \frac{1}{2}e^{-(\lam_n-g-\delta)^2/2}. 
    \ees
The desired result thus follows from Corollary~\ref{cor:recovery-global-dc-quadratic}. 
\end{proof}

\begin{remark}
  If $Y= X \obeta + \epsilon$, with iid Gaussian noise $\epsilon \sim N(0,\sigma^2 I_{n \times n})$, then 
  the error bound in Theorem~\ref{thm:recovery-gaussian} depends on 
  $\inf_{u  \in G} \|u+\nabla L(\obeta)\|_2^2 = \inf_{u  \in G} \|u+2 X^\top X \epsilon\|_2^2 \approx 2n\sigma^2 \inf_{u  \in G} \|\gamma u+\epsilon\|_2^2$ when $X^\top X/n$ is near orthogonal, where $\gamma=0.5 \sigma^{-2}/n$.
  In comparison, under the noise free case $\sigma=0$ (and $\nabla L(\obeta)=0$), 
  the number of samples required in Gaussian random design is upper bounded by
  \[
  \rE_{\epsilon \sim N(0,I_{p \times p})} \inf_{u \in G} \|\gamma u+ \epsilon\|_2 
  \]
  for appropriate $\gamma$. The similarity of the two terms means that 
  it is expected that the error bound in oracle inequality and the number of
  samples required in Gaussian design are closely related.
\end{remark}

\subsection{Tangent Space Analysis} 
\label{sec:TRSC}

In some applications, the restricted strong convexity condition may not hold globally.
In this situation, one can further restrict the condition into a subspace $\cT$ of $\Omegabar$
call {\em tangent space} in the literature. 
We may regard tangent space as a generalization of the support set concept for sparse regression.
A more formal definition will be presented later in Section~\ref{sec:struct-tangent}.
In the current section, it can be motivated by considering the following decomposition of $G$:
\begin{equation}
G = \{u_0 + u_1 : u_0 \in G_0 \subset G, u_1 \in G_1\} ,
\label{eq:G-tangent-decomp}
\end{equation}
where $G_1$ is a convex set that contains zero.
Note that we can always take $G_0=G$ and $G_1=\{0\}$.
However, this is not an interesting decomposition.
This decomposition becomes useful when there exist $G_0$ and $G_1$ such that $G_0$ is small and $G_1$ is large. 
With this decomposition, we may define the tangent space as:
\[
\cT = \{\beta \in \Omegabar: \innerprod{u_1}{\beta}=0 \text{ for all } u_1 \in G_1 \} .
\]
For simple sparse regression with $\ell_1$ regularization, tangent space can be considered as the subspace
spanned by the nonzero coefficients of $\bbeta$ (that is, support of $\bbeta$).
Typically $\bbeta \in \cT$ (although this requirement is not essential).

With the above defined $\cT$, we may construct a tangent space dual certificate 
$Q_G^\cT$ given any $u_0 \in G_0$ as:
\begin{equation}
  Q_G^\cT= \bbeta + \Delta Q,
\quad \Delta Q = \arg\min_{\Delta \beta \in \cT} \left[ L(\bbeta+\Delta \beta) + \innerprod{u_0}{\Delta \beta} \right]  . \label{eq:dc-tangent-opt}
\end{equation}
Note that one may also define generalized dual tangent space certificate simply by working with
$\bar{L}_*(\beta)= \bar{L}(\beta) - \innerprod{\nabla \bar{L}(\bbeta)-\nabla L(\obeta)}{\beta-\bbeta}$ instead of $L(\beta)$.

The idea of tangent space analysis is to verify that the restricted dual certificate $Q_G^\cT$ is a dual certificate.
Note that to bound $D_L^s(\bbeta,Q_G^\cT)$,  we only need to assume restricted strong convexity inside $\cT$,
which is weaker than globally defined restricted convexity in Section~\ref{sec:RSC}.
The construction of $Q_G^\cT$ ensures that it satisfies the dual certificate definition in $\cT$ according to
Definition~\ref{def:primal-dual-certificate}, in that given any $\beta \in \cT: \innerprod{\nabla L(Q_G^\cT)-u_0}{\beta}=0$.
However, we still have to check that the condition (\ref{eq:dual-certificate}) 
holds for all $\beta \in \Omegabar$ to ensure that $Q_G=Q_G^\cT$
is a (globally defined) dual certificate. The sufficient condition is presented in the following proposition.
\begin{proposition}
Consider $Q_G^\cT$ in (\ref{eq:dc-tangent-opt}). If $-\nabla L(Q_G^\cT) -u_0 \in G_1$, then 
$Q_G=Q_G^\cT$ is a dual certificate that satisfies condition  (\ref{eq:dual-certificate}).
\end{proposition}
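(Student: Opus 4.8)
The plan is to verify Definition~\ref{def:primal-dual-certificate} with $\delta=0$ and $Q_G=Q_G^\cT$, i.e. to show $-\nabla L(Q_G^\cT)\in G$. The decomposition \eqref{eq:G-tangent-decomp} says any element of $G$ has the form $u_0+u_1$ with $u_0\in G_0$ and $u_1\in G_1$; so it suffices to exhibit such a pair with $-\nabla L(Q_G^\cT)=u_0+u_1$. The natural candidate is the $u_0\in G_0$ used in the construction \eqref{eq:dc-tangent-opt}, together with $u_1:=-\nabla L(Q_G^\cT)-u_0$, and the hypothesis of the proposition is precisely that this $u_1$ lies in $G_1$.

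First I would record the first-order optimality condition for $\Delta Q$ in \eqref{eq:dc-tangent-opt}: since $\Delta Q$ minimizes the convex function $\Delta\beta\mapsto L(\bbeta+\Delta\beta)+\innerprod{u_0}{\Delta\beta}$ over the subspace $\cT$, we get $\innerprod{\nabla L(Q_G^\cT)+u_0}{\beta}=0$ for all $\beta\in\cT$. (One should note, as in the Remark after Theorem~\ref{thm:dual_certificate-error}, that finiteness of the minimizer can be assumed WLOG by adding a vanishing strongly convex term; alternatively, one can take this optimality condition as the defining property of $Q_G^\cT$.) Then I would write $-\nabla L(Q_G^\cT)=u_0+u_1$ with $u_1$ as above, observe $u_1\in G_1$ by assumption, and conclude $-\nabla L(Q_G^\cT)\in G$ via \eqref{eq:G-tangent-decomp}. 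This establishes \eqref{eq:dual-certificate} with $\delta=0$, so $Q_G^\cT$ is an exact primal-dual certificate.

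There is no real obstacle here — the proof is essentially a one-line bookkeeping argument once the optimality condition for \eqref{eq:dc-tangent-opt} is written down, and the ``hard'' content (that $u_1$ actually lands in $G_1$, equivalently that the tangent-space construction does not overshoot outside $G$) has been pushed into the hypothesis of the proposition, to be checked case by case in applications. The only mildly delicate point to state cleanly is that the optimality condition gives a relation valid for \emph{all} $\beta\in\cT$ (hence the claim that $-\nabla L(Q_G^\cT)-u_0$ annihilates $\cT$ whenever $\cT$ is spanned by the relevant directions), but we do not even need that refinement: membership of $u_1$ in $G_1$ is assumed outright, and $G_1\subseteq$ (the "$u_1$-part" of) $G$, so the decomposition closes immediately.
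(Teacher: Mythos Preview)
Your proposal is correct and matches the paper's implicit reasoning: the proposition is stated without proof in the paper, precisely because it is the one-line bookkeeping you describe---write $-\nabla L(Q_G^\cT)=u_0+\big(-\nabla L(Q_G^\cT)-u_0\big)$ with $u_0\in G_0$ from the construction and the parenthesized term in $G_1$ by hypothesis, then invoke \eqref{eq:G-tangent-decomp}. Your additional remark on the first-order optimality condition over $\cT$ is correct context but, as you note, not needed for the conclusion since membership in $G_1$ is assumed outright.
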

Technically speaking, the tangent space dual certificate analysis is a generalization of the irrepresentable condition 
for $\ell_1$ support recovery \cite{ZhaoYu06}. However, we are interested in oracle inequality rather than support recovery, and
in such context the analysis presented in this section generalizes those of \cite{CandesPlan09,CandesPlan11}.

\begin{definition}[Restricted Strong Convexity in Tangent Space]  
Given a subspace $\cT$ that contains $\bbeta$,
we define the following quantity which we refer to as tangent space restricted strong convexity (TRSC) constant:
\[
 \gamma_L^\cT(\bbeta;r,G,\|\cdot\|)= \inf \left\{ \frac{D_L^s(\beta,\bbeta)}{\|\bbeta-\beta\|^2} : 
\|\beta-\bbeta\|\leq r; \beta-\bbeta \in \cT; 
D_L^s(\beta,\bbeta)+ \innerprod{u_0+\nabla L(\bbeta)}{\beta-\bbeta} \leq 0 \right\} ,
 \]
where $\|\cdot\|$ is a norm, $r>0$ and $G \subset \partial R(\bbeta)$.
\end{definition}

\begin{theorem}[Dual Certificate Error Bound in Tangent Space]
Let $\|\cdot\|_D$ and $\|\cdot\|$ be dual norms, and consider convex $G \subset \partial R(\bbeta)$
with the decomposition (\ref{eq:G-tangent-decomp}).
If $\inf_{u \in G} \|u+\nabla L(\bbeta)\|_D < r \cdot \gamma_L^\cT(\bbeta;r,G,\|\cdot\|)$ for some $r>0$, then
\[
D_L^s(\bbeta,Q_G^\cT) \leq (\gamma_L^\cT(\bbeta;r,G,\|\cdot\|))^{-1}\|u_0 + P_\cT \nabla L(\bbeta)\|_D^2 ,
\]
where $Q_G^\cT$ is given by  (\ref{eq:dc-tangent-opt}).
\label{thm:dual_certificate-tangent-error}
\end{theorem}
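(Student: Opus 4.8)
The plan is to mirror the proof of Theorem~\ref{thm:dual_certificate-error}, but restricting all perturbations to the tangent space $\cT$ and using $u_0 \in G_0$ in place of an arbitrary $u \in G$. First I would record the optimality condition for the constrained problem (\ref{eq:dc-tangent-opt}): since $\Delta Q$ minimizes $L(\bbeta + \Delta\beta) + \innerprod{u_0}{\Delta\beta}$ over $\Delta\beta \in \cT$, the first-order condition gives $\innerprod{\nabla L(Q_G^\cT) + u_0}{\beta - \bbeta} \geq 0$ for all $\beta$ with $\beta - \bbeta \in \cT$ (indeed $=0$ by taking $\pm$), equivalently $P_\cT(\nabla L(Q_G^\cT) + u_0) = 0$ where $P_\cT$ is the projection onto $\cT$ (in the Euclidean/Hilbert setting; in the Banach setting one works directly with the bilinear pairing against $\cT$). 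From this, exactly as in (\ref{eq:Q-RSC}), I get
\[
D_L^s(Q_G^\cT,\bbeta) = \innerprod{\nabla L(Q_G^\cT) - \nabla L(\bbeta)}{Q_G^\cT - \bbeta} = \innerprod{-u_0 - \nabla L(\bbeta)}{Q_G^\cT - \bbeta},
\]
using that $Q_G^\cT - \bbeta \in \cT$ and $\innerprod{\nabla L(Q_G^\cT) + u_0}{Q_G^\cT - \bbeta} = 0$.

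Second, I would verify the restricted cone condition in the TRSC definition for $Q_G^\cT$ (and for a shrunk version ${\tilde Q}_G^\cT = \bbeta + t(Q_G^\cT - \bbeta)$ with $t \in (0,1]$ chosen so that $\|{\tilde Q}_G^\cT - \bbeta\| \le r$, handled by the same $f(t) = D_L({\tilde Q}_G^\cT,\bbeta)$, $f'(t) \le f'(1)$ convexity argument as in Theorem~\ref{thm:dual_certificate-error}). The inequality $D_L^s(Q_G^\cT,\bbeta) + \innerprod{u_0 + \nabla L(\bbeta)}{Q_G^\cT - \bbeta} \le 0$ is immediate — it is actually an equality, $= 0$ — from the display above, and it scales correctly to ${\tilde Q}_G^\cT$. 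Since also ${\tilde Q}_G^\cT - \bbeta \in \cT$, the point ${\tilde Q}_G^\cT$ is admissible in the infimum defining $\gamma_L^\cT(\bbeta;r,G,\|\cdot\|)$, so
\[
\gamma_L^\cT(\bbeta;r,G,\|\cdot\|)\,\|{\tilde Q}_G^\cT - \bbeta\|^2 \le D_L^s({\tilde Q}_G^\cT,\bbeta) \le \|u_0 + \nabla L(\bbeta)\|_D'\,\|{\tilde Q}_G^\cT - \bbeta\|,
\]
where in the last step, because ${\tilde Q}_G^\cT - \bbeta \in \cT$, I can replace $\nabla L(\bbeta)$ by its tangential part $P_\cT \nabla L(\bbeta)$ — that is, $\innerprod{u_0 + \nabla L(\bbeta)}{{\tilde Q}_G^\cT - \bbeta} = \innerprod{u_0 + P_\cT\nabla L(\bbeta)}{{\tilde Q}_G^\cT - \bbeta} \le \|u_0 + P_\cT\nabla L(\bbeta)\|_D \|{\tilde Q}_G^\cT - \bbeta\|$ (assuming $u_0 \in \cT$, or more precisely bounding by the dual norm restricted to $\cT$). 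This gives $\gamma_L^\cT \|{\tilde Q}_G^\cT - \bbeta\| \le \|u_0 + P_\cT\nabla L(\bbeta)\|_D$.

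Third, I would close the localization loop: the hypothesis $\inf_{u \in G}\|u + \nabla L(\bbeta)\|_D < r\,\gamma_L^\cT$ — together with the choice of $u_0$ and the fact that $\|u_0 + P_\cT\nabla L(\bbeta)\|_D \le \|u_0 + \nabla L(\bbeta)\|_D$ and the freedom to take $u_0$ achieving (near) the infimum — forces $\|{\tilde Q}_G^\cT - \bbeta\| < r$, hence $t = 1$ and ${\tilde Q}_G^\cT = Q_G^\cT$. Plugging back into the equality for $D_L^s(Q_G^\cT,\bbeta)$ and using $\|Q_G^\cT - \bbeta\| \le (\gamma_L^\cT)^{-1}\|u_0 + P_\cT\nabla L(\bbeta)\|_D$ yields
\[
D_L^s(\bbeta,Q_G^\cT) \le \|u_0 + P_\cT\nabla L(\bbeta)\|_D \cdot \|Q_G^\cT - \bbeta\| \le (\gamma_L^\cT(\bbeta;r,G,\|\cdot\|))^{-1}\|u_0 + P_\cT\nabla L(\bbeta)\|_D^2,
\]
which is the claim.

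**Main obstacle.** The routine part is the convexity/shrinkage argument; it transfers verbatim from Theorem~\ref{thm:dual_certificate-error}. The delicate point is the bookkeeping around the projection $P_\cT$: one must be careful that $u_0 \in G_0 \subset G$ need not lie in $\cT$, so the quantity $\|u_0 + P_\cT\nabla L(\bbeta)\|_D$ in the conclusion is really the dual norm of the $\cT$-component of $u_0 + \nabla L(\bbeta)$ tested against directions in $\cT$ — and the step replacing $\nabla L(\bbeta)$ by $P_\cT \nabla L(\bbeta)$ relies on $Q_G^\cT - \bbeta \in \cT$ annihilating the $G_1$-directions (this is exactly the defining property of $\cT$). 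I would also need to connect the hypothesis, stated with $\inf_{u\in G}\|u + \nabla L(\bbeta)\|_D$, to the relevant $u_0$-dependent quantity; this works because the decomposition (\ref{eq:G-tangent-decomp}) lets any $u \in G$ near the infimum be written as $u_0 + u_1$, and $P_\cT$ kills $u_1 \in G_1$, so $\|u_0 + P_\cT\nabla L(\bbeta)\|_D \le \|P_\cT(u + \nabla L(\bbeta))\|_D \le \|u + \nabla L(\bbeta)\|_D$. In the separable Banach setting $P_\cT$ should be read as "the linear functional restricted to $\cT$", and all the norm manipulations above should be phrased via the pairing $\innerprod{\cdot}{\cdot}$ on $\Omegabar^* \times \cT$ rather than via an actual projection operator; I would make that explicit to keep the argument valid beyond the Hilbert case.
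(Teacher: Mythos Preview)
Your approach is correct and is exactly what the paper intends: the paper does not give an explicit proof of Theorem~\ref{thm:dual_certificate-tangent-error}, and the argument is meant to be a tangent-space version of the proof of Theorem~\ref{thm:dual_certificate-error}, which you reproduce accurately (the optimality condition for (\ref{eq:dc-tangent-opt}) yielding the equality $D_L^s(Q_G^\cT,\bbeta)=-\innerprod{u_0+\nabla L(\bbeta)}{Q_G^\cT-\bbeta}$, the convexity/shrinkage step via $f(t)=D_L(\tilde Q_G^\cT,\bbeta)$, and the TRSC admissibility of $\tilde Q_G^\cT$).

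One small correction in your localization step. The inequality ``$\|P_\cT(u+\nabla L(\bbeta))\|_D \le \|u+\nabla L(\bbeta)\|_D$'' that you invoke is \emph{not} valid for an arbitrary dual norm $\|\cdot\|_D$ (orthogonal projection need not be a contraction outside of $\ell_2$). The cleaner route---which you almost write down---avoids $P_\cT$ entirely here: since $\tilde Q_G^\cT-\bbeta\in\cT$ and $\innerprod{u_1}{\cdot}$ vanishes on $\cT$ for every $u_1\in G_1$, you may insert any $u_1$ for free and then take the infimum,
\[
D_L^s(\tilde Q_G^\cT,\bbeta) \le -\innerprod{u_0+u_1+\nabla L(\bbeta)}{\tilde Q_G^\cT-\bbeta}
\le \|u_0+u_1+\nabla L(\bbeta)\|_D\,\|\tilde Q_G^\cT-\bbeta\|.
\]
When $G_0=\{u_0\}$ (which is the case in all the paper's applications; cf.\ the structured-$\ell_1$ setting where $G_0=\{e_S(W)\}$, and which you tacitly assume when you write ``any $u\in G$ near the infimum be written as $u_0+u_1$''), this infimum over $u_1\in G_1$ equals $\inf_{u\in G}\|u+\nabla L(\bbeta)\|_D$, and the hypothesis gives $\|\tilde Q_G^\cT-\bbeta\|<r$, hence $t=1$. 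The final bound with $\|u_0+P_\cT\nabla L(\bbeta)\|_D$ then follows from your pairing argument on $\cT$, as you say.
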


If the condition $\inf_{u \in G} \|u+\nabla L(\bbeta)\|_D < r \cdot \gamma_L^\cT(\bbeta;r,G,\|\cdot\|)$ holds for some $r>0$, 
then
Theorem~\ref{thm:dual_certificate-tangent-error} implies that (\ref{eq:dc-tangent-opt}) has a finite solution. 
However, the bound using Theorem~\ref{thm:dual_certificate-tangent-error} may not be the sharpest possible. 
For specific problems, better bounds may be obtained using more refined estimates
(for example, in \cite{HsKaTz11-robust}).
If $Q_G^\cT$ is a globally defined dual certificate in that (\ref{eq:dual-certificate}) holds, then we immediately obtain 
results analogous to Corollary~\ref{cor:recovery-global-dc-error} and Corollary~\ref{cor:recovery-global-dc-quadratic}. 

Let $\bbeta_*$ be the target parameter in the sense that $\nabla L(\bbeta_*)$ is small.
If we want to apply Theorem~\ref{thm:dual_certificate-oracle}
in tangent space analysis, it may be convenient to consider the following choice of $\obeta$
instead of setting $\obeta$ to be the target $\bbeta_*$:
\begin{equation}
\obeta = \bbeta_* + \Delta \obeta , 
\qquad \Delta \obeta = \arg\min_{\Delta \beta \in \cT} L(\bbeta_* + \Delta \beta) .
\label{eq:target-opt}
\end{equation}
The advantage of this choice is that $\obeta$ is close to the target $\bbeta_*$, and thus $\nabla L(\obeta)$ is small.
Moreover,$\innerprod{\nabla L(\obeta)}{\beta}=0$ for all $\beta \in \cT$, which is convenient since it means
$\innerprod{\nabla \bar{L}_*(\bbeta)}{\beta}=0$ for all $\beta \in \cT$ with
$\bar{L}_*(\beta)= \bar{L}(\beta) - (\nabla \bar{L}(\bbeta)-\nabla L(\obeta))^\top (\beta-\bbeta)$.

For quadratic loss of (\ref{eq:quadratic-loss}), we have an analogy of Corollary~\ref{cor:recovery-global-dc-quadratic}.
Since $\innerprod{\cdot}{\cdot}$ becomes an inner product in a Hilbert space with $\Omegabar=\Omegabar^*$, 
we may further define the
orthogonal projection to $\cT$ as $P_\cT$ and to its orthogonal complements $\cT^\perp$ as $P_\cT^\perp$.
It is clear that in this case we also have $G_1 \subset \cT^\perp$.
\begin{corollary} \label{cor:recovery-tangent-dc-quadratic}
    Assume that $L(\beta)$ is a quadratic loss as in (\ref{eq:quadratic-loss}). 
    Consider convex $G \subset \partial R(\bbeta)$ with decomposition in (\ref{eq:G-tangent-decomp}).
    Consider $\obeta \in \Omegabar$ such that $2H\obeta-z=\tilde{a}+\tilde{b}$ with $\tilde{a} \in \cT$
    and $\tilde{b} \in \cT^\perp$. 
    Assume $H_\cT$, the restriction of $H$ to $\cT$, is invertible.
    If $u_0 \in \cT$, then let
    \[
    \Delta Q = - 0.5 H_\cT^{-1} (u_0+\tilde{a}) =
    \arg\min_{\Delta \beta \in \cT} \left[ \innerprod{H \Delta \beta}{\Delta \beta} + \innerprod{u_0+\tilde{a}}{\Delta \beta} \right] .
    \]
    If $P_\cT^\perp H H_\cT^{-1} u_0 - \tilde{b} \in G_1$, then
    \[
    D_L(\hbeta,\obeta)+ [R(\hbeta)- R_G(\hbeta) ] \leq 
    D_L(\bbeta,\obeta)+ 0.25 \innerprod{u_0+\tilde{a}}{H_\cT^{-1} (u_0+\tilde{a})} .
   \]
\end{corollary}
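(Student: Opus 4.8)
The plan is to build the generalized tangent-space dual certificate $Q_G^\cT$ explicitly, verify that it is in fact a \emph{global} exact generalized dual certificate in the sense of Definition~\ref{def:primal-dual-certificate-2}, evaluate $D_{\bar{L}_*}(\bbeta,Q_G^\cT)$ in closed form, and then quote Corollary~\ref{cor:dual_certificate-oracle}. First I would record the quadratic-loss simplifications: with $L(\beta)=\innerprod{H\beta-z}{\beta}$ we have $\nabla L(\beta)=2H\beta-z$ and $D_L(a,b)=\innerprod{H(a-b)}{a-b}$, which is symmetric in $a,b$; hence taking $\bar{L}=L$ the hypothesis $D_L(\bbeta,\beta)\ge D_{\bar{L}}(\beta,\bbeta)$ required by Corollary~\ref{cor:dual_certificate-oracle} holds with equality. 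Since $\bar{L}_*(\beta)-\bar{L}(\beta)$ is linear, $D_{\bar{L}_*}(\bbeta,Q_G^\cT)=D_L(\bbeta,Q_G^\cT)$, and $\nabla\bar{L}_*(\beta)=\nabla L(\beta)-\nabla L(\bbeta)+\nabla L(\obeta)=2H(\beta-\bbeta)+(2H\obeta-z)=2H(\beta-\bbeta)+\tilde a+\tilde b$.

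Next I would carry out the construction (\ref{eq:dc-tangent-opt}) with $\bar{L}_*$ in place of $L$: expanding $\bar{L}_*(\bbeta+\Delta\beta)+\innerprod{u_0}{\Delta\beta}$ for $\Delta\beta\in\cT$, using self-adjointness of $H$ and $\innerprod{\tilde b}{\Delta\beta}=0$ (as $\tilde b\in\cT^\perp$), the objective reduces up to the constant $L(\bbeta)$ to $\innerprod{H\Delta\beta}{\Delta\beta}+\innerprod{u_0+\tilde a}{\Delta\beta}$. Because $H_\cT$ is invertible and $u_0+\tilde a\in\cT$, its minimizer over $\cT$ is $\Delta Q=-\tfrac12 H_\cT^{-1}(u_0+\tilde a)$, so $Q_G^\cT=\bbeta+\Delta Q$, matching the corollary. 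The Bregman term then follows in closed form: $D_{\bar{L}_*}(\bbeta,Q_G^\cT)=\innerprod{H\Delta Q}{\Delta Q}=\innerprod{H_\cT\Delta Q}{\Delta Q}=0.25\,\innerprod{u_0+\tilde a}{H_\cT^{-1}(u_0+\tilde a)}$, using $\Delta Q\in\cT$ and $H_\cT\Delta Q=-\tfrac12(u_0+\tilde a)$.

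The crucial and most delicate step is verifying that $Q_G=Q_G^\cT$ is a global generalized dual certificate, i.e.\ $-\nabla\bar{L}_*(Q_G^\cT)\in G$. I would compute $-\nabla\bar{L}_*(Q_G^\cT)=-2H\Delta Q-\tilde a-\tilde b=H H_\cT^{-1}(u_0+\tilde a)-\tilde a-\tilde b$ and split it along $\cT\oplus\cT^\perp$ via $P_\cT,P_\cT^\perp$. The $\cT$-component collapses to $P_\cT H H_\cT^{-1}(u_0+\tilde a)-\tilde a=H_\cT H_\cT^{-1}(u_0+\tilde a)-\tilde a=u_0\in G_0$, which also confirms the tangent-space dual certificate property. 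The $\cT^\perp$-component, after the reductions, is exactly the quantity that the stated hypothesis (the generalized irrepresentable-type condition $P_\cT^\perp H H_\cT^{-1}u_0-\tilde b\in G_1$) places in $G_1$; since in the quadratic Hilbert-space setting $u_0\in\cT$ and $G_1\subset\cT^\perp$, the decomposition (\ref{eq:G-tangent-decomp}) gives $-\nabla\bar{L}_*(Q_G^\cT)=u_0+(\text{element of }G_1)\in G$.

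Finally, applying Corollary~\ref{cor:dual_certificate-oracle} to this exact generalized dual certificate yields $D_L(\hbeta,\obeta)+[R(\hbeta)-R_G(\hbeta)]\le D_L(\bbeta,\obeta)+D_{\bar{L}_*}(\bbeta,Q_G^\cT)$, and substituting the closed-form Bregman term gives the claim. I expect the $\cT^\perp$-component check — handling the generalized irrepresentability condition and the bookkeeping with $P_\cT$, $P_\cT^\perp$ and $G_1\subset\cT^\perp$ — to be the main obstacle; the route through an explicit minimizer avoids invoking Theorem~\ref{thm:dual_certificate-tangent-error} and any RSC/TRSC estimate, so everything else is quadratic algebra.
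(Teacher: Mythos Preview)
Your approach is essentially identical to the paper's: set $Q_G=\bbeta+\Delta Q$, compute $-\nabla\bar{L}_*(Q_G)$ and split it along $\cT\oplus\cT^\perp$ to verify membership in $G=G_0+G_1$, evaluate $D_L(\bbeta,Q_G)=\innerprod{H\Delta Q}{\Delta Q}$ in closed form, and invoke the quadratic case of Corollary~\ref{cor:dual_certificate-oracle}. One small caveat on the word ``exactly'' in your $\cT^\perp$ step: the component you actually obtain is $P_\cT^\perp H H_\cT^{-1}(u_0+\tilde a)-\tilde b$, which carries an extra $P_\cT^\perp H H_\cT^{-1}\tilde a$ term relative to the stated hypothesis --- the paper's own computation ends at the same expression, so this is a discrepancy in the corollary's statement rather than a flaw in your argument.
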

\begin{proof}
  Let $Q_G=\bbeta + \Delta Q$,
  then $Q_G$ is a generalized dual certificate that satisfies condition  (\ref{eq:dual-certificate-2}) with
  $\bar{L}=L$.
  This is because 
  \begin{align*}
    -\nabla \bar{L}_*(Q_G) -u_0 =& -2 H \Delta Q- \tilde{a}-\tilde{b} - u_0 \\
    =& -2 H H_\cT^{-1}(u_0+\tilde{a})- \tilde{a}-\tilde{b} - u_0 \\
    =& - 2 P_\cT H H_\cT^{-1}(u_0+\tilde{a})- 2 P_\cT^\perp H H_\cT^{-1} (u_0 + \tilde{a}) - \tilde{a} - \tilde{b} - u_0 \\
    =&  P_\cT^\perp H H_\cT^{-1} (u_0+\tilde{a})  - \tilde{b}  \in G_1 .
\end{align*}
We thus have
\[
D_L(\hbeta,\obeta)+ [R(\hbeta)- R_G(\hbeta) ] \leq 
D_L(\bbeta,\obeta)+  D_L(\bbeta,Q_G) .
\]
Since $D_L(\bbeta,Q_G)= \innerprod{H \Delta Q}{\Delta Q}=0.25 \innerprod{u_0+\tilde{a}}{H_\cT^{-1} (u_0+\tilde{a})}$,
the desired bound follows.
\end{proof}

If $\obeta$ is given by (\ref{eq:target-opt}), then $\tilde{a}=0$, and Corollary~\ref{cor:recovery-tangent-dc-quadratic}
can be further simplified.

\section{Structured $\ell_1$ regularizer}
\label{sec:struct-L1}
This section introduces a generalization of $\ell_1$ regularization for which the calculations 
in the dual certificate analysis can be relatively easily performed. It should be noted that
the general theory of dual certificate developed earlier can be applied to other regularizers that
may not have the structured form presented here.

Recall that $\Omegabar$ is a Banach space containing $\Omega$, $\Omegabar^*$ is its dual, 
and $\innerprod{u}{\beta}$ denotes $u(\beta)$ for linear functionals $u\in \Omegabar^*$. 
Let $E_0$ be either a Euclidean (thus $\ell_1$) space of a fixed dimension or a countably infinite dimensional $\ell_1$ space. We write any $E_0$-valued quantity as $a=(a_1,a_2,\ldots)^\top$ and 
bounded linear functionals on $E_0$ as $w^\top a = \sum_j w_ja_j=\innerprod{w}{a}$, 
with $w=(w_1,w_2,\ldots)^\top \in\ell_\infty$. 
Let $\scrM$ be the space of all bounded linear maps from $\Omegabar$ to $E_0$. 

Let $\scrA$ be a class of linear mappings in $\scrM$. We may define a regularizer as follows:
\begin{equation}
R(\beta) = \|\beta\|_\scrA, \qquad \|\beta\|_\scrA= \sup_{A\in\scrA}\| A\beta\|_1. 
\label{eq:struct-L1}
\end{equation}
As a maximum of seminorms, the regularizer $\|\beta\|_\scrA$
is clearly a seminorm in $\{\beta: R(\beta)<\infty\}$. 
The choice of $\scrA$ is quite flexible. 
We allow $R(\cdot)$ to have a nontrivial kernel $\ker(R)=\cap_{A\in\scrA}\ker(A)$. 
Given the $\scrA$-norm $\|\cdot\|_\scrA$ on $\Omegabar$, we may define its dual norm on $\Omegabar^*$ as
\[
\|u\|_\scrAD = \sup \{ \innerprod{u}{\beta}: \|\beta\|_\scrA \leq 1\} .
\]
Since $\|\beta\|_\scrA$ may take zero-value even if $\beta \neq 0$;
this means that $\|u\|_\scrAD$ may take infinite value, which we will allow in the following discussions.

We call the class of regularizers defined in (\ref{eq:struct-L1}) structured-$\ell_1$ (or structured-Lasso) regularizers. 
This class of regularizers contain enough structure so that dual certificate analysis can be carried out in generality. 
In the following, we shall discuss various properties of structured $\ell_1$ regularizer
by generalizing the corresponding concepts of $\ell_1$ regularizer for sparse regression. 
This regularizer obviously includes vector $\ell_1$ penalty as a special case.
In addition, we give two more structured regularization examples to illustrate the general applicability
of this regularizer.

\begin{example}\label{example:group-lasso}  
Group $\ell_1$ penalty: Let $E_j$ be fixed Euclidean spaces, 
$X_j:\Omegabar\to E_j$ be fixed linear maps, $\lam_j$ be fixed positive numbers, 
and $\scrA = \big\{(v_1^\top X_1,v_2^\top X_2,\ldots)^\top: 
v_j\in E_j, \|v_j\|_2\le \lam_j\big\}$. Then, 
\bes
R(\beta) = \sup_{A\in \scrA} \|A\beta\|_1 = \hbox{$\sum_j \lam_j\|X_j\beta\|_2$.}
\ees
\end{example}

\begin{example}\label{example:weighted-nuc} 
Nuclear penalty: $\Omegabar$ contains matrices of a fixed dimension.
Let $s_j(\beta)\ge s_{j+1}(\beta)$ denote the singular values of matrix $\beta$ and 
$\scrA = \big\{A: A\beta = (w_j(U^\top \beta V)_{jj}, j\ge 1), 
U^\top U =I_r, V^\top V =I_r, r\ge 0, 0\le w_j\le \lam\big\}$. Then, 
the nuclear norm (or trace-norm) penalty for matrix $\beta$ is 
\bes
R(\beta) = \sup_{A\in \scrA} \|A\beta\|_1 = \lam \sum_j s_j(\beta). 
\ees
\end{example}

\subsection{Subdifferential}
We characterize the subdifferential of $R(\beta)$ by studying the maximum property of $\scrA$. 
A set $\scrA$ is the largest class to generate (\ref{eq:struct-L1}) if for any $A_0\in\scrM$, 
$\sup_{\beta\in\Omegabar}\{\|A_0\beta\|_1-R(\beta)\} = 0$ implies $A_0\in\scrA$. 
We also need to introduce additional notations.
\begin{definition}
Given any map $M\in\scrM$, define its dual map $M^*$ from $\ell_\infty$ to $\Omegabar^*$ 
as: $\forall w\in\ell_\infty$, $M^* w$ satisfies  $\innerprod{M^*w}{\beta} = w^\top(M\beta), \forall \beta \in \Omegabar$. 
Given any $w \in \ell_\infty$, define $w(\cdot)$ as a linear map from $\scrM \to \Omegabar^*$ 
as $w(M)=M^* w$.
We also denote by $\overline{w(\scrA)}$ the closure of $w(\scrA)$ in $\Omegabar^*$. 
\end{definition}

The purpose of this definition is to introduce $e \in \ell_\infty$ so that 
$R(\beta)$ can be written as 
\[
R(\beta)=\sup_{A\in\scrA}\innerprod{e(A)}{\beta} = \sup_{u \in e(\scrA)}\innerprod{u}{\beta} .
\]
In this regard, one only needs to specifiy $e(\scrA)$ although for various problems it is more convenient to
specify $\scrA$.
Using this simpler representation, we have 
the following result characterizes the sub-differentiable of structured $\ell_1$ regularizer.
\begin{proposition}\label{prop:struct-subdiff} 
Let $E_1=\{w=(w_1,w_2,\ldots)^\top \in \ell_\infty: |w_j|=1\ \forall\ j\}$ 
and $e=(1,1,...)\in E_1$. \\
(i) A set $\scrA$ is the largest class generating $R(\beta)$ iff the following conditions hold: 
(a) $w(\scrA)=e(\scrA)$ for all $w\in E_1$; 
(b) $\scrA$ is convex; 
(c) $\scrA = \cap_{w\in E_1}w^{-1}(\overline{e(\scrA)})$, 
where $w^{-1}$ is the set inverse function. \\
(ii) Suppose $\scrA$ satisfied condition (a) in part (i). Then, 
$R(\beta)=\sup_{A\in\scrA}\innerprod{e(A)}{\beta}$. \\
(iii) Suppose $\scrA$ satisfied conditions (a) and (b) in part (i). Then, for $R(\beta)<\infty$, 
\bes
\pa R(\beta) = \{u\in \overline{e(\scrA)}: A\in\scrA, \innerprod{u}{\beta} = R(\beta)\}.
\ees 
\end{proposition}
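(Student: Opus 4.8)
The plan is to exploit the representation $R(\beta)=\sup_{A\in\scrA}\|A\beta\|_1$ together with the observation that $\|A\beta\|_1=\sup_{w\in E_1}w^\top(A\beta)=\sup_{w\in E_1}\innerprod{e(A)}{\beta}$ only after we have understood how sign flips interact with $\scrA$. For part~(i), I would argue both directions. For the ``only if'' direction, suppose $\scrA$ is the largest class. Given $w\in E_1$ and $A\in\scrA$, let $D_w$ be the diagonal sign operator on $E_0$ so that $w(A)=e(D_wA)$; then $\|(D_wA)\beta\|_1=\|A\beta\|_1\le R(\beta)$ for all $\beta$, so $\sup_\beta\{\|(D_wA)\beta\|_1-R(\beta)\}=0$, whence $D_wA\in\scrA$ by maximality; this gives $w(\scrA)\subseteq e(\scrA)$, and applying the same with $w$ replaced by its coordinatewise reciprocal (itself in $E_1$) gives the reverse inclusion, proving~(a). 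Convexity~(b) follows because $\|A\beta\|_1$ is convex in $A$, so for $A_0,A_1\in\scrA$ and $A_t=(1-t)A_0+tA_1$ we get $\|A_t\beta\|_1\le(1-t)\|A_0\beta\|_1+t\|A_1\beta\|_1\le R(\beta)$, forcing $A_t\in\scrA$. For~(c), one inclusion is immediate from~(a) (every $A\in\scrA$ lands in $\overline{e(\scrA)}$ after any sign flip); the nontrivial inclusion says that if $w(A_0)\in\overline{e(\scrA)}$ for every $w\in E_1$ then $\|A_0\beta\|_1\le R(\beta)$ for all $\beta$ — here for fixed $\beta$ choose $w$ realizing $\|A_0\beta\|_1=w^\top(A_0\beta)=\innerprod{w(A_0)}{\beta}$, then approximate $w(A_0)$ by elements of $e(\scrA)$ in the weak topology on $\Omegabar^*$ to conclude $\innerprod{w(A_0)}{\beta}\le\sup_{A\in\scrA}\innerprod{e(A)}{\beta}\le R(\beta)$ (using part~(ii), proved first). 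The ``if'' direction reverses this: assuming (a)--(c) and given $A_0$ with $\sup_\beta\{\|A_0\beta\|_1-R(\beta)\}=0$, deduce $w(A_0)\in\overline{e(\scrA)}$ for all $w\in E_1$ by the same sign-flip-and-approximate argument, then invoke~(c).

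For part~(ii), under condition~(a) I would write
\[
R(\beta)=\sup_{A\in\scrA}\|A\beta\|_1
=\sup_{A\in\scrA}\sup_{w\in E_1}w^\top(A\beta)
=\sup_{A\in\scrA}\sup_{w\in E_1}\innerprod{w(A)}{\beta}
=\sup_{w\in E_1}\sup_{A\in\scrA}\innerprod{e(A)}{\beta},
\]
where the last step uses $w(\scrA)=e(\scrA)$; the inner supremum no longer depends on $w$, giving $R(\beta)=\sup_{A\in\scrA}\innerprod{e(A)}{\beta}$. (A mild point: $\|A\beta\|_1$ as a supremum over $w\in E_1$ of $w^\top(A\beta)$ requires $A\beta$ to have finite $\ell_1$ norm, which holds on $\{R(\beta)<\infty\}$; in the infinite-dimensional $\ell_1$ case one takes the supremum over finitely-supported sign sequences and passes to the limit.)

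For part~(iii), with (a) and (b) in force, $R(\beta)=\sup_{u\in e(\scrA)}\innerprod{u}{\beta}$ by~(ii), so $R$ is the support function of the set $e(\scrA)$, equivalently of its weak closure $\overline{e(\scrA)}$, which is convex by~(b). The subdifferential of a support function at a point $\beta$ where it is finite is the exposed face $\{u\in\overline{e(\scrA)}:\innerprod{u}{\beta}=R(\beta)\}$: indeed $u\in\pa R(\beta)$ means $R(\beta')\ge R(\beta)+\innerprod{u}{\beta'-\beta}$ for all $\beta'$; taking $\beta'=0$ and $\beta'=2\beta$ pins down $\innerprod{u}{\beta}=R(\beta)$, while the remaining inequality $R(\beta')\ge\innerprod{u}{\beta'}$ for all $\beta'$ says exactly $u\in\overline{e(\scrA)}$ (the bipolar/separation theorem for the closed convex set, using the weak topology on $\Omegabar^*$). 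Writing $u=e(A)$ with $A\in\scrA$ — or $u$ a weak limit of such, but one checks the face is attained within $e(\scrA)$ when needed — gives the stated formula; I would include the caveat that $\pa R(\beta)\subseteq\overline{e(\scrA)}$ and equality with $\{u\in\overline{e(\scrA)}:A\in\scrA,\innerprod{u}{\beta}=R(\beta)\}$ uses condition~(c) or a density argument to ensure the maximizing $u$ arises from an actual $A\in\scrA$.

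The main obstacle is the interplay between the closure $\overline{e(\scrA)}$ in the weak topology and the requirement in~(iii) that the subgradient be represented by an honest $A\in\scrA$ (not merely a weak limit of $e(A)$'s); handling this cleanly is exactly what condition~(c) — the fixed-point/set-inverse identity $\scrA=\cap_{w\in E_1}w^{-1}(\overline{e(\scrA)})$ — is designed to do, and verifying that the exposed face of $\overline{e(\scrA)}$ at $\beta$ actually meets $e(\scrA)$ is the delicate point. The secondary technical nuisance is the countably-infinite $\ell_1$ case, where $E_1$ is not compact and suprema over sign sequences must be controlled by truncation; but since $R(\beta)<\infty$ forces $A\beta\in\ell_1$, dominated-convergence-style arguments make this routine.
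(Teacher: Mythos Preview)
Your proposal is correct and follows essentially the same route as the paper: part~(ii) via the sign-flip identity $w(\scrA)=e(\scrA)$, part~(i) by checking that maximality is equivalent to closure under sign flips, convex combination, and the set-inverse identity~(c), and part~(iii) via the standard exposed-face description of the subdifferential of a support function (your $\beta'=0,2\beta$ trick is the paper's $b=t\beta$). The paper's proof is terser but structurally identical.

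One place you over-complicate: in part~(iii) you read the clause ``$A\in\scrA$'' in the displayed set as demanding that the subgradient be $e(A)$ for an \emph{honest} $A\in\scrA$ rather than merely a weak limit, and you then reach for condition~(c) or a density argument to secure this. The paper does not interpret the statement that way --- the ``$A\in\scrA$'' is a notational artifact, and its proof establishes only $\pa R(\beta)=\{u\in\overline{e(\scrA)}:\innerprod{u}{\beta}=R(\beta)\}$, using conditions~(a) and~(b) alone (exactly as the hypothesis of~(iii) says). Indeed, immediately after the proposition the authors remark ``for notational simplicity, we also assume $e(\scrA)=\overline{e(\scrA)}$'', confirming that the closure is the intended ambient set and that attainment by an honest $A$ is handled later by fiat, not by~(c). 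So your ``main obstacle'' is a phantom; drop that caveat and the argument for~(iii) goes through cleanly with just~(a) and~(b).
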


In what follows, we assume $\scrA$ satisfied conditions (a) and (b) in (i). For notational 
simplicity, we also assume $e(\scrA) = \overline{e(\scrA)}$, which holds in the 
finite-dimensional case for closed $\scrA$. This gives 
\bel{eq:struct-subdiff}
\pa R(\beta) = \{u\in e(\scrA): A\in\scrA, \innerprod{u}{\beta} = R(\beta)\}.
\eel
Condition (c) in part (i) is then nonessential as it allows 
permutation of elements in $A$. Condition (c) holds for the specified $\scrA$ in 
Example \ref{example:weighted-nuc} but not in Example \ref{example:group-lasso}. 

\begin{proof} We assume (a) since it is necessary for $\scrA$ to be maximal in part (i). 

(ii) Under (a), $\sup_{A\in\scrA}\innerprod{e(A)}{\beta}
= \sup_{w\in E_1,A\in\scrA}\innerprod{w(A)}{\beta}
=\sup_{A\in\scrA,w\in E_1}w^\top(A\beta)=R(\beta)$. 

(i) We assume (b) since it is necessary. It suffices to prove the equivalence 
between the following two conditions for each $A_0\in\scrM$: 
$\sup_{\beta\in\Omegabar}\{\|A_0\beta\|_1-R(\beta)\} = 0$ 
and $A_0 \in \cap_{w\in E_1}w^{-1}(\overline{e(\scrA)})$. 

Let $A_0\in \cap_{w\in E_1}w^{-1}(\overline{e(\scrA)})$. 
For any $\beta\in\Omegabar$, there exists $w_0\in E_1$ such that 
$\|A_0\beta\|_1=w_0^\top A_0\beta = \innerprod{w_0(A_0)}{\beta}$. Since 
$A_0\in w_0^{-1}(\overline{e(\scrA)})$, $w_0(A_0)$ is the weak 
limit of $e(A_k)$ for some $A_k\in A$. It follows that 
$\|A_0\beta\|_1=\innerprod{w_0(A_0)}{\beta} = \lim_k\innerprod{e(A_k)}{\beta}
= \lim_k e^\top A_k\beta\le R(\beta)$. Now, consider 
$A_0\not\in w_0^{-1}(\overline{e(\scrA)})$, so that $w_0(A_0)\not\in\overline{e(\scrA)}$. 
This implies the existence of $\beta\in\Omegabar$ 
with $\|A_0\beta\|_1\ge \innerprod{w_0(A_0)}{\beta} > 
\sup_{A\in\scrA}\innerprod{e(A)}{\beta} =R(\beta)$. 

(iii) If $R(\beta)= \innerprod{u}{\beta}$ with $u\in \overline{e(A)}$, then 
$R(b)-R(\beta) \ge \innerprod{u}{b} - \innerprod{u}{\beta} 
= \innerprod{u}{b-\beta}$ for all $b$, so that $u\in\pa R(\beta)$. 
Now, suppose $v\in\pa R(\beta)$, so that 
$R(b)-R(\beta)\ge \innerprod{v}{b-\beta}$ for all $b\in\Omegabar$. 
Since $R(b)$ is a seminorm, taking $b=t\beta$ yields $R(\beta)=\innerprod{v}{\beta}$. 
Moreover, $\innerprod{v}{b-\beta}\le R(b-\beta)$ implies $v\in\overline{e(A)}$. 
The proof is complete.  
\end{proof}

\subsection{Structured Sparsity}
\label{sec:struct-tangent}
An advantage of the structured $\ell_1$ regularizer, compared with a general seminorm, 
is to allow the following notion of {\em structured sparsity}. 
A vector $\bbeta$ is sparse in the structure $\scrA$ if 
\bel{eq:struct-sparse}
\exists W\in \scrA: \quad R(\bbeta) = \innerprod{e(W)}{\bbeta},\quad S = \supp(W\bbeta), 
\eel
for certain set $S$ of relatively small cardinality. This means a small 
structured $\ell_0$ ``norm'' $\|W\bbeta\|_0$. In 
Example \ref{example:weighted-nuc}, this means $\beta$ has low rank. 

Let $e_S$ be the 0-1 valued
 $\ell_\infty$ vector with 1 on $S$ and 0 elsewhere. 
If $A\in\scrA$ can be written as $A=(W_S^\top,B_{S^c}^\top)^\top$, 
then $\|A\bbeta\|_1=\|W_S\bbeta\|_1+\|B_{S^c}\bbeta\|_1\le R(\bbeta)$, which 
implies $\|B_{S^c}\bbeta\|_1=0$ by (\ref{eq:struct-sparse}).
By (\ref{eq:struct-subdiff}), 
$e(A)=e((W_S^\top,B_{S^c}^\top)^\top) =e_S(W) + e_{S^c}(B) \in \pa R(\bbeta)$. Thus, we may choose 
\bel{eq:structG}
G_{\scrB} = \big\{e_S(W) + e_{S^c}(B) , B_{S^c}\in \scrB \big\}
\subseteq \pa R(\bbeta)
\eel
for a certain class $\scrB \subseteq \{B_{S^c}: (W_S^\top,B_{S^c}^\top)^\top\in \scrA\}$. 

Now let $G=G_{\scrB}$.
Since members of $G$ can be written as $e_S(W)+e_{S^c}(B), B \in \scrB$, this gives a decomposition
of $G$ as in (\ref{eq:G-tangent-decomp}) with
$G_0=\{u_0\}=\{e_S(W)\}$ and $G_1=e_{S^c}(\scrB)$.

Since $B\bbeta=0$ for $B\in \scrB$, we have
\bes
R_G(\beta) = R(\bbeta) + \sup_{u\in G}\innerprod{u}{\beta-\bbeta}
= \innerprod{e_S(W)}{\beta} + \sup_{B\in\scrB}\innerprod{e_{S^c}(B)}{\beta}. 
\ees

Unless otherwise stated, we assume the following conditions on $\scrB$: 
(a) $w_{S^c}(\scrB)=e_{S^c}(\scrB)$ for all $w\in E_1$; 
(b) $\scrB$ is convex; (c) $e_{S^c}(\scrB)$ is closed in $\Omegabar^*$. 
This is always possible since they match the assumed conditions on $\scrA$. 
Under these conditions, Proposition \ref{prop:struct-subdiff} gives 
\[
\sup_{B\in\scrB}\innerprod{e_{S^c}(B)}{\beta}= \sup_{B\in\scrB}\|B\beta\|_1 = \|\beta\|_\scrB .
\]
It's dual norm can be defined on $\Omegabar^*$ as
\[
\|u\|_\scrBD=\sup \left\{\innerprod{u}{\beta} : \|\beta\|_\scrB \leq 1 \right\} .
\]
This leads to the following simplified expression:
\bel{eq:structR_G}
R_G(\beta) = R(\bbeta) + \sup_{u\in G}\innerprod{u}{\beta-\bbeta}
= \innerprod{e_S(W)}{\beta} + \|\beta\|_\scrB . 
\eel

Since $B\bbeta=0$ for all $B \in \scrB$, $\scrB$ may be used to represent
a generalization of the zero coefficients of $\bbeta$, while $W_S$ can be used to represent 
a generalization of the sign of $\bbeta$. 
The larger the class $\scrB$ is, the more zero-coefficients $\bbeta$ has (thus $\bbeta$ is sparser). 
One may always choose $\scrB=\emptyset$ when $\bbeta$ is not sparse.

\subsection{Tangent Space} 

Given a convex function $\phi(\beta)$ and a point $\bbeta\in\Omega$, 
$b\in\Omegabar$ is a primal tangent vector if $\phi(\bbeta + tb)$ 
is differentiable at $t=0$. This means the equality of the left- and right-derivatives 
of $\phi(\bbeta + tb)$ at $t=0$. If $\phi(\beta)$ is a seminorm and $\bbeta\neq 0$, 
$\phi(\bbeta+t\bbeta)=(1+t)\phi(\bbeta)$ for all $|t|<1$, so that $\bbeta$ is always 
a primal tangent vector at $\bbeta$.  
If $\innerprod{u}{b}< \innerprod{v}{b}$ for $\{u,v\}\in\pa \phi(\bbeta)$, then 
\bes
\{\phi(\bbeta)-\phi(\bbeta - tb)\}/(0-t) \le \innerprod{u}{b} 
< \innerprod{v}{b} \le \{\phi(\bbeta+tb)-\phi(\bbeta)\}/t,\ \forall t>0, 
\ees
so that $\phi(\bbeta + tb)$ cannot be differentiable at $t=0$. This motivates 
the following definition of the (primal) tangent space of a regularizer at a point $\bbeta$ 
and its dual complement. 

\begin{definition}\label{def:struct-tangent} 
Given a convex regularizer $R(\beta)$, a point $\bbeta\in\Omega$, 
and a class $G\subseteq\pa R(\bbeta)$, we define the corresponding tangent space as 
\bes
\calT  = \calT_G = \big\{b \in \Omegabar: \innerprod{u-v}{b}=0\ \forall u\in G, v\in G\big\} 
= \cap_{u,v\in G}\ker(u-v). 
\ees
The dual complement of $\calT$, denoted by $\calT^\perp$, is defined as 
\bes
\calT^\perp = \calT_G^\perp =\hbox{closure}\Big\{u: u\in \Omegabar^*, 
\innerprod{u}{b}=0 \text{ for all } b \in \calT \Big\}. 
\ees
When $\innerprod{\cdot}{\cdot}$ is an inner product, $\Omegabar=\Omegabar^*$ and 
$\calT^\perp$ is the orthogonal complement of $\calT$ in $\Omegabar$. 
\end{definition}

\begin{remark}\label{remark:proj}
Let $\calT$ be any closed subspace of $\Omegabar$. 
A map $P_{\calT}: \Omegabar\to\Omegabar$ is a projection to $\calT$ if $P_{\calT}\beta=\beta$ 
is equivalent to $\beta\in\cal T$. For such $P_{\calT}$, its dual $P_{\calT}^*:\Omegabar^*\to\Omegabar^*$, defined by $\innerprod{P_{\calT}^* v}{\beta} = \innerprod{v}{P_{\calT}\beta}$, 
is a projection from $\Omegabar^*\to P_{\calT}^*\Omegabar^*$. 
The image of $P_{\calT}^*$, $\calT^* = P_{\calT}^*\Omegabar^*$, is a dual 
of $\calT$. Since $P_{\calT}$ and $P_{\calT}^*$ are projections, 
$v- P_{\calT}^*v\in \calT^\perp$ for all $v\in\Omegabar^*$ and 
$\beta - P_{\calT}\beta \in (\calT^*)^\perp$ for all $\beta \in\Omegabar$. 
\end{remark} 
 
The above definition is general. For the structured $\ell_1$ penalty, we let $G$ be as in (\ref{eq:structG}), 
we obtain by (\ref{eq:struct-subdiff}) that $\bbeta\in \calT$. 
The default conditions on $\scrB$ implies $0\in \scrB$, so that 
\bes
\calT = \big\{\beta:  \innerprod{e_{S^c}(B)}{\beta} = 0\ \forall B\in\scrB \big\} = \cap_{B\in\scrB}\ker(B). 
\ees
Since $G_1=e_{S^c}(\scrB)$, this is consistent with the definition of Section~\ref{sec:TRSC}. 
The dual complement of $\calT$ is 
\bes
\calT^\perp = \hbox{ the closure of the linear span of }\{e_{S^c}(B): B\in\scrB \}. 
\ees 

\subsection{Interior Dual Certificate and Tangent Sparse Recovery Analysis}
Consider a structured $\ell_1$ regularizer, a sparse $\bbeta \in \Omega$, and a set 
$G_{\scrB} \subset\pa R(\bbeta)$ as in (\ref{eq:structG}). 
In the analysis of (\ref{eq:hbeta}) with structured $\ell_1$ regularizer, members of 
the following subclass of $G_{\scrB}$ often appear.

\begin{definition}[Interior Dual Certificate] \label{def:interior-dc}
Given $G_{\scrB}$ in  (\ref{eq:structG}),  
$v_0$ is an interior dual certificate if 
\bes
v_0\in G_{\scrB},\ \innerprod{v_0-e_S(W)}{\beta}\le \eta_\beta \|\beta\|_\scrB \ 
\hbox{ for some $0\le \eta_\beta < 1$ for all $\beta$.}
\ees
\end{definition} 

Note that in the above definition, we refer to the dual variable $v_0$ as a ``dual certificate'' to be consistent with the
literature. This should not be confused with the notation of primal dual certificate $Q_G$ defined earlier. 
A direct application of interior dual certificate is the following extension 
of sparse recovery theory to general structured $\ell_1$ regularization. 
Suppose we observe a map $X:\Omegabar \to V$ with a certain linear space 
$V$. Suppose there is no noise so that $X\bbeta_* = y$ and $\bbeta=\bbeta_*$ is sparse.
Then the $R(\beta)$ minimization method for the recovery of $\bbeta$ is 
\bel{eq:hbeta-sparse-recover}
\hbeta = \argmin\Big\{R(\beta): X\beta = y\Big\}. 
\eel
The following theorem provides sufficient conditions for the recovery of $\bbeta$ by $\hbeta$. 

\begin{theorem}\label{thm:struct-recover} 
Suppose $\bbeta$ is sparse in the sense of (\ref{eq:struct-sparse}). 
Let $G$ be as in (\ref{eq:structG}) and $\calT$ be as in Definition \ref{def:struct-tangent}. 
Let $V^*$ be the dual of $V$, $X^*: V^*\to \Omegabar^*$ the dual of $X$, 
$P_{\calT}$ a projection to $\calT$, $P_{\cal}^*$ the dual of $P_{\cal}$ to $\calT^*$, 
and $V_T=XP_{\calT}\Omegabar$. 
Suppose $(XP_{\calT})^*$, the dual of $XP_{\calT}$, is a bijection from $V_T^*$ to $T^*$ 
and $e_S(W)\in T^*$. 
Define $v_0 =  X^*((XP_{\calT})^*)^{-1}e_S(W)$. If $v_0$ is an interior dual certificate, then 
\bes
\hbeta = \bbeta \hbox{ is the unique solution of 
(\ref{eq:hbeta-sparse-recover}).} 
\ees
Moreover, $v_0$ is an interior dual certificate iff for all $\beta$, there exists $\eta_\beta <1$ such that
$\innerprod{v_0 - P_{\calT}^*v_0}{\beta} \leq \eta_\beta \sup_{B\in\scrB}\|B\beta\|_1$. 
\end{theorem}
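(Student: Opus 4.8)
The plan is to verify the dual certificate conditions of Definition~\ref{def:primal-dual-certificate} for $Q_G = \bbeta$ and then apply the sparse recovery machinery (Theorem~\ref{thm:dual_certificate-recovery} or its structured consequences), while separately establishing uniqueness by a standard strict-inequality argument for the noiseless $R(\beta)$-minimization.

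First I would set up the loss. Since the problem is noiseless with constraint $X\beta = y$, the natural choice is $L(\beta) = \iota_{\{X\beta = y\}}(\beta)$, the indicator of the affine constraint set, so that (\ref{eq:hbeta-sparse-recover}) becomes a special case of (\ref{eq:hbeta}). The subgradient of this $L$ at a feasible point $\beta$ is precisely the range of $X^*$ (the annihilator of $\ker X$, viewed in $\Omegabar^*$). Thus the dual certificate condition $-\nabla L(\bbeta) \in G$ amounts to exhibiting an element of $G \cap \range(X^*)$. Here is where $v_0 = X^*((XP_{\calT})^*)^{-1} e_S(W)$ enters: it lies in $\range(X^*)$ by construction, and the hypothesis that $v_0$ is an interior dual certificate gives $v_0 \in G_{\scrB}$ (the interior condition is a strengthening of membership in $G$). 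So $v_0 = -\nabla L(\bbeta)$ for an appropriate choice of subgradient, and $Q_G = \bbeta$ is an exact dual certificate with $\delta = 0$.

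Next I would deduce exact recovery. Applying Theorem~\ref{thm:dual_certificate-recovery} with $Q_G = \bbeta$ and $\delta = 0$ gives $D_L(\bbeta,\hbeta) + D_L(\hbeta,Q_G) + [R(\hbeta) - R_G(\hbeta)] \le D_L(\bbeta,Q_G) = 0$. Since all three terms on the left are nonnegative, each vanishes: in particular $R(\hbeta) = R_G(\hbeta)$, which by (\ref{eq:structR_G}) forces $\|\hbeta\|_\scrB = \sup_{B\in\scrB}\|B\hbeta\|_1$ to equal $R(\hbeta) - \innerprod{e_S(W)}{\hbeta}$, and together with feasibility this pins down $\hbeta$ on $\calT$. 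For uniqueness, I would argue directly: if $\beta' \neq \bbeta$ is feasible, write $\beta' = \bbeta + h$ with $Xh = 0$, decompose $h$ via $P_{\calT}$, and use $R(\beta') \ge \innerprod{v_0}{\beta'} = \innerprod{v_0}{\bbeta} + \innerprod{v_0}{h} = R(\bbeta) + \innerprod{v_0}{h}$ (the last step using $v_0 \in \range(X^*)$ and $Xh=0$, so actually $\innerprod{v_0}{h}=0$); then the interior-certificate strict inequality $\innerprod{v_0 - e_S(W)}{\beta} \le \eta_\beta\|\beta\|_\scrB$ with $\eta_\beta < 1$ applied to $h\notin\calT$ yields $R(\beta') > R(\bbeta)$ unless $h \in \calT$, and on $\calT$ the bijectivity of $(XP_{\calT})^*$ — equivalently injectivity of $XP_{\calT}$ on $\calT$ — forces $h = 0$.

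Finally I would prove the stated equivalence. The interior condition $\innerprod{v_0 - e_S(W)}{\beta} \le \eta_\beta\|\beta\|_\scrB$ must be compared with $\innerprod{v_0 - P_{\calT}^*v_0}{\beta} \le \eta_\beta\sup_{B\in\scrB}\|B\beta\|_1$. The right-hand sides agree since $\|\beta\|_\scrB = \sup_{B\in\scrB}\|B\beta\|_1$ by the conditions on $\scrB$. For the left-hand sides, note $e_S(W) \in \calT^* = P_{\calT}^*\Omegabar^*$; I would check $P_{\calT}^* v_0 = e_S(W)$ using that $v_0 = X^*((XP_{\calT})^*)^{-1}e_S(W)$ and the identity $P_{\calT}^* X^* = (XP_{\calT})^*$ — so that $P_{\calT}^* v_0 = (XP_{\calT})^*((XP_{\calT})^*)^{-1}e_S(W) = e_S(W)$ — hence $v_0 - e_S(W) = v_0 - P_{\calT}^* v_0$ and the two inequalities are literally the same. \textbf{The main obstacle} I anticipate is handling the subdifferential of the indicator loss $L$ cleanly in the Banach-space setting (making precise that $\partial L(\bbeta) = \range(X^*)$ and that the relevant $\innerprod{v_0}{h}$ vanishes on $\ker X$), together with bookkeeping the projection-duality identities $P_{\calT}^* X^* = (XP_{\calT})^*$ and $e_S(W)\in\calT^*$ against the definitions in Remark~\ref{remark:proj}; the rest is routine once $v_0$ is recognized as a valid dual certificate.
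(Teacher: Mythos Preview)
Your detour through Theorem~\ref{thm:dual_certificate-recovery} with the indicator loss contains a real gap. With $G=G_{\scrB}$ and $Q_G=\bbeta$, the theorem does yield $R(\hbeta)=R_{G_{\scrB}}(\hbeta)$, but this equality does \emph{not} pin down $\hbeta$ on $\calT$. Concretely, by (\ref{eq:structR_G}) the equality reads $R(\hbeta)=\innerprod{e_S(W)}{\hbeta}+\|\hbeta\|_\scrB$; in the $\ell_1$ case this only forces sign agreement of $\hbeta$ with $\bbeta$ on $S$, not $\hbeta_{S^c}=0$. The interior condition $\eta_\beta<1$ is never invoked in this step, so the strict containment needed to kill the $\calT^\perp$ component simply is not there. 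Thus your first paragraph does not establish exact recovery.

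Fortunately, your second paragraph's ``uniqueness'' argument is in fact a complete proof of both optimality and uniqueness of $\bbeta$, and it is exactly the paper's argument. For any feasible $\beta$ write $h=\beta-\bbeta$ with $Xh=0$; since $v_0=X^*w$ for some $w$, $\innerprod{v_0}{h}=0$, and since $v_0\in G_\scrB\subset\pa R(\bbeta)$ one has for every $u\in G_\scrB$
\[
R(\beta)-R(\bbeta)\;\ge\;\innerprod{u}{h}\;=\;\innerprod{u-v_0}{h},
\]
whence taking the supremum over $u$ gives $R(\beta)-R(\bbeta)\ge \|h\|_\scrB-\innerprod{v_0-e_S(W)}{h}\ge(1-\eta_h)\|h\|_\scrB$. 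This single chain shows $\bbeta$ is optimal and, via $\eta_h<1$, strictly optimal unless $h\in\calT$; injectivity of $X$ on $\calT$ (equivalent to bijectivity of $(XP_\calT)^*$) then forces $h=0$. The paper runs precisely this computation and skips the indicator-loss framing entirely; you should drop the Theorem~\ref{thm:dual_certificate-recovery} step and present this direct argument as the whole proof.

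Your treatment of the equivalence in the final sentence of the theorem is clean and correct: the identity $P_\calT^*X^*=(XP_\calT)^*$ gives $P_\calT^*v_0=e_S(W)$, so $v_0-e_S(W)=v_0-P_\calT^*v_0$, and the two inequalities coincide since $\|\beta\|_\scrB=\sup_{B\in\scrB}\|B\beta\|_1$. The paper's proof does not spell this out, so your verification here is a useful addition.
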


In matrix completion, this matches the duel certificate 
condition for recovery of low rank $\bbeta$ by constrained minimization of the nuclear 
penalty \cite{CandesR09,Recht09}. 

\begin{proof} Suppose $v_0$ is an interior dual certificate 
of the form $v_0=e_S(W)+e_{S^c}(B_0)$. Then, for all $\beta$ such that
$X\beta = y = X\bbeta$, 
\bes
R(\beta) - R(\bbeta) &=&
R(\beta) - R(\bbeta) - \innerprod{((XP_{\calT})^*)^{-1}e_S(W)}{X(\beta-\bbeta)}\\
&=& R(\beta) - R(\bbeta) - \innerprod{v_0}{\beta-\bbeta} 
\cr &\ge & \sup_{u\in G_\scrB}\innerprod{u-v_0}{\beta-\bbeta}
\cr &=& \sup_{B\in\scrB}\|B\beta\|_1 - \innerprod{e_{S^c}(B_0)}{\beta-\bbeta}
\cr &\ge& (1-\eta_\beta)\sup_{B\in \scrB}\|B\beta\|_1 .
\ees
with $\eta_\beta<1$. The first equation uses $X\beta=X\bbeta$, and the second equation uses
the definition $v_0=X^*((XP_{\calT})^*)^{-1}e_S(W)$.

Since (\ref{eq:hbeta-sparse-recover}) is constrained to $X\beta = y = X\bbeta$, the above inequality means that
$\bbeta$ is a solution of (\ref{eq:hbeta-sparse-recover}). It remains to prove its 
uniqueness. Let $\beta$ be another solution of (\ref{eq:hbeta-sparse-recover}). 
Since $1-\eta_\beta>0$, if $R(\beta)=R(\bbeta)$, then the above inequality implies that
$\max_{B\in \scrB}\|B\beta\|_1=0$, so that $\beta\in \calT$. 
Since $\bbeta\in\calT$, $XP_{\calT}(\beta-\bbeta)=X(\beta-\bbeta)=0$. 
This implies $\beta-\bbeta=0$, since the invertibility of $(XP_{\calT})^*$ 
implies $\calT\cap \ker(XP_{\calT}) = \{0\}$. 
\end{proof}

When noise is present, we may 
employ the construction of Section~\ref{sec:TRSC}. 
For structured-$\ell_1$ regularizer, the analysis can be further simplified if we assume that 
there exists a target vector $\obeta$ having the following property:
\begin{equation}
\nabla L(\obeta) = \tilde{a} + \tilde{b},
\label{eq:target}
\end{equation}
with a small $\tilde{a}$, and $\tilde{b}$ satisfies the condition
\[
\tilde{\eta} = \|\tilde{b}\|_\scrBD < 1 .
\]
Recall that the dual norm $\|\cdot\|_\scrBD$ of $\|\cdot\|_\scrB$ is defined as
$\|\tilde{b}\|_\scrBD=\sup \left\{\innerprod{\tilde{b}}{\beta} : \|\beta\|_\scrB \leq 1 \right\}$.
The condition means that there exists $\tilde{B} \in \scrB$ such that
$\tilde{b}= \tilde{w}_{S^c}(\tilde{B})$ with $\|\tilde{w}\|_\infty \leq \tilde{\eta}$.

For such a target vector $\obeta$, we will further consider an interior subset $G \subset G_{\scrB}$ in (\ref{eq:structG}) with some $\eta \in [\tilde{\eta},1]$: 
\begin{equation}
G=\{e_S(W)+\eta e_{S^c}(B): B \in \scrB\}  \label{eq:structG-int} .
\end{equation}
It follows that 
\[
R(\beta)-R_G(\beta) \geq R_{G_\scrB}(\beta) - R_G(\beta) \geq \eta \|\beta\|_\scrB
\]
and
\bes
\sup_{u\in G}\innerprod{u+\nabla L(\obeta)}{\beta-\bbeta}
&=& \innerprod{e_S(W)+\tilde{a}}{\beta-\bbeta}
+ \sup_{u\in G}\innerprod{e_{S^c}(B)+\tilde{\eta} e_{S^c}(\tilde{B})}{\beta-\bbeta}
\cr &\ge& \innerprod{e_S(W)+\tilde{a}}{\beta-\bbeta}
+ (\eta- \tilde{\eta})\|\beta-\bbeta\|_\scrB .
\ees
This estimate can be directly used in the definition of RSC in  Corollary \ref{cor:recovery-global-dc-oracle}. 
One way to construct such a target vector $\obeta$ is using (\ref{eq:target-opt}).
In this case we may further assume that $\tilde{a}=0$ because 
$\innerprod{\nabla L(\obeta)}{\beta}=0$ for any $\beta \in \cT$. 
In general condition (\ref{eq:target}) is relatively easy to satisfy under the usual stochastic noise model
with a small $\bar{a}$ since $\nabla L(\obeta)$ is small.
In the special setting of Theorem~\ref{thm:struct-recover}, we have $\nabla L(\obeta)=0$ with $\obeta=\bbeta=\bbeta_*$.

For simplicity, in the following we will consider quadratic loss of the
form (\ref{eq:quadratic-loss}) and apply Corollary~\ref{cor:recovery-tangent-dc-quadratic}.
Consider $G$ in (\ref{eq:structG-int}), $\obeta$ in (\ref{eq:target}) with $\tilde{a} \in \cT$ ($\tilde{b} \in \cT^\perp$), 
and $Q_G^\cT$ 
defined as in (\ref{eq:dc-tangent-opt}) but with $L(\beta)$ replaced by 
$\bar{L}_*(\beta)= L(\beta) - (\nabla L(\bbeta)-\nabla L(\obeta))^\top (\beta-\bbeta)$, which can be equivalently written as
\[
  Q_G^\cT = \bbeta + \Delta Q, \quad
  \Delta Q= - 0.5 H_\cT^{-1} (e_S(W)+\tilde{a}) .
\]
This is consistent with the construction of Theorem~\ref{thm:struct-recover} in the sense that in the
noise-free case, we can let $H=X^\top X$ and $v_0=-2H \Delta Q= H H_\cT^{-1} e_S(W)$ with $\tilde{a}=0$.

We assume that the following condition holds for all $\beta$: 
\begin{equation}
\| P_\cT^\perp H H_\cT^{-1} (e_S(W)+\tilde{a})  - \tilde{b}\|_\scrBD \leq \eta , 
\label{eq:irrep-struct}
\end{equation}
which is consistent with the noise free interior dual certificate existence condition in Theorem~\ref{thm:struct-recover}
by setting $\eta_\beta=\eta$.
The condition is a direct generalization of the strong irrepresentable condition for $\ell_1$ regularization in 
\cite{ZhaoYu06} to structured $\ell_1$ regularization. 
Under this condition, $Q_G^\cT$ is a dual certificate that satisfies the generalized 
condition (\ref{eq:dual-certificate-2}) in
Definition~\ref{def:primal-dual-certificate-2} with $\bar{L}=L$ and $\delta=0$.
Corollary~\ref{cor:recovery-tangent-dc-quadratic} implies that
\[
D_L(\obeta,\hbeta)+ (1-\eta)  \|\hbeta\|_\scrB 
\leq D_L(\obeta,\bbeta) + 0.25 \innerprod{e_S(W)+\tilde{a}}{H_\cT^{-1} (e_S(W)+\tilde{a})} .
\]

\subsection{Recovery Analysis with Global Restricted Strong Convexity}

We can also employ the dual certificate construction of Section~\ref{sec:RSC}
with $G$ in (\ref{eq:structG-int}) and $\obeta=\bbeta$. 
Corollary~\ref{cor:recovery-global-dc-error} implies the following result:
\[
D_L(\bbeta,\hbeta) + (1-\eta)  \|\hbeta\|_\scrB 
\leq \gamma_L(\bbeta;r,G,\|\cdot\|)^{-1} \|e_S(W) +\tilde{a}\|_D^2 ,
\]
where $\gamma_L(\bbeta;r,G,\|\cdot\|)$ is lower bounded by
\[
\inf \left\{ \frac{D_L^s(\beta,\bbeta)}{\|\bbeta-\beta\|^2} : 
\|\beta-\bbeta\|\leq r; \;
D_L^s(\beta,\bbeta)+ (\eta-\tilde{\eta}) \|\beta-\bbeta\|_\scrB +
\innerprod{e_S(W) + \tilde{a}}{\beta-\bbeta} \leq 0 \right\} .
\]

We may also consider a more general $\obeta$ instead of assuming $\obeta=\bbeta$. 
For example, consider the definition of $\obeta$ in (\ref{eq:target-opt}), which implies that $\tilde{a}=0$
or simply let $\obeta=\bbeta_*$.
We can apply Corollary~\ref{cor:recovery-global-dc-quadratic} to the
quadratic loss function of (\ref{eq:quadratic-loss}). It implies
\begin{equation}
D_L(\obeta,\hbeta)+ (1-\eta)  \sup_{B \in \scrB} \|B \hbeta\|_1 
\leq D_L(\obeta,\bbeta) + 
(2\gamma_{\bar{L}_*}(\bbeta;\infty,G,\|\cdot\|))^{-1} \|\tilde{a}+e_S(W)\|_D^2 ,
\label{eq:recovery-global-dc-quadratic-struct}
\end{equation}
where $\gamma_{\bar{L}_*}(\bbeta;\infty,G,\|\cdot\|)$ is lower bounded by
\[
\inf \left\{ \frac{2\innerprod{H\beta}{\beta}}{\|\beta\|^2} : 
2\innerprod{H\beta}{\beta}
+ (\eta-\tilde{\eta}) \|\beta\|_\scrB +
\innerprod{e_S(W)+\tilde{a}}{\beta} \leq 0 \right\} .
\]

\subsection{Recovery Analysis with Gaussian Random Design}
\label{sec:gordon-struct}
We can also apply the results of Section~\ref{sec:gordon} 
by considering quadratic loss with Gaussian random design matrix in (\ref{eq:gaussian-design}).
We can use the following proposition
\begin{proposition}
  If $\tilde{\eta}<\eta$ and $\epsilon \sim N(0,I_{p \times p})$, then
  \begin{align*}
\rE_{\epsilon}^2 \inf_{u \in G; \gamma >0} \|\gamma(u+\nabla L(\obeta)) - \epsilon\|_2 
  \leq
  \inf_{\gamma>0} 
    \rE_{\epsilon} \inf_{B \in \scrB} \| \gamma (e_S(W) +\tilde{a} +(\eta-\tilde{\eta})e_S(B)) - \epsilon\|_2^2 
.
\end{align*}
\end{proposition}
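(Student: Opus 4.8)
The plan is to turn the infimum over $u\in G$ on the left into one over $B\in\scrB$ by producing, for each $B'\in\scrB$, one particular $u\in G$ for which $u+\nabla L(\obeta)$ equals exactly $e_S(W)+\tilde a+(\eta-\tilde\eta)e_{S^c}(B')$, and then to pass to the Gaussian‑width bound by squaring and Jensen's inequality. Throughout I use $G=\{e_S(W)+\eta e_{S^c}(B):B\in\scrB\}$ from (\ref{eq:structG-int}), the decomposition $\nabla L(\obeta)=\tilde a+\tilde b$ from (\ref{eq:target}) with $\tilde\eta=\|\tilde b\|_\scrBD<1$, and the default conditions (a)--(c) on $\scrB$.

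The heart of the argument is the inclusion $(\eta-\tilde\eta)\,e_{S^c}(\scrB)\subseteq \tilde b+\eta\,e_{S^c}(\scrB)$ when $\tilde\eta<\eta$. Write $C:=e_{S^c}(\scrB)=\{e_{S^c}(B):B\in\scrB\}\subseteq\Omegabar^*$. Then $C$ is convex because $\scrB$ is convex and $e_{S^c}(\cdot)=(\cdot)^*e_{S^c}$ is linear; it is weak‑$*$ closed by condition (c); and it is symmetric, $-C=C$, by condition (a) applied to the sign vector $-e\in E_1$, since $(-e)_{S^c}(B)=-e_{S^c}(B)$. Moreover $\tilde b/\tilde\eta\in C$: by the definition of the dual norm and $\tilde\eta=\|\tilde b\|_\scrBD$ we have $\innerprod{\tilde b/\tilde\eta}{\beta}\le\|\beta\|_\scrB=\sup_{c\in C}\innerprod{c}{\beta}$ for every $\beta\in\Omegabar$ (using $\|\beta\|_\scrB=\sup_{B\in\scrB}\innerprod{e_{S^c}(B)}{\beta}$), so since $C$ is weak‑$*$ closed and convex, Hahn--Banach separation forces $\tilde b/\tilde\eta\in C$. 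Given $B'\in\scrB$, the identity $(\eta-\tilde\eta)e_{S^c}(B')-\tilde b=\eta\bigl[\tfrac{\eta-\tilde\eta}{\eta}e_{S^c}(B')+\tfrac{\tilde\eta}{\eta}(-\tilde b/\tilde\eta)\bigr]$ exhibits the bracket as a convex combination (legitimate since $0\le\tilde\eta<\eta$) of $e_{S^c}(B')\in C$ and $-\tilde b/\tilde\eta\in -C=C$, hence as $e_{S^c}(B)$ for some $B\in\scrB$; thus $u:=e_S(W)+\eta e_{S^c}(B)\in G$ satisfies $u+\nabla L(\obeta)=e_S(W)+\tilde a+(\eta-\tilde\eta)e_{S^c}(B')$. (If $\tilde\eta=0$ this is trivial since then $\tilde b=0$.)

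To conclude, fix $\gamma>0$ and $\epsilon$. By the previous paragraph, for every $B'\in\scrB$ the quantity $\inf_{u\in G}\|\gamma(u+\nabla L(\obeta))-\epsilon\|_2^2$ is at most $\|\gamma(e_S(W)+\tilde a+(\eta-\tilde\eta)e_{S^c}(B'))-\epsilon\|_2^2$; infimizing over $B'$ and enlarging the feasible set to allow all $\gamma'>0$ gives the pointwise bound $\inf_{u\in G,\,\gamma'>0}\|\gamma'(u+\nabla L(\obeta))-\epsilon\|_2^2\le\inf_{B'\in\scrB}\|\gamma(e_S(W)+\tilde a+(\eta-\tilde\eta)e_{S^c}(B'))-\epsilon\|_2^2$. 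Taking $\rE_\epsilon$, then applying $(\rE_\epsilon Z)^2\le\rE_\epsilon Z^2$ to the nonnegative $Z=\inf_{u\in G,\,\gamma'>0}\|\gamma'(u+\nabla L(\obeta))-\epsilon\|_2$, and finally taking $\inf_{\gamma>0}$ on the right‑hand side, yields exactly the asserted inequality.

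I expect the main obstacle to be the geometric claim of the second paragraph, and within it the verification that $\tilde b/\tilde\eta\in e_{S^c}(\scrB)$: this is where the sign‑symmetry condition (a) and the closedness condition (c) on $\scrB$ are essential, together with the separation argument identifying $e_{S^c}(\scrB)$ with (the relevant face of) the unit ball of $\|\cdot\|_\scrB$ under the pairing. The remaining steps — the convex‑combination identity, squaring, Jensen, and the ordering of the various infima — are routine.
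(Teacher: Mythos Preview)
The paper states this proposition without proof; it is left implicit from the surrounding setup (in particular, the remark just after (\ref{eq:target}) that $\tilde\eta=\|\tilde b\|_\scrBD<1$ ``means that there exists $\tilde{B}\in\scrB$ such that $\tilde b=\tilde w_{S^c}(\tilde B)$ with $\|\tilde w\|_\infty\le\tilde\eta$'' is precisely the containment $\tilde b\in\tilde\eta\,e_{S^c}(\scrB)$ that you establish via separation). Your argument is correct and is exactly the natural proof the paper's framework suggests: the key inclusion $(\eta-\tilde\eta)\,e_{S^c}(\scrB)\subseteq \tilde b+\eta\,e_{S^c}(\scrB)$ via symmetry and convexity of $e_{S^c}(\scrB)$, then Jensen and the routine ordering of infima. (You also correctly read the paper's $e_S(B)$ on the right-hand side as the evident typo for $e_{S^c}(B)$, consistent with the concrete group-$\ell_1$ calculations that follow.)
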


Therefore we may apply Theorem~\ref{thm:recovery-gaussian}, which implies that 
given any $g,\delta \geq 0$ such that $g+\delta \leq n/\sqrt{n+1}$, with probability at least
\[
1 - \frac{1}{2}\exp \left(-\frac{1}{2} (n/\sqrt{n+1}-g-\delta)^2\right) ,
\]
we have either $\tilde{\eta} \geq \eta$, or
  \[
  \|X(\hbeta-\obeta)\|_2^2 + (1-\eta)  \|\hbeta\|_\scrB  \leq 
  \|X(\bbeta-\obeta)\|_2^2 + 
  (4\delta)^{-1} \|e_S(W) +\tilde{a}\|_2^2 ,
 \]
  or 
  \[
  g^2 < 
  \inf_{\gamma>0} 
    \rE_{\epsilon \sim N(0,I_{p \times p})} \inf_{B \in \scrB} \| \gamma (e_S(W) +\tilde{a} +(\eta-\tilde{\eta}) e_S(B)) - \epsilon\|_2^2 .
\]
\subsection{Parameter Estimation Bound}

Generally speaking, the technique of dual certificate allows us to directly obtain an oracle inequality
\begin{equation}
D_L(\hbeta,\obeta)+ (1-\eta) \|\hbeta\|_\scrB\leq \delta \label{eq:struct-oracle}
\end{equation}
for some $\delta >0$.
If $\delta$ is small (in such case, $\bbeta$ should be close to $\obeta$),
then we may also be interested in parameter estimation bound 
$\|\hbeta-\obeta\|$. In such case, additional estimates will be needed on top of the dual certificate theory of this paper.
This section demonstrate how to obtain such a bound from (\ref{eq:struct-oracle}). 

Although parameter estimation bounds can be obtained for general loss functions $L(\cdot)$,
they involve relatively complex notations.
In order to illustrate the main ideas while avoiding unnecessary complexity, in the following we 
will only consider the quadratic loss case, where $\innerprod{\cdot}{\cdot}$ is an inner product.
\begin{proposition} \label{prop:param-est}
  Assume that $L(\cdot)$ is the quadratic loss function given by (\ref{eq:quadratic-loss}). 
  Consider any subspace $\tcT$ that contains the tangent space $\cT$.
  Let $\delta' = \delta/(1-\eta) + \|P_\cT^\perp \obeta\|_\scrB$ with $\delta$ given by (\ref{eq:struct-oracle}).
  Define the correlation between $\tcT$ and $\tcT^\perp$ as:
  \[
  \cor(\tcT,\tcT^\perp)= \sup \left\{ |\innerprod{H P_\tcT \beta}{P_\tcT^\perp \beta}|/ \innerprod{H P_\tcT \beta}{P_\tcT \beta}^{1/2} : 
 \obeta + \beta \in \Omega,  \|\beta\|_\scrB \leq \delta' \right\} .
  \]
  Let $\Delta=\hbeta-\obeta$. Then, 
  $\|\Delta \|_\scrB \leq \delta'$, and
  \[
  \innerprod{H_\tcT P_\tcT \Delta}{P_\tcT \Delta}^{1/2} \leq \sqrt{(1-\eta) \delta'} + 2 \cor(\tcT,\tcT^\perp) .
  \]
\end{proposition}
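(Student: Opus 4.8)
The plan is to start from the oracle inequality (\ref{eq:struct-oracle}) and extract two pieces: a bound on $\|\Delta\|_\scrB$ and a bound on the quadratic form $\innerprod{H_\tcT P_\tcT \Delta}{P_\tcT \Delta}$. For the first piece, note that $D_L(\hbeta,\obeta)\ge 0$, so (\ref{eq:struct-oracle}) immediately gives $(1-\eta)\|\hbeta\|_\scrB \le \delta$, hence $\|\hbeta\|_\scrB \le \delta/(1-\eta)$. Since $\|\cdot\|_\scrB$ is a seminorm (it is $\sup_{B\in\scrB}\|B\cdot\|_1$) and $\Delta = \hbeta-\obeta$, the triangle inequality yields $\|\Delta\|_\scrB \le \|\hbeta\|_\scrB + \|\obeta\|_\scrB \le \delta/(1-\eta) + \|\obeta\|_\scrB$. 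To see this matches $\delta'$, I would use $B\bbeta = 0$ for $B\in\scrB$ together with the fact that $\bbeta\in\cT$, which makes $P_\cT^\perp$ the relevant part: more precisely $\|\obeta\|_\scrB = \sup_{B\in\scrB}\|B\obeta\|_1 = \sup_{B\in\scrB}\|B P_\cT^\perp \obeta\|_1 = \|P_\cT^\perp\obeta\|_\scrB$ since $B$ annihilates $\cT$. This gives $\|\Delta\|_\scrB \le \delta'$ as claimed.

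For the second piece, I would again use $D_L(\hbeta,\obeta) \ge 0$ from (\ref{eq:struct-oracle}), but this time retain it: $D_L(\hbeta,\obeta) = \innerprod{H\Delta}{\Delta}$ for quadratic $L$. Decompose $\Delta = P_\tcT\Delta + P_\tcT^\perp\Delta$ and expand:
\[
\innerprod{H\Delta}{\Delta} = \innerprod{H P_\tcT\Delta}{P_\tcT\Delta} + 2\innerprod{H P_\tcT\Delta}{P_\tcT^\perp\Delta} + \innerprod{H P_\tcT^\perp\Delta}{P_\tcT^\perp\Delta}.
\]
The last term is nonnegative (since $H$ is positive semidefinite on the relevant cone — this uses that $L$ is a genuine convex quadratic loss), so dropping it gives
\[
\innerprod{H P_\tcT\Delta}{P_\tcT\Delta} + 2\innerprod{H P_\tcT\Delta}{P_\tcT^\perp\Delta} \le \innerprod{H\Delta}{\Delta} = D_L(\hbeta,\obeta) \le \delta - (1-\eta)\|\hbeta\|_\scrB \le \delta.
\]
Actually I would be more careful and keep the $(1-\eta)\|\hbeta\|_\scrB$ term if needed, but the crude bound $\le\delta$ should suffice; if the final statement needs $(1-\eta)\delta'$ on the right I would instead bound $D_L(\hbeta,\obeta)\le D_L(\bbeta,\obeta) + \text{(the $\Delta Q$ term)}$ and recognize this quantity as $(1-\eta)\delta'$ via the construction — I would check which of these the intended proof uses.

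Now write $a = \innerprod{H_\tcT P_\tcT\Delta}{P_\tcT\Delta}^{1/2}$. Since $\Delta = \hbeta - \obeta$ with $\hbeta\in\Omega$ and $\|\Delta\|_\scrB\le\delta'$, the pair $(\obeta,\Delta)$ is admissible in the supremum defining $\cor(\tcT,\tcT^\perp)$, so $|\innerprod{H P_\tcT\Delta}{P_\tcT^\perp\Delta}| \le \cor(\tcT,\tcT^\perp)\cdot a$. Plugging into the displayed inequality gives $a^2 - 2\cor(\tcT,\tcT^\perp)\cdot a \le (1-\eta)\delta'$ (taking the sharper right-hand side), i.e. $(a - \cor(\tcT,\tcT^\perp))^2 \le (1-\eta)\delta' + \cor(\tcT,\tcT^\perp)^2$. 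Taking square roots and using $\sqrt{x+y}\le\sqrt x + \sqrt y$ yields $a \le \cor(\tcT,\tcT^\perp) + \sqrt{(1-\eta)\delta'} + \cor(\tcT,\tcT^\perp) = \sqrt{(1-\eta)\delta'} + 2\cor(\tcT,\tcT^\perp)$, which is the claimed bound.

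\textbf{Main obstacle.} The routine parts are the seminorm triangle inequality and the quadratic algebra. The one step that requires care is identifying the correct right-hand side constant: whether the quadratic form $\innerprod{H\Delta}{\Delta}$ is bounded by $\delta$ (crude, from $(\ref{eq:struct-oracle})$ plus nonnegativity of the dropped $P_\tcT^\perp$ piece) or by the sharper $(1-\eta)\delta'$ that appears in the statement, which would need the chain $D_L(\hbeta,\obeta) \le D_L(\bbeta,\obeta) + (\text{dual certificate error term})$ to be re-expressed. I would resolve this by tracing exactly which oracle bound (Corollary~\ref{cor:recovery-tangent-dc-quadratic} or the global version) feeds into (\ref{eq:struct-oracle}) and matching the constant; the rest is bookkeeping. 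A secondary point to verify is that $\|\obeta\|_\scrB = \|P_\cT^\perp\obeta\|_\scrB$ genuinely holds, which relies on $B\bbeta=0$ for $B\in\scrB$ and on $\obeta$ differing from $\bbeta$ only in ways controlled by $P_\cT^\perp$ — strictly this needs $\obeta \in \bbeta + \cT^\perp$-type structure, or one simply defines $\delta'$ with $\|P_\cT^\perp\obeta\|_\scrB$ directly as the paper does and checks $\|\Delta\|_\scrB \le \delta/(1-\eta) + \|\obeta - \bbeta\|_\scrB + \|\bbeta\|_\scrB$ collapses correctly using $\|\bbeta\|_\scrB = 0$.
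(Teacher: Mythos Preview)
Your proposal is correct and follows essentially the same route as the paper's proof: expand $D_L(\hbeta,\obeta)=\innerprod{H\Delta}{\Delta}$ along $\tcT\oplus\tcT^\perp$, drop the nonnegative $\innerprod{HP_\tcT^\perp\Delta}{P_\tcT^\perp\Delta}$ term, invoke the definition of $\cor(\tcT,\tcT^\perp)$ (admissibility of $\Delta$ coming from $\|\Delta\|_\scrB\le\delta'$ and $\hbeta\in\Omega$), and solve the resulting quadratic in $a$. Your flagged ``obstacle'' dissolves once you notice that $(1-\eta)\delta' = \delta + (1-\eta)\|P_\cT^\perp\obeta\|_\scrB \ge \delta$, so your crude bound $a^2 - 2a\,\cor(\tcT,\tcT^\perp)\le \delta$ already implies the stated inequality; the paper reaches $(1-\eta)\delta'$ on the right by retaining the $(1-\eta)\|\hbeta\|_\scrB$ term, adding $(1-\eta)\|\obeta\|_\scrB=(1-\eta)\|P_\cT^\perp\obeta\|_\scrB$ to both sides, and using the triangle inequality $\|\Delta\|_\scrB\le\|\hbeta\|_\scrB+\|\obeta\|_\scrB$ to collapse the left side.
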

\begin{proof}
We have
\begin{align*}
&\innerprod{H_\tcT P_\tcT \Delta}{P_\tcT \Delta}
+ 2  \innerprod{H P_\tcT \Delta}{P_\tcT^\perp \Delta} +
(1-\eta) \|\Delta \|_\scrB \\
\leq&\innerprod{H_\tcT P_\tcT \Delta}{P_\tcT \Delta}
+ 2  \innerprod{H P_\tcT \Delta}{P_\tcT^\perp \Delta} +
\innerprod{H_\tcT P_\tcT^\perp \Delta}{P_\tcT^\perp \Delta} +
(1-\eta) \|\hbeta \|_\scrB +
(1-\eta) \|\obeta \|_\scrB \\
=& D_L(\hbeta,\obeta) + (1-\eta) \|\hbeta\|_\scrB + (1-\eta) \|P_\cT^\perp \obeta\|_\scrB
\leq (1-\eta) \delta' ,
\end{align*}
where we have used the fact that $\|\obeta\|_\scrB=\|P_\cT^\perp \obeta\|_\scrB$.
This means that if we let $\beta=\Delta$, then we have
$\|\beta \|_\scrB \leq 1$, and $\obeta + \beta \in \Omega$.
Let $x^2=\innerprod{H_\tcT P_\tcT \Delta}{P_\tcT \Delta}$, we have
$\innerprod{H P_\tcT \Delta}{P_\tcT^\perp \Delta} =x \innerprod{H P_\tcT \beta}{P_\tcT^\perp \beta}/ \innerprod{H P_\tcT \beta}{P_\tcT \beta}^{1/2}$. It follows that
\[
x^2- 2 x \cor(\tcT,\tcT^\perp) \leq (1-\eta) \delta' .
\]
Solving for $x$ leads to the desired bound. 
\end{proof}

Clearly, we can have a cruder estimate:
\[
\cor(\tcT,\tcT^\perp)
\leq \sup \{ \innerprod{H P_\tcT^\perp \beta}{P_\tcT^\perp \beta}^{1/2} : \obeta + \beta \in \Omega, \|\beta\|_\scrB \leq \delta'\} .
\]
The bound in Proposition~\ref{prop:param-est} is useful when $H$ is invertible on $\tcT$:
\[
\innerprod{H \beta}{\beta} \geq \gamma_{\tcT} \innerprod{\beta}{\beta} \quad \forall \beta \in \tcT ,
\]
which leads to a bound on $\|P_\tcT \Delta\|_2$.
Although one may simply choose $\tcT=\cT$, the resulting bound may be suboptimal, as we shall see later on.
Therefore it can be beneficial to choose a larger $\tcT$.
Examples of this result will be presented in Section~\ref{sec:examples}.

\section{Examples}
\label{sec:examples}
We will present a few examples to illustrate the analysis as well as concrete substantiations of the relatively
abstract notations we have used so far.

\subsection{Group $\ell_1$ Least Squares Regression}

We assume that $\Omegabar = \Real^p$, and consider the model
\[
Y= X \bbeta_* + \epsilon 
\]
with the least squares loss function (\ref{eq:gaussian-design}).
This corresponds to the quadratic loss (\ref{eq:quadratic-loss}) with $H=X^\top X$ and $z=2X^\top Y$.
The inner product is Euclidean: $\innerprod{u}{{b}}=u^\top {b}$.

Now, we assume that $p=q m$, and the variables $\{1,\ldots,p\}$ are divided into $q$ non-overlapping blocks
$\G_1,\ldots,\G_{q} \subset \{1,\ldots,p\}$ of size $m$ each.
One method to take advantage of the group structure is to use the group Lasso method \cite{YuanLin06} with
\begin{equation}
R(\beta)= \lambda \|\beta\|_{\G,1} , \qquad  \|\beta\|_{\G,1}=\sum_{j=1}^q \|\beta_{\G_j}\|_2 .
\label{eq:group-L1}
\end{equation}
Its dual norm is
\[
\|\beta\|_{\G,\infty} = \max_j \|\beta_{\G_j}\|_2 .
\]
Group $\ell_1$ regularization 
includes the standard $\ell_1$ regularization as a special case, where we choose $m=1$, $q=p$, and $\G_j=\{j\}$.

Group-$\ell_1$ regularizer is a special case of (\ref{eq:struct-L1}), where we have
\[
\scrA=\{A=(a_j): A \beta=(a_j^\top \beta)_{j=1,\ldots,q}: a_j \in \Real^p, \|a_j\|_2 \leq \lambda, \; \supp(a_j) \subset \G_j\} .
\]

For a group sparse $\bbeta$, its group support is the smallest $S \subset \{1,\ldots,q\}$
such that $\supp(\bbeta) \subset \GS=\cup_{k  \in S} \G_k$.
We may define $\sgn_\G(\bbeta_{\G_j})$ to be $\sgn_\G(\bbeta_{\G_j})=\bbeta_{\G_j}/\|\bbeta_{\G_j}\|_2$ 
when $j \in S$, and $\sgn_\G(\bbeta_{\G_j})=0$ when $j \notin S$.

Using notations in Section~\ref{sec:struct-L1}, we may take $W=(\lambda \sgn_\G(\bbeta_{\G_j}))_{j=1,\ldots,q}$, and
$\scrB=\{B=(b_j) \in \scrA: b_j=0 \text{ for all } j \in S \}$
in (\ref{eq:structG}).
In fact, our computation does not directly depend on $W$ and $\scrB$. Instead, we may simply
specify 
\[
e_S(W)=\lambda \sgn_\G(\bbeta) \quad \text{and} \quad
e_{S^c}(\scrB)=\{b \in \Real^p: \|b\|_{\G,\infty} \leq \lambda; \supp(b) \subset \GS^c\} ,
\]
and
\[
\|\beta\|_\scrB = \lambda \|\beta_{\GS^c}\|_{\G,1} \qquad
\|b_{\GS^c}\|_\scrBD =  \|b_{\GS^c}\|_{\G,\infty}/\lambda .
\]

This means that we may take $G$ in (\ref{eq:structG-int}) as
$G=\{u; \; u_\GS=\lambda \sgn_\G(\bbeta) \quad \& \quad \|u_{\GS^c}\|_{\G,\infty} \leq \eta \lambda\}$ for some $0 \leq \eta \leq 1$,
which implies that $R(\beta)-R_G(\beta) \geq (1-\eta) \|\beta_{\GS^c}\|_{\G,1}$. 
The tangent space is $\cT=\{u: \supp(u) \in \GS\}$.

We further consider target $\obeta$ that satisfies (\ref{eq:target}), which we can rewrite as
\[
2 X^\top (X(\obeta -\bbeta_*)- \epsilon) = \tilde{a} + \tilde{b} ,
\]
where $\supp(\tilde{b}) \subset \GS^c$, and $\|\tilde{b}\|_{\G,\infty} = \tilde{\eta} \lambda$.
We assume that $\|\tilde{a}\|_2$ is small.
Note that we may choose $\lambda$ sufficiently large so that $\tilde{\eta}$ can be arbitrarily close to $0$.
In particular, we may choose $\lambda \geq \|\tilde{b}\|_{\G,\infty}/\eta$ so that $\tilde{\eta} \leq \eta <1$.
We are specially interested in the case of $\tilde{a}=0$, which can be achieved with the construction in (\ref{eq:target-opt}).

\subsubsection*{Global Restricted Eigenvalue Analysis}
Assume that $\lambda \geq \|\tilde{b}\|_{\G,\infty}/\eta$, and let
$\tilde{\eta}=\|\tilde{b}\|_{\G,\infty}/\lambda$. We have $\tilde{\eta} \leq \eta$. 
Therefore in order to apply (\ref{eq:recovery-global-dc-quadratic-struct}), we may define restricted eigenvalue as
\[
\gamma =\inf \left\{2\|X \Delta \beta\|_2^2/\|\Delta \beta\|^2: 2\|X \Delta \beta\|_2^2 + 
 \Delta \beta^\top (\lambda \sgn_\G(\bbeta) +\tilde{a}) + (\eta-\tilde{\eta}) \lambda \|\Delta \beta_{\GS^c}\|_{\G,1} \leq 0 \right\} .
\]
We then obtain from (\ref{eq:recovery-global-dc-quadratic-struct})
\[
\| X (\hbeta-\obeta)\|_2^2+ (1-\eta) \lambda \|\hbeta_{\GS^c}\|_{\G,1} \leq 
\| X (\bbeta-\obeta)\|_2^2 +
(2\gamma)^{-1} \|\tilde{a}+\lambda \sgn_\G(\bbeta)\|_D^2 .
\]
If we choose $\tilde{a}=0$, and let $\|\cdot\|=\|\cdot\|_{\G,1}$ with $\|\cdot\|_D=\|\cdot\|_{\G,\infty}$,
then
\[
\| X (\hbeta-\obeta)\|_2^2+ (1-\eta) \lambda \|\hbeta_{\GS^c}\|_{\G,1} \leq 
\| X (\bbeta-\obeta)\|_2^2 + 
\frac{\lambda^2|S|}{4 \bar{\gamma}} ,
\]
with
\[
\bar{\gamma} =
\inf \left\{\|X \Delta \beta\|_2^2/(\|\Delta \beta\|_{\G,1}^2/|S|): 
\Delta \beta_\GS^\top \sgn_\G(\bbeta) + (\eta-\tilde{\eta}) \|\Delta \beta_{\GS^c}\|_{\G,1} \leq 0 \right\} .
\]
The result is meaningful as long as $\bar{\gamma}>0$. 
Even for the standard $\ell_1$ regularizer, this condition is weaker than previous
restricted eigenvalue conditions in the literature. In particular it is weaker than 
the compatibility condition of \cite{vandeGeerB09} (which is the weakest condition in the earlier literature), 
that requires
\[
\inf \left\{\|X \Delta \beta\|_2^2/(\|\Delta \beta\|_{1}^2/|S|): 
(1-\tilde{\eta})  \|\Delta \beta_{\GS^c}\|_{\G,1} \leq 
(1+\tilde{\eta}) \|\Delta \beta_\GS\|_{\G,1} \right\} 
> 0. 
\]
Our result replaces $\|\Delta \beta_\GS\|_{\G,1}$ by $-\Delta \beta_\GS^\top \sgn_\G(\bbeta)$, 
which is a useful improvement because the former can be significantly larger than the latter. 
For $\ell_1$ analysis, the use of $\sgn(\bbeta)$ has appeared in various studies such as
\cite{Wainwright09,ChRePaWi10,CandesPlan09,CandesPlan11}. In fact, the calculation for 
Gaussian random design, which we shall perform next, depends on $\sgn(\bbeta)$ and $\sgn_\G(\bbeta)$.

\subsubsection*{Gaussian Random Design}

Assume that $X$ is Gaussian random design matrix in (\ref{eq:gaussian-design}), then
we can apply the analysis in Section~\ref{sec:gordon-struct}.
We will first consider the standard $\ell_1$ regularizer with $m=1$, which requires the following estimate.
\begin{proposition}
  Consider standard $\ell_1$ regularization with single element groups.
  If $\tilde{\eta} < \eta$ and $p \geq 2|S|$, we have
  \begin{align*}
    &\inf_{\gamma>0} 
    \rE_{\epsilon \sim N(0,I_{p \times p})} \inf_{\|b\|_\infty \leq 1} 
    \| \gamma (\lambda \sgn(\bbeta)+\tilde{a}) + \gamma (\eta-\tilde{\eta})\lambda b_{S^c} - \epsilon\|_2^2  \\
   \leq&
   2 |S| + \frac{2\ln (p/|S|-1)}{(\eta-\tilde{\eta})^2}
  \|\sgn(\bbeta)+\tilde{a}/\lambda\|_2^2 .
 \end{align*}
\end{proposition}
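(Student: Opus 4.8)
The plan is to collapse the double optimization to a coordinate-wise soft-thresholding computation and then apply one Gaussian moment estimate with the ``universal threshold'' choice of $\gamma$. Write $v=\lambda\sgn(\bbeta)+\tilde a$; since the groups are singletons, $\sgn(\bbeta)$ is $\pm1$ on $S$ and $0$ off $S$, and $\tilde a$ lies in the tangent space $\calT$, so $\supp(v)\subseteq S$ and $\|v\|_2^2=\lambda^2\|\sgn(\bbeta)+\tilde a/\lambda\|_2^2$. Fix $\gamma>0$ and set $c=\gamma(\eta-\tilde\eta)\lambda>0$. For $\|b\|_\infty\le1$ the vector $b_{S^c}$ is supported on $S^c$ with sup-norm at most $1$, so for each $\epsilon$ the squared norm splits as $\sum_{j\in S}(\gamma v_j-\epsilon_j)^2+\sum_{j\in S^c}(c\,b_j-\epsilon_j)^2$, and the inner infimum over $b$ decouples coordinatewise into $\inf_{|t|\le1}(ct-\epsilon_j)^2=(|\epsilon_j|-c)_+^2$. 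Taking $\rE_\epsilon$ with $\epsilon\sim N(0,I_{p\times p})$ and using $\rE(\gamma v_j-\epsilon_j)^2=\gamma^2v_j^2+1$, one obtains
\[
\rE_\epsilon\inf_{\|b\|_\infty\le1}\|\gamma v+\gamma(\eta-\tilde\eta)\lambda b_{S^c}-\epsilon\|_2^2=|S|+\gamma^2\|v\|_2^2+(p-|S|)\,\rE(|Z|-c)_+^2 ,
\]
where $Z\sim N(0,1)$.

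Next I would make the balancing choice $c=\sqrt{2\ln(p/|S|-1)}$, which is legitimate (nonnegative) precisely because $p\ge2|S|$, the case $p=2|S|$ being read as the limit $\gamma\downarrow0$. With this choice $\gamma^2\|v\|_2^2=\frac{2\ln(p/|S|-1)}{(\eta-\tilde\eta)^2}\|\sgn(\bbeta)+\tilde a/\lambda\|_2^2$, which is exactly the second term of the asserted bound, and $e^{-c^2/2}=|S|/(p-|S|)$. It then remains to show $\rE(|Z|-c)_+^2\le e^{-c^2/2}$ for all $c\ge0$, which yields $(p-|S|)\rE(|Z|-c)_+^2\le|S|$ and hence the total $2|S|+\frac{2\ln(p/|S|-1)}{(\eta-\tilde\eta)^2}\|\sgn(\bbeta)+\tilde a/\lambda\|_2^2$. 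This Gaussian estimate, the only analytic ingredient, is immediate: with $\phi$ the standard normal density,
\[
\rE(|Z|-c)_+^2=2\int_0^\infty t^2\phi(c+t)\,dt=2\phi(c)\int_0^\infty t^2e^{-ct}e^{-t^2/2}\,dt\le 2\phi(c)\int_0^\infty t^2e^{-t^2/2}\,dt=\sqrt{2\pi}\,\phi(c)=e^{-c^2/2},
\]
where one drops $e^{-ct}\le1$ and uses $\int_{\Real}t^2e^{-t^2/2}\,dt=\sqrt{2\pi}$.

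Since the left-hand side of the statement is an infimum over $\gamma>0$, it is bounded above by the value at the chosen $\gamma$, which completes the argument. I do not expect a serious obstacle here; the only points requiring care are the coordinate-wise reduction --- in particular that $\tilde a\in\calT$ forces $\supp(\lambda\sgn(\bbeta)+\tilde a)\subseteq S$, so the minimization over $b$ genuinely decouples into one-dimensional soft-thresholdings --- and the recognition of the threshold $c=\sqrt{2\ln(p/|S|-1)}$ that makes the residual $S^c$-noise $(p-|S|)\rE(|Z|-c)_+^2$ contribute exactly $|S|$ while matching the bias term $\gamma^2\|v\|_2^2$ to the stated logarithmic factor.
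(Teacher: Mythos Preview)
Your proposal is correct and follows essentially the same route as the paper's own proof: both split the expectation into an $S$-part $|S|+\gamma^2\|\lambda\sgn(\bbeta)+\tilde a\|_2^2$ and an $S^c$-part $(p-|S|)\,\rE(|Z|-c)_+^2$, bound the latter by $(p-|S|)e^{-c^2/2}$ via the same inequality $e^{-(c+t)^2/2}\le e^{-(c^2+t^2)/2}$ (your drop of $e^{-ct}\le1$), and then pick $c=\sqrt{2\ln(p/|S|-1)}$ to make the $S^c$-contribution equal $|S|$. Your explicit handling of the boundary case $p=2|S|$ and the remark that $\supp(\tilde a)\subseteq S$ is needed for the split are small clarifications beyond what the paper writes, but the argument is otherwise identical.
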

\begin{proof} 
  Given $\gamma>0$, and let $t=\gamma(\eta-\tilde{\eta}) \lambda$, we have
  \[
  \rE_{\epsilon \sim N(0,I_{p \times p})}\inf_{\|b\|_\infty \leq 1} 
  \| \gamma (\lambda \sgn(\bbeta)+\tilde{a})+ \gamma (\eta-\tilde{\eta}) \lambda b_{S^c} - \epsilon\|_2^2 
  \leq a_0 + a_1 ,
 \]
 where 
 \[
 a_0 = \rE_{\epsilon \sim N(0,I_{p \times p})}
 \| \gamma (\sgn(\bbeta)+\tilde{a}) + \epsilon_S\|_2^2 = 
 |S| + \gamma^2 \|\lambda \sgn(\bbeta)+\tilde{a}\|_2^2 ,
 \]
 and 
 \begin{align*}
 a_1 =& \rE_{\epsilon \sim N(0,I_{p\times p})} \inf_{\|b\|_\infty \leq t}\| b_{S^c} - \epsilon_{S^c}\|_2^2 \\
 =& (p-|S|) \rE_{\epsilon \sim N(0,1)} (|\epsilon|-t)_+^2 \\
 =& (p-|S|)  \int_{x=0}^\infty \frac{2}{\sqrt{2\pi}} x^2 \exp(-(x+t)^2/2) d x \\
\leq& (p-|S|)  \int_{x=0}^\infty \frac{2}{\sqrt{2\pi}} x^2 \exp(-(x^2+t^2)/2) d x 
\leq (p-|S|) e^{-t^2/2} .
\end{align*}
By setting $t=\sqrt{2\ln ((p/|S|-1)}$ and $\gamma=\sqrt{2\ln((p/|S|-1))}/(\eta-\tilde{\eta})\lambda$, 
we have $a_1 \leq |S|$. This gives the bound.
\end{proof}

For the standard $\ell_1$ regularization ($m=1$), we obtain the following bound if $p \geq 2|S|$:
given any $\eta \in (0,1]$, $g,\delta \geq 0$ such that $g+\delta \leq n/\sqrt{n+1}$, with probability at least
\[
1 - \frac{1}{2} \exp \left(-\frac{1}{2} (n/\sqrt{n+1}-g-\delta)^2\right) ,
\]
we have either $\lambda \leq \|\tilde{b}\|_\infty/ \eta$, or
  \[
  \|X(\hbeta-\obeta)\|_2^2 + (1-\eta)  \lambda \|\hbeta_{S^c}\|_1  \leq 
  \|X(\bbeta-\obeta)\|_2^2 + 
  (4\delta^2)^{-1} \|\lambda\sgn(\bbeta) +\tilde{a}\|_2^2 ,
 \]
 or 
  \[
  g^2 < 
   2 |S| + \frac{2\ln (p/|S| -1)}{(\eta-\|\tilde{b}\|_\infty/\lambda)^2}
  \|\sgn(\bbeta)+\tilde{a}/\lambda\|_2^2 .
\]

Note that in the noise-free case of $\tilde{a}=\tilde{b}=0$, this shows that exact recovery can be achieved
with large probability when $n > 2|S| (1+ \ln(p/|S|-1))$, and this sample complex result is a rather sharp.
More generally for $m>1$, we have a similar bound with worse constants as follows.
\begin{proposition}
  If $\tilde{\eta} < \eta$ and $p \geq 2m |S|$, we have
  \begin{align*}
    &\inf_{\gamma>0} 
    \rE_{\epsilon \sim N(0,I_{p \times p})} \inf_{\|b\|_{\G,\infty} \leq 1} 
    \| \gamma (\lambda \sgn_\G(\bbeta)+\tilde{a}) + \gamma (\eta-\tilde{\eta})\lambda b_{S^c} - \epsilon\|_2^2  \\
   \leq&
   |S|(m+1) + \frac{(\sqrt{2\ln (q/|S|-1)}+\sqrt{m})^2}{(\eta-\tilde{\eta})^2}
  \|\sgn_\G(\bbeta)+\tilde{a}/\lambda\|_2^2 .
 \end{align*}
\end{proposition}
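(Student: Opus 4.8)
I would follow the template of the preceding ($m=1$) proposition: keep $\gamma>0$ free, split the coordinates into the support blocks $\GS=\cup_{k\in S}\G_k$ and the rest, and exhibit one near-optimal $\gamma$. Fix $\gamma>0$ and set $t=\gamma(\eta-\tilde\eta)\lambda>0$. Since $\lambda\sgn_\G(\bbeta)+\tilde a$ is supported on $\GS$ (here $\tilde a\in\cT$, as in the surrounding discussion) while $b_{S^c}$ is supported on the complementary blocks $\GS^c$, the squared Euclidean norm inside the expectation decomposes blockwise as
\[
\bigl\|\gamma(\lambda\sgn_\G(\bbeta)+\tilde a)-\epsilon_\GS\bigr\|_2^2+\bigl\|\gamma(\eta-\tilde\eta)\lambda\,b_{S^c}-\epsilon_{\GS^c}\bigr\|_2^2 ,
\]
and only the second summand depends on $b$. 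Taking expectations, the first summand contributes $\gamma^2\|\lambda\sgn_\G(\bbeta)+\tilde a\|_2^2+\rE\|\epsilon_\GS\|_2^2=\gamma^2\lambda^2\|\sgn_\G(\bbeta)+\tilde a/\lambda\|_2^2+m|S|$, using $|\GS|=m|S|$. For the second summand, the constraint $\|b\|_{\G,\infty}\le1$ decouples it across the $q-|S|$ blocks $\G_j$ with $j\notin S$; minimizing $\|\gamma(\eta-\tilde\eta)\lambda\,b_{\G_j}-\epsilon_{\G_j}\|_2^2$ over $\|b_{\G_j}\|_2\le1$ is the projection of $\epsilon_{\G_j}$ onto the Euclidean ball of radius $t$, with residual $(\|\epsilon_{\G_j}\|_2-t)_+^2$, so the expectation of the second summand equals $(q-|S|)\,\rE\,(\|\zeta\|_2-t)_+^2$ with $\zeta\sim N(0,I_{m\times m})$.

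The crux is the estimate
\[
\rE\bigl(\|\zeta\|_2-\sqrt m-s\bigr)_+^2\le e^{-s^2/2}\qquad(s\ge0).
\]
Granting it, I take $t=\sqrt m+\sqrt{2\ln(q/|S|-1)}$ — admissible because $p\ge 2m|S|$ forces $q/|S|-1\ge1$ — so that $s=\sqrt{2\ln(q/|S|-1)}$ gives $e^{-s^2/2}=|S|/(q-|S|)$ and the second summand contributes at most $|S|$. For this $t$, $\gamma^2\lambda^2=t^2/(\eta-\tilde\eta)^2=(\sqrt m+\sqrt{2\ln(q/|S|-1)})^2/(\eta-\tilde\eta)^2$, and adding the two contributions gives $|S|(m+1)+(\sqrt{2\ln(q/|S|-1)}+\sqrt m)^2(\eta-\tilde\eta)^{-2}\|\sgn_\G(\bbeta)+\tilde a/\lambda\|_2^2$; since this is attained at one admissible $\gamma$, the infimum over $\gamma>0$ obeys the same bound.

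The main obstacle is the $\chi_m$ estimate above, which replaces the single Gaussian moment computation available when $m=1$. I would prove it by putting $h=\|\zeta\|_2-\rE\|\zeta\|_2$, noting $\|\zeta\|_2-\sqrt m\le h$ (since $\rE\|\zeta\|_2\le\sqrt m$) and that $\zeta\mapsto\|\zeta\|_2$ is $1$-Lipschitz, so Gaussian concentration yields $\rP(h\ge u)\le e^{-u^2/2}$ for $u\ge0$. For $s\ge1$ this gives
\[
\rE(h-s)_+^2\le\int_0^\infty 2w\,e^{-(s+w)^2/2}\,dw=2e^{-s^2/2}-2s\int_s^\infty e^{-u^2/2}\,du\le e^{-s^2/2},
\]
the last step from the Mills-ratio bound $\int_s^\infty e^{-u^2/2}\,du\ge\frac{s}{s^2+1}e^{-s^2/2}$ together with $s\ge1$. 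For $0\le s\le1$ the sub-Gaussian tail is too lossy, so instead $\rE(h-s)_+^2\le\rE h_+^2\le\mathrm{Var}(\|\zeta\|_2)\le\tfrac12\le e^{-s^2/2}$ whenever $s\le\sqrt{2\ln2}$; the two ranges overlap, so the estimate holds for all $s\ge0$. The only nonroutine ingredient is the uniform bound $\mathrm{Var}(\|\zeta\|_2)\le\tfrac12$, valid for every $m$ from $\rE\|\zeta\|_2=\sqrt2\,\Gamma((m+1)/2)/\Gamma(m/2)$ and $\rE\|\zeta\|_2^2=m$; everything else is routine.
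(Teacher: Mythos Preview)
Your proof is correct and follows the same strategy as the paper: block decomposition, exact computation on the support, and a $\chi_m$ concentration bound on the complement to control $(q-|S|)\,\rE(\|\zeta\|_2-t)_+^2$. The only difference is in the tail estimate: the paper uses the sharper bound $\rP(\chi_m\ge\lambda_m+\delta)\le\tfrac12 e^{-\delta^2/2}$ (from $\rP(N(0,1)>\delta)\le\tfrac12 e^{-\delta^2/2}$), which after one integration by parts and $(x+t-\lambda_m)^2\ge x^2+(t-\lambda_m)^2$ yields $(q-|S|)e^{-(t-\lambda_m)^2/2}$ directly---this avoids your case split on $s$ and the (true but not entirely trivial) inequality $\mathrm{Var}(\chi_m)<\tfrac12$.
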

\begin{proof}
  Given $\gamma>0$, and let $t=\gamma (\eta-\tilde{\eta}) \lambda$.
  Let $\chi$ be a $\chi$-distributed random variable of degree $m$,  with $\lambda_m$ being its expectation as defined in
  Theorem~\ref{thm:gordon}. Since $\chi$ is the singular value of a $1 \times m$ Gaussian matrix, 
  similar to Theorem~\ref{thm:gordon}, we can apply the Gaussian concentration
  bound \cite{Pisier85} to obtain for all $\delta >0$:
  \[
  \rP \left[ \chi \geq \lambda_m + \delta \right] \leq 0.5 \exp \left(-\delta^2/2 \right) .
  \]
  Now we assume $t\geq \lambda_m$, and
 \[
  \rE_{\epsilon \sim N(0,I_{p \times p})}\inf_{\|b\|_{\G,\infty} \leq 1} 
  \| \gamma (\lambda \sgn_\G(\bbeta)+\tilde{a})+ \gamma (\eta-\tilde{\eta}) \lambda b_{S^c} - \epsilon\|_2^2 
  \leq a_0 + a_1 ,
 \]
 where 
 \[
 a_0 = \rE_{\epsilon \sim N(0,I_{p \times p})}
 \| \gamma (\sgn_\G(\bbeta)+\tilde{a}) + \epsilon_\GS\|_2^2 = 
 m |S| + \gamma^2 \|\lambda \sgn(\bbeta)+\tilde{a}\|_2^2 ,
 \]
 and 
 \begin{align*}
 a_1 =& \rE_{\epsilon \sim N(0,I_{p\times p})} \inf_{\|b\|_{\G,\infty} \leq t}\| b_{\GS^c} - \epsilon_{\GS^c}\|_2^2 \\
 =& (q-|S|) \rE_{\epsilon \sim N(0,I_{m \times m})} (\|\epsilon\|_2-t)_+^2 \\
 =& - (q-|S|)  \int_{x=0}^\infty  x^2 d P(\chi \geq x+t) \\
\leq& 2 (q-|S|)  \int_{x=0}^\infty  x P(\chi \geq x+t) d x \\
\leq& 2 (q-|S|)  \int_{x=0}^\infty  0.5 x \exp(-(x+t- \lambda_m)^2/2) d x \\
\leq&  (q-|S|)  \exp(-(t-\lambda_m)^2/2) \int_{x=0}^\infty  x \exp(-x^2/2) d x \\
=&  (q-|S|)  \exp(-(t-\lambda_m)^2/2) .
\end{align*}
By setting $t=\lambda_m + \sqrt{2\ln (q/|S|-1)}$ and $\gamma=t/(\eta-\tilde{\eta})\lambda$, 
we have $a_1 \leq |S|$. This gives the desired bound using the estimate $\lambda_m \leq \sqrt{m}$.
\end{proof}

We obtain the following bound for group-Lasso with $m>1$ when $q \geq 2|S|$:
given any $\eta \in (0,1]$, $g,\delta \geq 0$ such that $g+\delta \leq n/\sqrt{n+1}$, with probability at least
\[
1 -\frac{1}{2} \exp \left(-\frac{1}{2} (n/\sqrt{n+1}-g-\delta)^2\right) ,
\]
we have either $\lambda \leq \|\tilde{b}\|_{\G,\infty}/ \eta$, or
\[
\|X(\hbeta-\obeta)\|_2^2 + (1-\eta)  \lambda \|\hbeta_{\GS^c}\|_{\G,1}  \leq 
\|X(\bbeta-\obeta)\|_2^2 + 
(4\delta^2)^{-1} \|\lambda\sgn_\G(\bbeta) +\tilde{a}\|_2^2 ,
\]
or 
\[
g^2 < 
|S|(m+1) + \frac{(\sqrt{2\ln (q/|S| -1)}+\sqrt{m})^2}{(\eta-\|\tilde{b}\|_{\G,\infty}/\lambda)^2}
\|\sgn_\G(\bbeta)+\tilde{a}/\lambda\|_2^2 .
\]

Note that in the noise-free case of $\tilde{a}=\tilde{b}=0$, this shows that exact recovery can be achieved
with large probability when $n > |S|(m+1) + |S|(\sqrt{2\ln (q/|S| -1)}+\sqrt{m})^2=O(|S| (m+ \ln (q/|S|)))$.

If we consider the scenario that noise $\epsilon \sim N(0,\sigma^2 I_{n \times n})$ is Gaussian, then
we may set $\lambda$ to be at the order $\sigma \sqrt{n (m+\ln (q/|S|))}$, and with large probability,
we have $\lambda > \|b\|_{\G,\infty}/\eta$, with a nonzero $\tilde{a}$ such that
$\|\tilde{a}\|_2^2= O(|S| \lambda^2)$. This gives the following error bound
with $\delta$ chosen at order $\sqrt{n}$:
  \[
  \|X(\hbeta-\obeta)\|_2^2 + (1-\eta) \lambda \|\hbeta_{\GS^c}\|_{\G,1}  \leq 
  \|X(\bbeta-\obeta)\|_2^2 + O(|S| \lambda^2/n) .
 \]
 With optimal choice of $\lambda$, we have
  \[
  \|X(\hbeta-\obeta)\|_2^2 + (1-\eta) \lambda \|\hbeta_{\GS^c}\|_{\G,1}  \leq 
  \|X(\bbeta-\obeta)\|_2^2 + O(|S| m + \ln (q/|S|)) .
 \]

\subsubsection*{Tangent Space Analysis}

In this analysis, we assume that $\supp(\tilde{a}) \in \GS$.
We can then define
\[
  Q_G^\cT = \bbeta + \Delta Q, \quad
  \Delta Q_S = -0.5 (X_\GS^\top X_\GS)^{-1} (\lambda \sgn_\G(\bbeta_\GS)+\tilde{a}_\GS) \quad \text{ and } \quad \Delta Q_{\GS^c}=0 .
\]
We know that $Q_G^\cT$ is a dual certificate if 
\[
\|X_{\GS^c}^\top X_\GS (X_\GS^\top X_\GS)^{-1}\sgn(\bbeta_\GS) \|_{\G,\infty} \leq \eta-
\|X_{\GS^c}^\top X_\GS (X_\GS^\top X_\GS)^{-1}\tilde{a}_\GS -\tilde{b}_{\GS^c}\|_{\G,\infty}/\lambda . 
\]
This is essentially the irrepresentable condition of \cite{Bach08-groupLasso},
which reduces to the $\ell_1$ irrepresentable condition of \cite{ZhaoYu06} when $m=1$.
This condition implies the following oracle inequality:
\[
\|X(\obeta-\hbeta)\|_2^2 + (1-\eta) \lambda \|\hbeta_{S^c}\|_{\G,1} 
\leq \|X(\obeta-\bbeta)\|_2^2 + 
0.25 \lambda^2 \left\| (X_\GS^\top X_\GS)^{-1/2} (\sgn_\G(\bbeta_\GS) + \tilde{a}_\GS/\lambda)\right\|_2^2 .
\]
This oracle inequality generalizes a simpler result for $m=1$ in \cite{CandesPlan09}.

\subsubsection*{Simple Parameter Estimation Bounds}

Next, we consider the parameter estimation bound using Proposition~\ref{prop:param-est}. 
First, we consider the case of choosing $\tcT=\cT$; let
$\gamma_S$ be the smallest eigenvalue of $X_\GS^{\top}X_\GS$.
If we assume $\obeta \approx \bbeta$ and $\tilde{a}$ is small, we can expect a bound of the form:
\[
\|X(\obeta-\hbeta)\|_2^2 + (1-\eta) \lambda \|\hbeta_{\GS^c}\|_{\G,1} \leq \delta = O(\lambda^2 |S|/\gamma_S) ,
\]
where $\lambda=O(\sigma \sqrt{n (m+\ln q)})$.
Now, if we let $X_{\G_j}$ be the $j$-th group-column (with indices $\G_j$) of $X$, then
\[
\cor(\cT,\cT^\perp)
\leq \sup \left\{ \|(X_\GS^\top X_\GS)^{-1/2} X_\GS^\top X \beta_{\GS^c}\|_2 :
\lambda \|\beta_{\GS^c}\|_{\G,1} \leq \delta' \right\}
\leq \gamma_S^{-1/2} \max_{j \in S^c} \|X_\GS^\top X_{\G_j}\|_\spec \delta' /\lambda ,
\]
where 
\[
\gamma_S = \inf \, \{ \|X_\GS \beta_\GS \|_2^2 : \|\beta_\GS\|_2 =1\} 
\]
is the smallest eigenvalue of $X_\GS^\top X_\GS$.
Proposition~\ref{prop:param-est} gives 
$\|(\hbeta-\obeta)_{\GS^c}\|_{\G,1} \leq \delta'/\lambda$ and 
\[
\|(\hbeta-\obeta)_\GS\|_2
\leq \sqrt{\gamma_S^{-1} (1-\eta) \delta'} + 2 (\delta'/\lambda) \gamma_S^{-1} \max_{j \in S^c} \|X_S^\top X_{\G_j}\|_\spec ,
\]
where $\delta' = \delta/(1-\eta) + \lambda \|(\obeta)_{S^c}\|_1$,
and here we use $\|\cdot\|_\spec$ to denote the spectral norm of a matrix.

For the sake of illustration, we will next
assume that the standard error bound of $\delta' = O(\lambda^2 |S|/\gamma_S)$,
and the above result leads to the following bound
\[
\|(\hbeta-\obeta)_{\GS^c}\|_{\G,1} = O(\lambda |S|/\gamma_S), \quad
\|(\hbeta-\obeta)_\GS\|_2
\leq (\lambda \sqrt{|S|}/\gamma_S) \cdot 
O \left(1 + \sqrt{|S|} \gamma_S^{-1} \max_{j \in \GS^c} \|X_\GS^\top X_{\G_j}\|_\spec\right) . 
\]
If $X$ is very weakly correlated, $X_\GS^\top X_{\G_j}$ will be small.
In the ideal case $\gamma_S^{-1} \max_{j \in \GS^c} \|X_\GS^\top X_{\G_j}\|_\spec=O(1/\sqrt{|S|})$, we have
\[
\|\hbeta-\obeta\|_{\G,1} = O(\lambda |S|/\gamma_S) ,
\]
which is of the optimal order. However, in the pessimistic case of
$\gamma_S^{-1} \max_{j \in S^c} \|X_\GS^\top X_{\G_j}\|_\spec) =O(1)$, then we obtain
\[
\|\hbeta-\obeta\|_{\G,1} = O(\lambda |S|^{3/2}/\gamma_S) ,
\]
which has an extra factor of $\sqrt{|S|}$. 
Using the above derivation, the 2-norm error bound is always of the order 
\[
\|\hbeta-\obeta\|_2 \leq \|(\hbeta-\obeta)_\GS\|_2 + \|(\hbeta-\obeta)_{\GS^c}\|_{\G,1}
= O(\lambda |S|/\gamma_S) ,
\]
which has an extra factor of $\sqrt{|S|}$ compared to the ideal bound of
$\|\hbeta-\obeta\|_2 = O(\lambda \sqrt{|S|})$ in the earlier literature such as \cite{HuangZhang09,LMTG09}
under appropriately defined global restricted eigenvalue assumptions.

It should be mentioned that the assumptions we have made so far are relatively weak
without making global restricted eigenvalue assumptions, and thus the
resulting bound
$\|\hbeta-\obeta\|_2 = O(\lambda |S|)$ might be the best possible under these assumptions. 
In order to obtain the ideal bound of $\|\hbeta-\obeta\|_2 = O(\sqrt{|S|})$ (as appeared in the earlier literature),
we will consider adding extra assumptions.

\subsubsection*{Refined Parameter Estimation Bounds}
The first extra assumption we will make is that sparse eigenvalues are bounded from above, which prevents
the pessimistic case where $X_j$ are highly correlated for $j \in S^c$.
Such correlation can be defined with the upper sparse eigenvalue as:
\[
\rho^+(k) = \{\|X \beta\|_2^2/\|\beta\|_2^2 :  |\supp_\G(\beta)| \leq k\} ,
\]
where $\supp_\G(\beta) \subset \{1,\ldots,q\}$ is the (smallest) index set for groups of $\{\G_j\}$ that cover $\supp(\beta)$.
Using this notation, if
we choose the constrained $\Omega$ and $\obeta$ such that $\beta +\obeta \in \Omega$ implies that
$\|\beta\|_{\G,\infty} \leq M$ for some $M \leq \delta'/\lambda$, then 
it can be shown using the standard shifting argument for group $\ell_1$ regularization 
(e.g., \cite{HuangZhang09}) that for all positive integer $k \leq \delta'/(\lambda M)$:
\[
\cor(\cT,\cT^\perp) 
\leq \sup \left\{ \|X \beta_{\GS^c}\|_2 : \|\beta\|_{\G,\infty} \leq M , \;
\lambda \|\beta_{\GS^c}\|_{\G,1} \leq \delta' \right\}
\leq 2 \rho^+(k-1)^{1/2} \delta' /(\lambda \sqrt{k}) .
\]
This implies that
\[
\|(\hbeta-\obeta)_{\GS}\|_2 \leq
\sqrt{\gamma_S^{-1} (1-\eta) \delta'} + 4 \delta' \gamma_S^{-1/2} \rho^+(k-1)^{1/2}/(\lambda \sqrt{k}) ,
\]
and
\[
\|(\hbeta-\obeta)_{\GS^c}\|_2 \leq \sqrt{\delta' M/\lambda} \leq  \delta' /(\lambda \sqrt{k}) .
\]
Therefore assuming the standard error bound of $\delta' = O(\lambda^2 |S|/\gamma_S)$, we obtain
\[
\|\hbeta-\obeta\|_2 = O(\lambda \sqrt{|S|}/\gamma_S)
\inf_{k \leq \delta'/(\lambda M)} \left[ 1 + \sqrt{|S|/k} \sqrt{\rho^+(k)/\gamma_S} \right] .
\]
If $M$ is sufficiently small, then we can take $k$ sufficiently large so that $|S|=O(k)$, and it is possible to obtain error bound of
$\|\hbeta-\obeta\|_2 = O(\lambda \sqrt{|S|})$.

If we do not impose the $\|\cdot\|_{\G,\infty}$ norm constraint on $\Omega$, then another 
method is to choose $\tcT$ larger than $\cT$, which is the approach employed in 
\cite{CandesPlan11} for the standard $\ell_1$ regularization. Here we consider a similar assumption
for group-Lasso, where we define for all integer $k \geq 1$:
\[
\gamma_{S,k} = \inf \, \{ \|X \beta\|_2^2:  |\supp_\G(\beta)\setminus S| < k, \|\beta\|_2=1 \} .
\]
It is clear that $\gamma_S=\gamma_{S,1}$. Given any $k$ such that $\gamma_{S,k}$ is not too small,
we may define 
\[
\tcT=\{ \beta: \supp_\G(\beta) \subset \tilde{S} \} 
\]
and
\[
\tilde{S} =S \cup \{\text{group indices of largest $k-1$ absolute values of $\|\hbeta-\obeta\|_{G_j}: j \notin S$}\} .
\]
The smallest eigenvalue of $H_\tcT$ is no smaller than $\gamma_{S,k}$, and we also have
$\|(\hbeta-\obeta)_{\tilde{\GS}}\|_{\G,\infty} \leq M=\|(\hbeta-\obeta)_{\GS^c}\|_{\G,1}/k \leq \delta'/(k \lambda)$.
Using the same derivation as before, we have
\[
\|\hbeta-\obeta\|_2 = O(\lambda \sqrt{|S|}/\gamma_{S,k})
\left[ 1 + \sqrt{|S|/k} \sqrt{\rho^+(k)/\gamma_{S,k}} \right] .
\]
This means that if we can choose $k$ at the order of $|S|$ such that 
$\rho^+(k)/\gamma_{S,k}=O(1)$, then we have
\[
\|\hbeta-\obeta\|_2 = O(\lambda \sqrt{|S|}) .
\]
In the standard $\ell_1$ case, the requirement of $\rho^+(k)/\gamma_{S,k}=O(1)$ is also needed in the so-called
``RIP-less'' approach of \cite{CandesPlan11} to obtain the ideal bound for
$\|\hbeta-\obeta\|_2$. The approach is called ``RIP-less'' because this condition is weaker than the classical RIP condition 
of \cite{CandesTao07} (or its group-Lasso counterpart in \cite{HuangZhang09}) that is far more restrictive.
This bound is also flexible as we can choose any $k\geq 1$: in the worst case of $k=1$, we have 
$\|\hbeta-\obeta\|_2 = O(\lambda |S|)$ with an extra $\sqrt{|S|}$ factor. 
This extra factor can be removed as long as we take $k$ at the order of $|S|$.

\subsection{Matrix completion}

Let $\Omegabar$ be the set of $p \times q$ matrices, and assume that the inner product is defined as
$\innerprod{\beta}{\beta'}= \tr(\beta^\top \beta')$.

We consider $x_1,\ldots,x_n$ and observe 
\[
y_i = \innerprod{x_i}{\bbeta_*}+ \epsilon_i ,
\]
where $\{\epsilon_i\}$ are noises.
In order to recover $\bbeta_*$, we consider the following convex optimization problem:
\[
\hbeta = \arg\min\left[ \sum_{i=1}^n (\innerprod{x_i}{\beta} - y_i)^2 + \lambda \|\beta\|_* \right] ,
\]
where $\|\beta\|_*$ is the trace-norm of matrix $\beta$, defined as the sum of its singular values.

In the following, we will briefly discuss results that can be obtained from our analysis using the tangent space analysis.
For simplicity, we will keep the discussion at a relatively high level, with some detailed discussions skipped.

We assume that $\bbeta$ is of rank-$r$, and
$\bbeta=U \Sigma V^\top$ is the SVD of $\bbeta$, where $U$ and $V$ are $p \times r$
and $q \times r$ matrices.
The tangent space is defined as $\cT=\{\beta: P_\cT(\beta) = \beta\}$, 
where $P_\cT(\beta) = U U^\top \beta + \beta V V^\top - U U^\top \beta V V^\top$.

Using notations in Section~\ref{sec:struct-L1}, we may take $e_S(W)=U V^\top$ and
$e_{S^c}(\scrB)=\{b \in \cT^\perp: \|b\|_\spec \leq \lambda\}$ in (\ref{eq:structG}).
Therefore
\[
\|\beta\|_\scrB = \lambda \|P_\cT^\perp \beta\|_* \qquad
\|P_\cT^\perp b\|_\scrBD =  \| P_\cT^\perp b\|_\spec/\lambda .
\]
This means that we may take $G$ in (\ref{eq:structG-int}) as
$G=\{u: \; P_\cT u=\lambda U V^\top \; \& \; \|P_\cT^\perp u\|_\spec \leq \eta \lambda\}$ for some $0 \leq \eta \leq 1$,
which implies that $R(\beta)-R_G(\beta) \geq (1-\eta) \|P_\cT^\perp \beta\|_1$. 

We further consider target $\obeta$ that satisfies (\ref{eq:target}), which we can rewrite as
\[
2 \sum_{i=1}^n x_i (\innerprod{x_i}{\obeta -\bbeta_*}- \epsilon_i) = \tilde{a} + \tilde{b} ,
\]
where $\tilde{b} \subset \cT^\perp$, and $\|\tilde{b}\| = \tilde{\eta} \lambda$.
We assume that $\|\tilde{a}\|_2$ is small.

For matrix completion, we assume that $\{x_i\}$ are matrices of the form $e_{a,b}$ with
1 at entry $(a,b)$ and 0 elsewhere, where $(a,b)$ is uniformly at random.
It can be shown  using techniques of \cite{CandesR09,Recht09}
that under appropriate incoherence conditions, 
a tangent space dual certificate can be constructed with large probability that satisfies 
(\ref{eq:irrep-struct}). Due to the space limitation, we skip the details.
This leads to
\[
D_L(\obeta,\hbeta)+ (1-\eta)  \lambda \|P_\cT^\perp \hbeta \|_*
\leq D_L(\obeta,\bbeta) + \delta,
\quad \delta= 0.25 \innerprod{\lambda U V^\top +\tilde{a}}{H_\cT^{-1} (\lambda U V^\top +\tilde{a})} .
\]

Note that for sufficiently large $n$, the smallest eigenvalue of $H_\cT$ can be lower bounded as $O(pq/n)$.
Since $\innerprod{\lambda UV^\top}{\lambda UV^\top}= \lambda^2 r$,
we may generally choose $\lambda$ such that $\innerprod{\tilde{a}}{\tilde{a}}= O(\lambda^2 r)$, we thus obtain
the following oracle inequality for matrix completion:
\[
D_L(\obeta,\hbeta)+ (1-\eta)  \lambda \|P_\cT^\perp \hbeta \|_*
\leq D_L(\obeta,\bbeta) + O(\lambda^2 p q r/n) .
\]
If $\epsilon_i$ are iid Gaussian noise $N(0,\sigma^2)$, then we may choose
$\lambda$ at the order $\sigma \sqrt{n \ln \max(p,q)/\min(p,q)}$. This gives
\[
D_L(\obeta,\hbeta)+ (1-\eta)  \lambda \|P_\cT^\perp \hbeta \|_*
\leq D_L(\obeta,\bbeta) + O(\sigma^2 \max(p,q) r \ln (p+q)) .
\]

In the noise-free case, we can let $\lambda \to 0$, and exact recovery is obtained.
This complements a related result of \cite{KoTsLo10} that does not lead to exact recovery even when $\sigma=0$.
In the noisy case, parameter estimation bounds can be obtained in a manner analogous to
the parameter estimation bound for group $\ell_1$ regularization. 
Due to the space limitation, we will leave the details to a dedicated report.

\subsection{Mixed norm regularization}
\label{sec:mixed-norm}
The purpose of this example is to show that the dual certificate analysis can be applied to more complex
regularizers that may be difficult to analyze using traditional ideas such as the RIP analysis.
The analysis is similar to that of group $\ell_1$ regularization but with more complex
calculations. For simplicity, we will only provide a sketch of the analysis while skipping some of the details.

We still consider the regression problem 
\[
y= X \bbeta_* + \epsilon ,
\]
where for simplicity we only consider Gaussian noise $\epsilon \sim N(0,\sigma^2 I_{n \times n})$. 
We assume that $p=q m$, and the variables $\{1,\ldots,p\}$ are divided into $q$ non-overlapping blocks
$\G_1,\ldots,\G_{q} \subset \{1,\ldots,p\}$, each block of size $m$. 

The standard sparse regularization methods are either using the Lasso regularizer of (\ref{eq:L1}) or 
using the group-Lasso regularizer of (\ref{eq:group-L1}). 
Let $S_S=\supp(\bbeta)$ and $S_\G=\supp_\G(\bbeta)$, we know that 
under suitable restricted strong convexity conditions, the following oracle inequality holds for 
the Lasso regularizer (\ref{eq:L1}) 
\[
\|X(\obeta-\hbeta)\|_2^2+ (1-\eta) \lambda \|\hbeta_{S_S^c}\|_{\G,1} 
\leq \|X(\obeta-\bbeta)\|_2^2 +  O(\sigma^2 n |S_S| \ln p /\gamma_{S_s}) ,
\]
and the following oracle inequality holds for the group Lasso regularizer (\ref{eq:group-L1}):
\[
\|X(\obeta-\hbeta)\|_2^2 + (1-\eta) \lambda \|\hbeta_{\GS_\G^c}\|_{\G,1} 
\leq \|X(\obeta-\bbeta)\|_2^2 + O(\sigma^2 n |S_\G| (m + \ln q) /\gamma_{S_\G}) .
\]
Note that we always have $|S_S| \leq |S_\G| m$. 
By comparing the above two oracle inequalities, we can see that the benefit of using group sparsity is when 
$|S_S| \approx |S_\G| m$, which means that sparsity pattern occur in groups,
and the group structure is correct. In such case, the dimension dependency reduces from
$|S_S| \ln p$ to $|S_\G| \ln q \approx m^{-1} |S_S| \ln q$. However,
if some of the signals do not occur in groups, then it is possible that $|S_\G| m$ can be much larger than $|S_S|$,
and in such case, Lasso is superior to group Lasso.

It is natural to ask whether it is possible to combine the benefits of Lasso and group Lasso regularizers.
Assume that $\bbeta$ is decomposed into two parts $\bbeta=\tbeta'+\tbeta''$ so that 
$\tbeta''$ covers nonzeros of $\tbeta$ that occur in groups, and $\tbeta'$ covers nonzeros of $\tbeta$ that 
do not occur in groups. Ideally we would like to achieve an oracle inequality of
\begin{align}
&\|X(\obeta-\hbeta)\|_2^2 + (1-\eta) \lambda \|\hbeta\| \label{eq:opt-decomp}
\\
\leq& \|X(\obeta-\bbeta)\|_2^2 + 
O\left(\frac{\sigma^2 n}{\gamma} \left(|\supp(\tbeta')| \ln p + |\supp_\G(\tbeta'')|(m + \ln q)\right) \right)  \nonumber \\
=& \|X(\obeta-\bbeta)\|_2^2 + 
O\left(\frac{\sigma^2 n}{\gamma} \left(|\supp(\bbeta \setminus \cup_{j \in \tilde{S}} \G_j) | \ln p + |\tilde{S}|(m + \ln q)\right) \right) , \nonumber
\end{align}
where $\|\hbeta\|$ is a certain seminorm of $\hbeta$, and $\tilde{S}=\{j: (m + \ln q) \leq c |\supp(\bbeta_{\G_j})|\ln p\}$
for some constant $c>0$. 
We note that the optimal decomposition can be achieved by taking
$\tbeta'_{\G_j}=0$ with $\tbeta''_{\G_j}=\bbeta_{\G_j}$ when $j \in S'$ and
and $\tbeta''_{\G_j}=0$ with $\tbeta'_{\G_j}=\bbeta_{\G_j}$ otherwise.

In the following, we show that the oracle inequality of (\ref{eq:opt-decomp}) can be achieved via a mixed norm regularizer 
defined below:
\begin{equation}
R(\beta) =\inf_{\beta=\beta'+\beta''} \left[ \lambda_1 \|\beta'\|_1 + \lambda_\G \|\beta''\|_{\G,1} \right] .
\label{eq:reg-mixed}
\end{equation}
This mixed regularizer can be referred to  as the infimal convolution of
Lasso and group Lasso regularizers, and it is a special case of \cite{Jacob09icml}.
If we can prove an oracle inequality of (\ref{eq:opt-decomp}) for this regularizer, then it 
means that we can adaptively 
decompose the signal $\bbeta$ into two parts $\beta'$ and $\beta''$ in order to achieve the most significant benefits
with standard sparsity bound for $\beta'$ and group sparsity bound for $\beta''$ (without knowing the decomposition a priori).

We will consider the decomposed parametrization $[\beta',\beta'']$, and the mixed norm regularizer 
(\ref{eq:reg-mixed}) becomes a special case of (\ref{eq:struct-L1}). 
Although the loss function $L(\cdot)$ is not strongly convex with respect to this parametrization,
this does not cause problems because we are only interested
in $\beta=\beta'+\beta''$. Since $L(\cdot)$ is strongly convex with respect to $\beta$
with an appropriate tangent space $\cT$, we only need to consider the direction along $\beta=\beta'+\beta''$ when applying
the results. In this regard, it is easy to verify that at the optimal decomposition in (\ref{eq:reg-mixed}),
there exist $u' \in \partial \|\beta'\|_1$ and $u'' \in \partial \|\beta''\|_{\G,1}$
such that $\lambda_1 u' = \lambda_\G u''$. Moreover, for any such $(u',u'')$, $\lambda_1 u' \in \partial R(\beta)$.

In order to define $\cT$, we first define $\scrB$. Consider
$S_\G=\{j: \lambda_\G < 2 \lambda_1 \|\sgn(\bbeta)_{\G_j}\|_2\}$, with the corresponding
support $\GS_\G=\cup_{j \in S_\G} \G_j$.
The meaning of $S_\G$ is that groups in $S_\G$ are allowed to use both standard and group sparsity to represent $\bbeta$, 
while groups in $S_\G^c$ always use standard sparsity only.
The set $\GS_\G$ will expand the tangent space for the nonzero group sparsity elements.
We also define the tangent space support set for single sparsity elements as 
$\GS_1= \supp(\bbeta) \cup \GS_\G$.
Let 
\begin{equation}
[\bbeta',\bbeta'']=\arg\min_{(\beta',\beta''): \bbeta=\beta'+\beta''}\left[ \lambda_1 \|\beta'\|_1 + \lambda_\G \|\beta''\|_{\G,1} 
\right] .
\label{eq:mixed-norm-optdecomp}
\end{equation}
It satisfies $\lambda_1 \nabla \|\bbeta'\|_1 =\lambda_\G \nabla \|\bbeta''\|_{\G,1}$, and
$\nabla R(\bbeta)=\lambda_1 \nabla \|\bbeta'_{\G_j}\|_1 +\lambda_\G \nabla \|\bbeta''_{\G_j}\|_{\G,1}$.
Consider $\G_j$ such that $\bbeta''_{\G_j} \neq 0$, we obtain from 
$\lambda_1 \nabla \|\bbeta'\|_1 =\lambda_\G \nabla \|\bbeta''\|_{\G,1}$ that
$[\nabla \|\bbeta'_{\G_j}\|_{1}]_i \neq 0$ only when $\bbeta_i \neq 0$ for $i \in \G_j$; therefore
$\|(\nabla \|\bbeta'_{\G_j}\|_{1})\|_2 \leq \|\sgn(\bbeta)_{\G_j}\|_2$, and thus
$\lambda_\G \leq \lambda_1 \|(\nabla \|\bbeta'_{\G_j}\|_{1})\|_2 \leq \lambda_1 \|\sgn(\bbeta)_{\G_j}\|_2$.
It implies that $j \in S_\G$ and thus $\supp(\bbeta'') \subset S_\G$.
Now we can define $W$ and $\scrB$ as
\[
e_S(W) = \lambda_1 \nabla \|\bbeta'\|_1=\lambda_\G \nabla \|\bbeta''\|_{\G,1}
\]
where we can take $[\nabla \|\bbeta'\|_1]_j=0$ when $j \notin \GS_1$; and define
\[
e_{S^c}(\scrB)=\{u_{\GS_1^c} : \|u_{\GS_1^c}\|_\infty \leq \lambda_1 \; \& \; \|u_{\GS_\G^c}\|_{\G,\infty} \leq 0.5 \lambda_\G \} .
\]
With the above choices, we have for all $u \in e_{S^c}(\scrB)$, $e_S(W) + u \in \partial R(\bbeta)$ because
it can be readily checked that $e_S(W) + u \in \partial (\lambda_1 \|\bbeta'\|_1) \cap \partial (\lambda_\G \|\bbeta''\|_{\G,1})$.
Moreover, we have
\[
\sup_{u \in e_{S^c}(\scrB)} \innerprod{u}{\beta} = 
\min_{\hbeta_{\GS_1^c}=\beta'_{\GS_1^c}+\beta''_{\GS_\G^c}}
\left[\lambda_1 \|\beta'_{\GS_1^c}\|_1 + 0.5 \lambda_\G \|\beta''_{\GS_\G^c}\|_{\G,1} \right] .
\]
We can thus define $G$ according to (\ref{eq:structG-int}) as
\[
G=\{e_S(W)+\eta u : u \in e_{S^c}(\scrB)\} ,
\]
so that $G \subset \partial R(\bbeta)$. 

For simplicity, we assume that $\obeta$ satisfies (\ref{eq:target}) with $\tilde{a}=0$, which can be achieved with the construction in (\ref{eq:target-opt}).
With these choices, we obtain from  (\ref{eq:recovery-global-dc-quadratic-struct}) the following
oracle inequality (under appropriate restricted eigenvalue condition with parameter $\gamma$):
\begin{align*}
& \| X (\hbeta-\obeta)\|_2^2+ (1-\eta) 
\min_{\hbeta_{\GS_1^c}=\beta'_{\GS_1^c}+\beta''_{\GS_\G^c}}
\left[\lambda_1 \|\beta'_{\GS_1^c}\|_1 + 0.5 \lambda_\G \|\beta''_{\GS_\G^c}\|_{\G,1} \right]\\
\leq &\| X (\bbeta-\obeta)\|_2^2 + \gamma^{-1}  O(\|e_S(W)\|_2^2) \\
\leq& \| X (\bbeta-\obeta)\|_2^2 + \gamma^{-1}
O \left( \lambda_1^2 |\supp(\bbeta)\setminus S_\G | + \lambda_\G^2 |S_\G| \right) .
\end{align*}
The last inequality follows from 
\[
\|e_S(W)\|_2^2 \leq \sum_{j \in S_\G} \lambda_\G^2 \|(\nabla \|\bbeta''\|_{\G,1})_{\G_j}\|_2^2 +
\sum_{j \notin S_\G} \lambda_1^2 \|(\nabla \|\bbeta'\|_{1})_{\G_j}\|_2^2 ,
\]
which is a consequence of $e_S(W)=\lambda_1 \nabla \|\bbeta'\|_1 =\lambda_\G \nabla \|\bbeta''\|_{\G,1}$.

Similar to the standard Lasso and group Lasso cases, for mixed norm regularization, we may still
choose the Lasso regularizer parameter $\lambda_1=c_1 \sigma \sqrt{n \ln(p)}$,
and the group Lasso regularization parameter $\lambda_\G= c_2 \sigma \sqrt{n (m+\ln (q))}$
so that (\ref{eq:target}) holds ($c_1,c_2>0$ are constants). 
Plug in these values, we obtain the following oracle inequality with this choice of parameters:
\begin{align*}
& \| X (\hbeta-\obeta)\|_2^2+ (1-\eta) 
\min_{\hbeta_{\GS_1^c}=\beta'_{\GS_1^c}+\beta''_{\GS_\G^c}}
\left[\lambda_1 \|\beta'_{\GS_1^c}\|_1 + 0.5 \lambda_\G \|\beta''_{\GS_\G^c}\|_{\G,1} \right]\\
\leq & \| X (\bbeta-\obeta)\|_2^2 + \gamma^{-1} n \sigma^2 \cdot 
O \left( (n+\ln p) |\supp(\bbeta)\setminus \GS_\G| + |S_\G|(m + \ln q) \right) .
\end{align*}
Since the definition of $S_\G$ is such that $j \in S_\G$ when
$m+\ln (q) \leq 4 (c_1/c_2)^2 |\supp(\bbeta_{\G_j})| \ln(p)$,
the right hand side achieves the optimal decomposition error 
bound in (\ref{eq:opt-decomp}). This means that the mixed norm regularizer (\ref{eq:reg-mixed})
achieves optimal adaptive decomposition of standard and group sparsity. 

\subsection{Generalized linear models} 
\label{sec:genlin-example}
Results for generalized linear models can be easily obtained under the general framework of this paper, as
discussed after Corollary~\ref{cor:dual_certificate-oracle}. 
This section presents a more elaborated treatment. 
In generalized linear models, we may write the negative log likelihood as 
\begin{equation}
L(\beta) = \sum_{i=1}^n\ell_i(\innerprod{x_i}{\beta}) , \label{eq:gen-lin-model}
\end{equation}
where $x_i\in\Omegabar^*$ and $\ell_i$ may depend on certain response variable $y_i$. 
Suppose $\ell_i(t)$ are convex and twice differentiable. Let  
\bes
{\kappa} = \max_{i\le n}\sup_{s<t}\big|\log(\ell_i''(t))-\log(\ell_i''(s))\big|\big/|t-s|
\ees
be the maximum Lipschitz norm of $\log(\ell_i''(t))$. We note that $\kappa=1$ for logistic 
regression with $\ell_i(t) = \ln(1+e^{-t})$, $\kappa=1$ for the Poisson/log linear regression 
with $\ell_i(t) = e^t - y_i t$, and $\kappa = 0$ for linear regression. 
For sparse $\bbeta$, $\cC\subset\Omega$, norm $\|\cdot\|$, and $j=1,2$, define 
\bes
\gamma_j(\bbeta;r,\cC,\|\cdot\|) = \inf\Big\{\sum_{i=1}^n\frac{\ell_i''(\innerprod{x_i}{\bbeta})}{2e}
\min\Big(\frac{\innerprod{x_i}{\beta-\bbeta}^2}{\|\beta-\bbeta\|^2},
\frac{|\innerprod{x_i}{\beta-\bbeta}|^{2-j}}{r^j\|\beta-\bbeta\|^{2-j}}\Big): 
\beta\in\cC\Big\}. 
\ees
The following lemma can be used to bound $D_L(\beta,\bbeta)$ and $D_L(\bbeta,\beta)$ 
from below. 

\begin{lemma}\label{lm:GLM} 
Given $\bbeta$, $\cC\subset\Omega$ and norm $\|\cdot\|$, 
let $\beta\in\Omegabar$ such that the ray from $\bbeta$ to $\beta$ and beyond intersects with $\cC$, 
$\{t(\beta-\bbeta)+\bbeta: t>0\}\cap\cC\neq\emptyset$. 
If $0<\|\beta-\bbeta\|\le r$, 
\bes
\frac{D_L(\beta,\bbeta)}{\|\beta-\bbeta\|^2} \ge 
\gamma_1(\bbeta; \kappa r,\cC,\|\cdot\|),\quad 
\frac{D_L(\bbeta,\beta)}{\|\beta-\bbeta\|^2} \ge 
\gamma_2(\bbeta; \kappa r,\cC,\|\cdot\|). 
\ees
\end{lemma}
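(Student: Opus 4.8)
The plan is to localize everything to the scalar functions $\ell_i$. Since $L$ is a sum of compositions of the $\ell_i$ with the linear forms $\innerprod{x_i}{\cdot}$, the chain rule for Bregman divergences gives
\[
D_L(\beta,\bbeta)=\sum_{i=1}^n D_{\ell_i}\big(\innerprod{x_i}{\beta},\innerprod{x_i}{\bbeta}\big),\qquad
D_L(\bbeta,\beta)=\sum_{i=1}^n D_{\ell_i}\big(\innerprod{x_i}{\bbeta},\innerprod{x_i}{\beta}\big).
\]
Writing $b_i=\innerprod{x_i}{\bbeta}$, $\Delta_i=\innerprod{x_i}{\beta-\bbeta}$, and $\rho=\|\beta-\bbeta\|$, Taylor's formula with integral remainder yields the exact identities $D_{\ell_i}(b_i+\Delta_i,b_i)=\Delta_i^2\int_0^1(1-t)\,\ell_i''(b_i+t\Delta_i)\,dt$ and $D_{\ell_i}(b_i,b_i+\Delta_i)=\Delta_i^2\int_0^1 t\,\ell_i''(b_i+t\Delta_i)\,dt$. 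The definition of $\kappa$ gives $\ell_i''(b_i+t\Delta_i)\ge\ell_i''(b_i)\,e^{-\kappa t|\Delta_i|}$ for $t\in[0,1]$, which I would plug into both integrals.

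This reduces the two divergence bounds to one-variable calculus. The auxiliary estimates I would prove are, for all $c\ge0$ (with the convention $1/0=\infty$),
\[
\int_0^1(1-t)e^{-ct}\,dt \ge \frac1{2e}\min\Big(1,\frac1c\Big),\qquad
\int_0^1 t\,e^{-ct}\,dt \ge \frac1{2e}\min\Big(1,\frac1{c^2}\Big);
\]
both are handled the same way: for $c\le1$, bound $e^{-ct}\ge e^{-1}$ on $[0,1]$ and integrate $(1-t)$ resp.\ $t$; for $c\ge1$, discard the integral over $[1/c,1]$ and use $e^{-ct}\ge e^{-1}$ on $[0,1/c]$. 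Taking $c=\kappa|\Delta_i|$ and using $\Delta_i^2/c=|\Delta_i|/\kappa$ and $\Delta_i^2/c^2=1/\kappa^2$, this yields the per-coordinate bounds
\[
D_{\ell_i}(b_i+\Delta_i,b_i)\ge\frac{\ell_i''(b_i)}{2e}\min\Big(\Delta_i^2,\frac{|\Delta_i|}{\kappa}\Big),\qquad
D_{\ell_i}(b_i,b_i+\Delta_i)\ge\frac{\ell_i''(b_i)}{2e}\min\Big(\Delta_i^2,\frac1{\kappa^2}\Big).
\]

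It remains to divide by $\rho^2$, sum over $i$, and compare with the infimum defining $\gamma_j(\bbeta;\kappa r,\cC,\|\cdot\|)$. The hypothesis that the ray from $\bbeta$ through $\beta$ meets $\cC$ supplies $\beta'=\bbeta+t'(\beta-\bbeta)\in\cC$ with $t'>0$; since both ratios in the definition of $\gamma_j$ are invariant under positive rescaling of $\beta-\bbeta$, evaluating the defining functional at $\beta'$ gives
\[
\gamma_j(\bbeta;\kappa r,\cC,\|\cdot\|)\le\sum_{i=1}^n\frac{\ell_i''(b_i)}{2e}\min\Big(\frac{\Delta_i^2}{\rho^2},\frac{|\Delta_i|^{2-j}}{(\kappa r)^j\rho^{2-j}}\Big).
\]
Dividing the per-coordinate bounds above by $\rho^2$ gives exactly these expressions but with $\kappa r$ replaced by $\kappa\rho$, and here the hypothesis $\rho\le r$ is precisely what makes $|\Delta_i|/(\kappa\rho^2)\ge|\Delta_i|/(\kappa r\rho)$ and $1/(\kappa^2\rho^2)\ge1/(\kappa^2 r^2)$; hence each term of $D_L(\beta,\bbeta)/\rho^2$ dominates the corresponding ($j=1$) term on the right-hand side above, and each term of $D_L(\bbeta,\beta)/\rho^2$ dominates the corresponding ($j=2$) term. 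Summing over $i$ finishes the proof, and the cases $\Delta_i=0$ or $\kappa=0$ are immediate under the stated conventions. The one genuine design point --- which I regard as the main thing to get right --- is the precise shape of the two scalar integral inequalities, namely the constant $1/(2e)$ and the $\min(1,c^{-j})$ form, chosen so that after the factor $e^{-\kappa t|\Delta_i|}$ is absorbed the bound matches the $\min$ structure of $\gamma_j$ while the localization radius enters only through $\rho\le r$.
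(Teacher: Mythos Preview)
Your proof is correct and follows essentially the same route as the paper: Taylor's integral remainder for each $\ell_i$, the Lipschitz bound $\ell_i''(b_i+t\Delta_i)\ge \ell_i''(b_i)e^{-\kappa t|\Delta_i|}$, truncation of the exponential on $\{t\le 1/c\}$ to get the $\frac{1}{2e}\min(1,c^{-j})$ estimates, and scale-invariance of the ratios to pass from $\beta$ to a point of $\cC$. The only cosmetic differences are that the paper writes the truncation via an indicator $I\{t\kappa|\Delta_i|\le 1\}$ rather than your explicit case split, and it reduces to $\|\beta-\bbeta\|=r$ at the outset (using monotonicity of $\gamma_j$ in $r$) instead of carrying $\rho\le r$ to the final comparison.
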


\begin{proof} Let $\tbeta = t_0(\beta - \bbeta) + \bbeta \in\cC$. 
Since $\gamma_j(\bbeta;r,\cC,\|\cdot\|)$ is decreasing in $r$, it suffices to 
consider $0< \|\beta-\bbeta\| =r$. 
Since ${\kappa}$ is the Lipschitz norm of $\log(\ell_i''(t))$, 
\bes
D_L(\beta,\bbeta)/r^2 
&=& \int_0^1(1-t) \sum_{i=1}^n\ell_i''(\innerprod{x_i}{\bbeta}+t\innerprod{x_i}{\beta-\bbeta})
\innerprod{x_i}{\beta-\bbeta}^2 dt/r^2
\cr &\ge& \int_0^1(1-t) \sum_{i=1}^n\ell_i''(\innerprod{x_i}{\bbeta})
e^{-t{\kappa}|\innerprod{x_i}{\beta-\bbeta}|}\innerprod{x_i}{\beta-\bbeta}^2 dt/r^2
\cr &\ge& \sum_{i=1}^n\ell_i''(\innerprod{x_i}{\bbeta})\innerprod{x_i}{\beta-\bbeta}^2
\int_0^1(1-t)I\{t{\kappa}|\innerprod{x_i}{\beta-\bbeta}|\le 1\}dt/(er^2). 
\ees
Since $\int_0^{x\wedge 1}(1-t)dt = x\wedge 1 - (x\wedge 1)^2/2\ge (x\wedge 1)/2$, we find 
\bes
D_L(\beta,\bbeta)/r^2 
&\ge& \sum_{i=1}^n\ell_i''(\innerprod{x_i}{\bbeta})\innerprod{x_i}{\beta-\bbeta}^2
\min\Big(1, \frac{1}{\kappa |\innerprod{x_i}{\beta-\bbeta}|}\Big)\frac{1}{2er^2}
\cr &\ge& \sum_{i=1}^n\frac{\ell_i''(\innerprod{x_i}{\bbeta})}{2e}
\min\Big(\frac{\innerprod{x_i}{\beta-\bbeta}^2}{\|\beta-\bbeta\|^2},
\frac{|\innerprod{x_i}{\beta-\bbeta}|}{\kappa r\|\beta-\bbeta\|}\Big). 
\ees
Since $\innerprod{x_i}{\beta-\bbeta}|/\|\beta-\bbeta\|=\innerprod{x_i}{\tbeta-\bbeta}|/\|\tbeta-\bbeta\|$ 
and $\tbeta\in \cC$, $D_L(\beta,\bbeta)/r^2\ge \gamma_1(\bbeta; \kappa r,\cC,\|\cdot\|)$. 

The proof for $D_L(\bbeta,\beta)$ is similar. We have 
\bes
D_L(\bbeta,\beta)/r^2 
&=& \int_0^1 \sum_{i=1}^n\ell_i''(\innerprod{x_i}{\bbeta}+t \innerprod{x_i}{\beta-\bbeta})\innerprod{x_i}{\beta-\bbeta}^2 tdt /r^2
\cr &\ge & \sum_{i=1}^n\ell_i''(\innerprod{x_i}{\bbeta})\innerprod{x_i}{\beta-\bbeta}^2
\int_0^1I\{t{\kappa} |\innerprod{x_i}{\beta-\bbeta}| \le 1\} tdt/(er^2)
\cr &=& (2e)^{-1}\sum_{i=1}^n \ell_i''(\innerprod{x_i}{\bbeta})
\min\Big(\frac{\innerprod{x_i}{\beta-\bbeta}^2}{\|\beta-\bbeta\|^2}, \frac{1}{{\kappa}^2r^2}\Big). 
\ees
This gives $D_L(\bbeta,\beta)/r^2\ge \gamma_2(\bbeta; \kappa r,\cC,\|\cdot\|)$. 
\end{proof}

Suppose $\gamma_j(\bbeta;r_0,\cC,\|\cdot\|)\ge \gamma_0$ for $j=1,2$. 
Lemma \ref{lm:GLM} asserts that for the $\beta$ considered, both $D_L(\beta,\bbeta)$ and 
$D_L(\bbeta,\beta)$ are no smaller than $\gamma_0\|\beta-\bbeta\|^2$ 
for ${\kappa}\|\beta-\bbeta\|\le r_0$. For larger $r=\|\beta-\bbeta\|$, 
\bes
&& D_L(\beta,\bbeta)\ge r^2\gamma_1(\bbeta; \kappa r,\cC,\|\cdot\|) 
\ge \|\beta-\bbeta\| (r_0/{\kappa})\gamma_1(\bbeta; r_0,\cC,\|\cdot\|),\qquad 
\cr && D_L(\bbeta,\beta)\ge r^2 \gamma_2(\bbeta; \kappa r,\cC,\|\cdot\|)
\ge (r_0/{\kappa})^2\gamma_2(\bbeta; r_0,\cC,\|\cdot\|). 
\ees
Since $D_L(\beta,\bbeta)$ is convex in $\bbeta$ and $D_L(\bbeta,\beta)$ is not, such 
lower bounds are of the best possible type for large $\|\beta-\bbeta\|$ 
when $\ell_i''(t)$ are small for large $t$, as in the case of logistic regression. 

Given $\bbeta$, setting $\cC_G=\{\beta: \sup_{u\in G} \innerprod{u+\nabla L(\bbeta)}{\beta} \le 0\}$ 
yields the lower bound 
\bes
\gamma_L(\bbeta;r,G,\|\cdot\|) \ge \gamma_1(\bbeta;\kappa r,\cC_G,\|\cdot\|)
\ees 
for the RSC constant in Definition \ref{def:RSC}. The lower bound 
$D_L(\bbeta,\beta)\ge (r_0/{\kappa})^2\gamma_2(\bbeta; r_0,\cC,\|\cdot\|)$ can be 
used to check the condition $D_L(\bbeta,\beta)\ge D_{\bar{L}}(\beta,\bbeta)$ 
in Corollaries \ref{cor:dual_certificate-oracle} and \ref{cor:recovery-global-dc-oracle}. 

We measure the noise level by 
\bes
\eta(\obeta) = \sup\big\{|\innerprod{\nabla L(\obeta)}{\beta}|/R(\beta): \beta\neq 0,\beta\in\Omega\big\}. 
\ees
Let $\bbeta$ be a sparse vector and $G\subseteq\pa R(\bbeta)$. 
Given $\{\bbeta,\obeta,\|\cdot\|\}$, we measure the penalty level by 
\bes
\lam(\bbeta,\obeta;\|\cdot\|)
= \sup\big\{ \innerprod{\nabla L(\obeta)+u}{\bbeta-\beta}/\|\beta-\bbeta\|: 
u\in\pa R(\beta),\beta\in\Omega\}. 
\ees 
Since for all $u \in \pa R(\beta)$ and $\bar{u} \in \pa R(\bbeta)$, we have
$\innerprod{u}{\bbeta-\beta} \leq \innerprod{\bar{u}}{\bbeta-\beta}$, it follows that
\[
\lam(\bbeta,\obeta;\|\cdot\|)
\leq \inf_{\bar{u} \in \pa R(\bbeta)} \|\nabla L(\obeta)+\bar{u}\|_D ,
\]
where $\|\cdot\|_D$ is the dual norm of $\|\cdot\|$.
This connects the quantity $\lam(\cdot)$ to 
$\inf_{u  \in G} \|u+\nabla L(\bbeta)\|_D$ used in Theorem~\ref{thm:dual_certificate-error}.

Similarly, we may define 
\bes
\cC_{\bbeta,\obeta} = \Big\{\beta: \sup_{u\in\pa R(\beta)}\innerprod{u+\nabla L(\obeta)}{\bbeta-\beta} > 0\Big\} .
\ees
Note that  we have
$\cC_{\bbeta,\obeta} \subset \Big\{\beta: \inf_{\bar{u}\in\pa R(\bbeta)}\innerprod{\bar{u}+\nabla L(\obeta)}{\bbeta-\beta} > 0\Big\}$,
and this relationship connects the quantity 
$\gamma_2(\bbeta;1,\cC_{\bbeta,\obeta}\|\cdot\|)$ in Theorem~\ref{thm:GLM} to the quantity
$\gamma_L(\bbeta;r,G,\|\cdot\|)$ in Defintion~\ref{def:RSC}.
The following result  for generalized linear models is related to Theorem~\ref{thm:dual_certificate-error}, but 
is more specific to the loss function (\ref{eq:gen-lin-model}) and more elaborated. 

\begin{theorem}\label{thm:GLM} Suppose $\eta(\obeta)<1$. Let $\bbeta$ be a sparse vector such that 
\bel{thm:GLM-1}
\sup_{\beta\in\cC_{\bbeta,\obeta}}\|\beta-\bbeta\|
\le \frac{\gamma_2(\bbeta;1,\cC_{\bbeta,\obeta}\|\cdot\|)}
{{\kappa}^2\lam(\bbeta,\obeta;\|\cdot\|)}
+ \frac{\lam(\bbeta,\obeta;\|\cdot\|)}
{4\gamma_2(\bbeta;1,\cC_{\bbeta,\obeta}\|\cdot\|)}. 
\eel
Then, 
\bes
D_L(\hbeta,\obeta) \le D_L(\bbeta,\obeta) + \frac{\lam^2(\bbeta,\obeta;\|\cdot\|)}
{4\gamma_2(\bbeta;1,\cC_{\bbeta,\obeta}\|\cdot\|)}. 
\ees
\end{theorem}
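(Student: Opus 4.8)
The plan is to argue directly, in the style of the proof of Theorem~\ref{thm:dual_certificate-recovery}, and then invoke Lemma~\ref{lm:GLM} to lower-bound the resulting strong-convexity term. Write $\lam_*=\lam(\bbeta,\obeta;\|\cdot\|)$, $\gamma_*=\gamma_2(\bbeta;1,\cC_{\bbeta,\obeta},\|\cdot\|)$, and $r=\|\hbeta-\bbeta\|$; the standing assumption $\eta(\obeta)<1$ is used only to guarantee that $\hbeta$ is well defined. If $\hbeta=\bbeta$ there is nothing to prove, so assume $r>0$.

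First I would extract the optimality inequality for $\hbeta$ exactly as in the proof of Proposition~\ref{prop:subgrad} with $t=1$: since $\hbeta$ minimizes $L+R$ over the convex set $\Omega\ni\bbeta$, there is some $\hat u\in\partial R(\hbeta)$ with $\innerprod{\nabla L(\hbeta)+\hat u}{\bbeta-\hbeta}\ge 0$, i.e. $\innerprod{\nabla L(\hbeta)}{\hbeta-\bbeta}\le \innerprod{\hat u}{\bbeta-\hbeta}$. Expanding $D_L(\hbeta,\obeta)$, $D_L(\bbeta,\obeta)$ and $D_L(\bbeta,\hbeta)$ through the definition of the Bregman divergence and substituting this inequality yields, unconditionally,
\[
D_L(\hbeta,\obeta)-D_L(\bbeta,\obeta)\ \le\ -\,D_L(\bbeta,\hbeta)+\innerprod{\hat u+\nabla L(\obeta)}{\bbeta-\hbeta}.
\]
If $\innerprod{\hat u+\nabla L(\obeta)}{\bbeta-\hbeta}\le 0$, the claim follows from $D_L(\bbeta,\hbeta)\ge 0$; otherwise $\sup_{u\in\partial R(\hbeta)}\innerprod{u+\nabla L(\obeta)}{\bbeta-\hbeta}>0$, so $\hbeta\in\cC_{\bbeta,\obeta}$ and hence $r\le \sup_{\beta\in\cC_{\bbeta,\obeta}}\|\beta-\bbeta\|$, the quantity bounded in (\ref{thm:GLM-1}).

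Next I would bound the two terms on the right. By the definition of $\lam(\bbeta,\obeta;\|\cdot\|)$, applied at the point $\hbeta$ with the subgradient $\hat u\in\partial R(\hbeta)$, one has $\innerprod{\hat u+\nabla L(\obeta)}{\bbeta-\hbeta}\le \lam_* r$. For the convexity term, $\hbeta$ lies on the ray from $\bbeta$ through $\hbeta$ and $\hbeta\in\cC_{\bbeta,\obeta}$, so Lemma~\ref{lm:GLM} (taken with its parameter equal to $r$) gives $D_L(\bbeta,\hbeta)\ge r^2\,\gamma_2(\bbeta;\kappa r,\cC_{\bbeta,\obeta},\|\cdot\|)$. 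Since $\gamma_2(\bbeta;\cdot)$ is nonincreasing in its scale argument while $r^2\gamma_2(\bbeta;\kappa r,\cdot)$ is nondecreasing in $r$, this is at least $\gamma_* r^2$ when $\kappa r\le 1$ and at least $(1/\kappa)^2\gamma_2(\bbeta;1,\cdot)=\gamma_*/\kappa^2$ when $\kappa r>1$, hence at least $\gamma_*\min(r,1/\kappa)^2$ in all cases. Plugging in, $D_L(\hbeta,\obeta)-D_L(\bbeta,\obeta)\le \lam_* r-\gamma_*\min(r,1/\kappa)^2$.

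It then remains to check that the right-hand side is at most $\lam_*^2/(4\gamma_*)$. When $\kappa r\le 1$ this is the elementary bound $\lam_* r-\gamma_* r^2\le \lam_*^2/(4\gamma_*)$. When $\kappa r>1$ the bound reads $\lam_* r-\gamma_*/\kappa^2$, and here (\ref{thm:GLM-1}) is exactly what is needed: it gives $r\le \gamma_*/(\kappa^2\lam_*)+\lam_*/(4\gamma_*)$, hence $\lam_* r\le \gamma_*/\kappa^2+\lam_*^2/(4\gamma_*)$, and subtracting $\gamma_*/\kappa^2$ leaves $\lam_*^2/(4\gamma_*)$. The main obstacle, as I see it, is not the Bregman algebra (which is routine) but recognizing the precise role of the two summands in the hypothesis (\ref{thm:GLM-1}) --- one absorbs the saturated strong-convexity value $\gamma_*/\kappa^2$ in the large-displacement regime, the other is the target-error term $\lam_*^2/(4\gamma_*)$ --- together with the small amount of care needed to verify that Lemma~\ref{lm:GLM} indeed applies (the ray-intersection condition and $\hbeta\in\cC_{\bbeta,\obeta}$).
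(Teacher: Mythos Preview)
Your proof is correct and follows essentially the same route as the paper's: both derive the identity $D_L(\hbeta,\obeta)-D_L(\bbeta,\obeta)=-D_L(\bbeta,\hbeta)+\innerprod{\nabla L(\hbeta)-\nabla L(\obeta)}{\hbeta-\bbeta}$, bound the inner-product term by $\lam_* r$ via the definition of $\lam(\bbeta,\obeta;\|\cdot\|)$ after checking $\hbeta\in\cC_{\bbeta,\obeta}$, invoke Lemma~\ref{lm:GLM} for the strong-convexity term, and split into the cases $\kappa r\le 1$ and $\kappa r>1$. The only cosmetic difference is that in the large-displacement case the paper packages the argument via the convexity of $t\mapsto D_L(\bbeta+t(\hbeta-\bbeta),\obeta)$ (showing $f(1)-f'(1)\le f(t)-tf'(t)$ at $\kappa\|\tbeta-\bbeta\|=1$), whereas you use the equivalent monotonicity of $r\mapsto r^2\gamma_2(\bbeta;\kappa r,\cdot)$ that the paper had already noted after Lemma~\ref{lm:GLM}.
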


\begin{proof}. 
Let $\tbeta = \bbeta+t(\hbeta-\bbeta)$. Define 
\bes
f(t) = D_L(\tbeta,\obeta) - D_L(\bbeta,\obeta) 
= L(\tbeta)-L(\bbeta) + t\innerprod{- \nabla L(\obeta)}{\hbeta-\bbeta}. 
\ees
The function $f(t)$ is convex with $f(0)=0$ and 
$f'(t) = \innerprod{\nabla L(\tbeta) - \nabla L(\obeta)}{\hbeta-\bbeta}$. 
If $f'(1)\le 0$, then $D_L(\hbeta,\obeta) - D_L(\bbeta,\obeta) = f(1)\le f(0)=0$ 
and the conclusion holds.  Assume $f'(1)>0$ in the sequel. 

Let $u=-\nabla L(\hbeta)$. By (\ref{eq:hbeta}), $u\in\pa R(\hbeta)$. 
Since $f'(1)=\innerprod{u+ \nabla L(\obeta)}{\bbeta-\hbeta}>0$, $\hbeta\in \cC_{\bbeta,\obeta}$. 
It follows that $f'(1)\le \lam(\bbeta,\obeta;\|\cdot\|)\|\hbeta-\bbeta\|$. By Lemma \ref{lm:GLM}
\bes
f(t)-f'(t)t= - D_L(\bbeta,\tbeta) 
\le - \|\tbeta-\bbeta\|^2\gamma_2(\bbeta;\kappa\|\tbeta-\bbeta\|,\cC_{\bbeta,\obeta}\|\cdot\|).
\ees 
Consider two cases. If $\kappa \|\hbeta-\bbeta\|\le 1$, we set $t=1$ to obtain 
\bes
f(1) &\le& f'(1) - \|\hbeta-\bbeta\|^2\gamma_2(\bbeta;1,\cC_{\bbeta,\obeta}\|\cdot\|)
\cr &\le& \lam(\bbeta,\obeta;\|\cdot\|)\|\hbeta-\bbeta\| - 
\|\hbeta-\bbeta\|^2\gamma_2(\bbeta;1,\cC_{\bbeta,\obeta}\|\cdot\|). 
\ees
Taking the maximum of $x\lam(\bbeta,\obeta;\|\cdot\|) - 
x^2\gamma_2(\bbeta;\kappa\|\hbeta-\bbeta\|,\cC_{\bbeta,\obeta}\|\cdot\|)$, we find 
\bes
D_L(\hbeta,\obeta) - D_L(\bbeta,\obeta) = f(1) 
\le \frac{\lam^2(\bbeta,\obeta;\|\cdot\|)}
{4\gamma_2(\bbeta;1,\cC_{\bbeta,\obeta}\|\cdot\|)}. 
\ees
For $\kappa \|\hbeta-\bbeta\| > 1$, we set $t<1$ so that $\kappa \|\tbeta-\bbeta\| = 1$
\bes
f(1) \le f'(1) + f(t)-tf'(t) \le \lam(\bbeta,\obeta;\|\cdot\|)\|\tbeta-\bbeta\| 
- {\kappa}^{-2}\gamma_2(\bbeta;1,\cC_{\bbeta,\obeta}\|\cdot\|).
\ees
This gives $f(1)\le  \lam^2(\bbeta,\obeta;\|\cdot\|)/
\{4\gamma_2(\bbeta;1,\cC_{\bbeta,\obeta}\|\cdot\|)\}$ when 
\bes
\|\tbeta-\bbeta\| \le \frac{\gamma_2(\bbeta;1,\cC_{\bbeta,\obeta}\|\cdot\|)}
{{\kappa}^2\lam(\bbeta,\obeta;\|\cdot\|)}
+ \frac{\lam(\bbeta,\obeta;\|\cdot\|)}
{4\gamma_2(\bbeta;1,\cC_{\bbeta,\obeta}\|\cdot\|)}. 
\ees
The proof is complete in view of the assumed condition on $\bbeta$. 
\end{proof}

Condition (\ref{thm:GLM-1}) holds if $\sup_{\beta\in\Omega}\|\beta\|\le A$ and 
$2A \le \gamma_2(\bbeta;1,\cC_{\bbeta,\obeta}\|\cdot\|)/
\{{\kappa}^2\lam(\bbeta,\obeta;\|\cdot\|)\}$. This is a weaker condition that 
the condition discussed after Corollary~\ref{cor:dual_certificate-oracle} because the quantity
$\lam(\bbeta,\obeta;\|\cdot\|) \leq \inf_{\bar{u} \in \pa R(\bbeta)} \|\nabla L(\obeta)+\bar{u}\|_D $ is generally very small, which
means that we allow a very large $A$. Under this relatively weak condition, Theorem~\ref{thm:GLM} gives an oracle
inequality for generalized linear models that can be easily applied to common formulations such as logistic regression
and Poisson regression.

\bibliographystyle{abbrv}
\bibliography{dual_certificate}

\end{document}